\newtheorem{theorem}{Theorem}
\newtheorem{lemma}{Lemma}
\newtheorem*{theorem*}{Theorem}
\newtheorem*{lemma*}{Lemma}
\newtheorem{fact}{Fact}
\newcommand{\fakelabel}[2]{%
    \refstepcounter{#1}%
    \label{#2}%
    \addtocounter{#1}{-1}}
\DeclareMathOperator*{\E}{\mathbb{E}}
\DeclareMathOperator{\Ber}{Ber}
\DeclareMathOperator{\SG}{SG}
\renewcommand{\det}{\operatorname{\mathbf{det}}}
\title{Efficient kernelized bandit algorithms\\via exploration distributions}
\author{%
  Bingshan Hu\\
  University of British Columbia\\
  \texttt{bingsha1@cs.ubc.ca}
  \And
  Zheng He\\
  University of British Columbia\\
  \texttt{zhhe@cs.ubc.ca}
  \And
  Danica J.~Sutherland\\
  UBC \& Amii\\
  \texttt{dsuth@cs.ubc.ca}
}
\begin{document}

\maketitle

\begin{abstract}
 We consider a kernelized bandit problem with a compact arm set $\mathcal{X} \subset \mathbb{R}^d $ and a fixed but unknown reward function $f^*$ with a finite norm in some Reproducing Kernel Hilbert Space (RKHS). We propose a class of computationally efficient kernelized bandit algorithms, which we call GP-Generic, based on a novel concept: exploration distributions. 
 This class of algorithms includes Upper Confidence Bound-based approaches as a special case, but also allows for a variety of randomized algorithms.
With careful choice of exploration distribution, our proposed  generic algorithm  realizes a wide range of concrete algorithms that achieve $\tilde{O}(\gamma_T\sqrt{T})$ regret bounds, where $\gamma_T$ characterizes the RKHS complexity. This matches known results for UCB- and Thompson Sampling-based algorithms;
we also show that in practice, randomization can yield better practical results.

%
\end{abstract}




\section{Introduction} \label{sec: intro}

This work studies kernelized bandits \citep{srinivas2010gaussian,chowdhury2017kernelized,whitehouse2023sublinear}. In this learning problem we have a compact arm set $\mathcal{X} \subset \mathbb{R}^d$ and  a fixed but unknown reward function $f^*: \mathcal{X} \rightarrow \mathbb{R}$  living in some Reproducing Kernel Hilbert Space (RKHS) with a finite norm.
In each round $t$, a
learning agent pulls an arm $X_t \in \mathcal{X}$ and obtains a random reward $Y_t = f^*(X_t) + \varepsilon_t$, where $\varepsilon_t$ is a zero-mean sub-Gaussian random variable.  The goal of the learning agent is to
pull arms sequentially to accumulate as much reward as
possible over $T$ rounds, for some finite $T$.
Since the learning agent may not always pull the optimal arm,
we measure the agent's performance based on
regret:
the performance gap between the highest achievable reward  and the actually earned reward.
Bandit problems have found a wide variety of applications across many domains \citep{bandit-apps};
the flexibility of the kernelized setting
allows modeling complex correlation structures among potentially infinitely many arms,
which additionally aids application in areas such as Bayesian optimization \citep{garnett_bayesoptbook_2023}.
An alternative, but closely related, viewpoint on this problem is via the framework of
Gaussian processes (GPs) \citep{seeger2004gaussian}.

Intuitively, to achieve low regret, a bandit algorithm should use information about the past to make future decisions.
Specifically, 
the agent must dynamically negotiate between two competing goals: exploitation, which prioritizes pulling arms to optimize immediate performance, and exploration, which emphasizes gathering information to systematically reduce  uncertainty and expand the probabilistic understanding of the unknown reward function.

Both theoretically and empirically, the deterministic Upper Confidence Bound (UCB) algorithms \citep{auer2002finite,chowdhury2017kernelized,whitehouse2023sublinear,srinivas2010gaussian} and randomized Thompson Sampling (TS)-inspired algorithms \citep{agrawalnear,chowdhury2017kernelized,agrawal2013thompson} 
have been shown to be efficient at trading off exploitation and exploration.
We can think of both categories of algorithms as drawing a choice of arm from a data-dependent distribution:
in UCB, that distribution is a degenerate point mass
at the arm with highest upper confidence bound,
while in TS
it is the data-dependent
distribution
of the algorithm's current belief that the given arm is optimal,
which typically has some positive probability to select any candidate arm.
These choices of distributions,
when set up appropriately,
do indeed trade off between exploration and exploitation.
 
In kernelized bandits, drawing arms to pull from data-dependent distributions  generates  a sequence of dependent random variables $Y_1, Y_2, \dotsc, Y_T$, which prevents the direct application of traditional sub-Gaussian concentration bounds 
\citep{chu2011contextual,valko2013finite,abbasi2011improved}.
To tackle this challenge, previous analyses \citep{abbasi2011improved,chowdhury2017kernelized,whitehouse2023sublinear,agrawal2013thompson} 
identify martingale structures embedded in the gathered data trajectory  $(X_1, Y_1), (X_1, Y_2), \dotsc, (X_T, Y_T)$, and then apply concentration bounds in a \emph{high-dimensional vector-valued space}.
 The state-of-the-art algorithms of this type achieve  $\tilde{O} (\gamma_T \sqrt{T})$ regret bounds, where $\gamma_T$ characterizes the complexity of  the RKHS. 
These bounds are sub-optimal;
the regret lower bound of
$\Omega \left(\sqrt{T \gamma_T } \right)$
\citep{cai2025lower,scarlett2017lower}
is known to be tight
\citep{valko2013finite,salgia2024random,li2022gaussian}.
{This sub-optimality comes from
using concentration bounds over  dependent random variables $Y_1, Y_2, \dotsc, Y_T$ in a high-dimensional vector space.}

In addition to the UCB and TS, Arm Elimination-based  algorithms \citep{auer2010ucb,valko2013finite,salgia2024random,auer2002using,li2022gaussian} are also able to trade off exploitation and exploration. 
This category of algorithms conducts learning based on epochs, and identifies a non-increasing sequence of arm subsets $\mathcal{X}_r \subseteq \mathcal{X}_{r-1}\subseteq \dotsc \subseteq \mathcal{X}_1 = \mathcal{X}$ that balances exploitation and exploration  at the end of each epoch $r$. Given $\mathcal{X}_r$, in each round belonging to epoch $r$, arm elimination algorithms draw an arm to pull   according to 
 \emph{data-independent distributions}. For example,  Random Exploration with Domain Shrinking (REDS) \citep{salgia2024random} uses a uniform distribution, and SupKernelUCB \citep{valko2013finite} and Batched Pure Exploration (BPE) \citep{li2022gaussian}  use  deterministic distributions independent of the data drawn in that epoch.
 This makes the generated reward sequence within that epoch 
 \emph{independent} random variables, enabling the application of traditional concentration bounds in a \emph{one-dimensional real space}, rather than a high-dimensional vector space. SupKernelUCB, REDS, and BPE each achieve the optimal $\tilde{O}\left(\sqrt{T\gamma_T } \right)$ regret bounds. 
 Even so, they often have poor empirical performance
 and can require heavy computation to discretize $\mathcal X$ into $\mathcal X_r$,
 which often prevents their practical deployment compared to UCB-based and TS-based algorithms.

This work focuses on algorithms which use data-dependent distributions to draw an arm to pull in each round,
rather than epoch-based arm elimination.
Since UCB-based algorithms and TS-based algorithms can both be viewed as concrete algorithms in this framework, it is natural to ask whether there exist intrinsic intersections behind these two families of algorithms, what the common features shared by these two families of algorithms are, and whether the theoretical analysis can be unified.

We step towards answering these questions by introducing a novel concept:
exploration distributions.
These are distributions on the reals,
for instance standard normal or a point mass at $1$,
which control the exploration-exploitation tradeoff in each round. 
The chosen exploration distribution facilitates the learning agent's to explore the unknown environment to reduce uncertainty in that round; each sequence of exploration distributions over $T$ rounds realizes a concrete algorithm. Based on the concept of exploration distribution, we propose a generic algorithm, Generic-GP, which is able to   realize a wide range of algorithms depending on the chosen sequence of exploration distributions. 
Choosing a degenerate constant distribution yields a UCB variant,
while other distributions insert a controlled amount of randomness into the process,
resembling TS-based schemes.

\textbf{Preview of exploration distributions.}
We say a sequence of distributions $P_{w,1}, P_{w,2}, \dotsc, P_{w,t}$ over $\mathbb R$ can trade off exploitation-exploration  efficiently, i.e., yields an algorithm with a sub-linear regret bound, if  this sequence of distributions satisfies the following conditions simultaneously: 
 \begin{equation}
        \begin{array}{c}
            \mathbb{P}_{w_s \sim P_{w,s}} \left\{w_s \ge 1 \right\} = C_{1,s} \ne 0,\quad \forall s = 1,2, \dotsc, t, \\
         \E \left[\mathop{\max}_{s \in [t]}|w_{s}| \right] = C_{2,t},
         \qquad
         \left(\E \left[\mathop{\max}_{s \in [t]     } \frac{|w_s|}{C_{1,s}} \right] + \mathop{\max}\limits_{s \in [t]     } \E \left[\frac{|w_s|}{C_{1,s}} \right]\right) = C_{3,t},
        \end{array}
    \end{equation}
 where $w_1, w_2, \dotsc, w_t$ are independent random variables. It is important to note that $P_{w,t}$ can be chosen depending on the past information,
 and can itself be any distribution satisfying the above conditions.
 Each sequence of exploration distributions realize a bandit algorithm. 



\textbf{Preview of Theorem~\ref{theorem: expected regret}.}   The  regret  of the algorithm using $P_{w,1}, \dotsc, P_{w,T}$ for exploration is   
    \begin{equation}
        \begin{array}{l}
       O \left( (C_{2,T} + C_{3,T}) \gamma_T \sqrt{T} 
             + \sqrt{T \gamma_T \ln(T)}   \right)\quad.      
        \end{array}
    \end{equation}
    If $C_{2,T}, C_{3,T}$ are $\mathcal O(1)$, as is the case for some exploration distributions, our proposed generic algorithm achieves the state-of-the-art $O(\gamma_T \sqrt{T})$   bounds 
    shared by UCB and TS-based algorithms.
    

\textbf{Preview of concrete algorithms.} We present some concrete realized algorithms by using simple exploration distributions. 
 (1)~\textbf{Simple-UCB} uses a degenerate exploration distribution which always evaluates to the number $1$, i.e.\ a Bernoulli distribution with success probability $1$, in all rounds. It achieves $ \tilde{O} (\gamma_T \sqrt{T} 
             )$ regret and can be viewed a deterministic UCB-based algorithm.
(2)~\textbf{Simple-Bernoulli}  uses Bernoulli distributions with success probabilities $p_1, p_2, \dotsc, p_T \in (0,1]$ as exploration distributions;
that is, with probability $p_t$ it runs a step of \textbf{Simple-UCB},
while with probability $1-p_t$ it does pure exploitation.
This algorithm achieves $\tilde{O} ( \gamma_T\sqrt{T}/\mathop{\min}_{t \in [T]} p_t )$ regret.
(3)~\textbf{Simple-Gaussian} uses the standard Gaussian distribution, $ \mathcal{N}(0,1)$,  in all rounds,  yielding $\tilde{O} (\gamma_T\sqrt{T}      )$ regret.

 



\section{Preliminaries}


\paragraph{Bandit learning.} \label{sec: bandit background}
We consider a kernelized bandit problem with a compact arm set $\mathcal{X} \subset \mathbb{R}^d $ and a fixed but unknown reward function $f^* : \mathcal{X} \rightarrow \mathbb{R}$
over the arms. The arm set may be infinite.
The learning protocol is that in each round $t$, Learner chooses an arm $X_t \in \mathcal
{X}$ and observes a noisy reward $Y_t := f^*(X_t) +\varepsilon_t$, where $\varepsilon_t$ is the feedback noise in round $t$. 
The noise random variable $\varepsilon_t$ is a conditionally zero-mean $R$-sub-Gaussian random variable, i.e., we have
\begin{equation}\label{noise conditional}
    \forall \lambda \in \mathbb{R}, \quad \E \left[e^{\lambda \varepsilon_t} \mid X_t, \mathcal{F}_{t-1} \right] \le e^{\frac{1}{2} \lambda^2 R^2} ,
\end{equation}
where $\mathcal{F}_t := \left\{X_1, Y_1, X_2, Y_2, \dotsc, X_t, Y_t \right\} $ collects all the history information by the end of round $t$: the pulled arms and the observed rewards. We set $\mathcal{F}_0 = \{ \}$.


Let $x_* = \mathop{\arg\max}_{x \in \mathcal{X}} f^*(x)$ denote the unique optimal arm. The goal of Learner is to choose arms sequentially to minimize the (pseudo-)regret, defined as
\begin{equation}\label{def: regret}
         \mathcal{R}(T)  = \sum\limits_{t=1}^{T}f^*(x_*) - \E \left[f^*(X_t) \right] ,
\end{equation}
where the expectation is taken over the arm sequence $X_1, \dotsc, X_t, \dotsc X_T$. Regret measures the expected cumulative performance gap between always pulling the optimal arm and the actual sequence of arms pulled by Learner.


\paragraph{RKHS.}
Learner would like to learn $f^*$ by choosing a finite sequence of  points $X_1, X_2, \dotsc, X_T \in \mathcal{X}$. However, if Learner is only allowed to query a finite number of  points,  learning a totally arbitrary $f^*$ is hopeless. To make the learning problem interesting, we need to set some constraints on $f^*$: that it satisfies some kind of smoothness property.

In kernelized bandits, we assume the unknown reward function $f^*$ lives in a reproducing kernel Hilbert space (an RKHS)
with a bounded kernel $\sup_{x \in \mathcal X} K(x,x) \le 1$,
and moreover that this function has RKHS norm bounded by $\left\lVert  f \right\rVert \le  D$.
Specifically,
let $K: \mathcal{X} \times \mathcal{X} \rightarrow \mathbb{R}$ be a positive definite kernel. 
A Hilbert space $H_K$ of functions $\mathcal{X} \to \mathbb R$ associated with an inner product $\left\langle \cdot, \cdot \right\rangle$ is called an RKHS with reproducing kernel $K$ if $K(\cdot, x) \in H_K$  and $f(x) =\left\langle f, K(\cdot, x) \right\rangle$ for all $x \in \mathcal{X}$.
The inner product induces the RKHS norm, denoted $\left\lVert f \right\rVert = \sqrt{ \left\langle f, f \right\rangle}$,
which gives a notion of smoothness of $f$ with respect to $K$.


Let $V_t:= \sum_{s = 1}^{t} K(\cdot, X_{s}) K(\cdot, X_{s})^{\top}   $  and 
$\overline{V}_t :=  I_{H} + V_{t} $, where $I_H$ is the identity operator on $H_k$,
and $f g^\top$ denotes the outer product
$f g^\top : H_K \to H_K$ given by $(f g^\top) h = f \langle g, h \rangle$.
Both $V_t $ and $\overline{V}_t$ are determined by $\mathcal{F}_t$,
since they depend only on the sequence
$K(\cdot, X_{1}),K(\cdot, X_{2}), \dotsc, K(\cdot, X_{t})$.


\paragraph{RKHS complexity.} The maximum information gain $\gamma_t(\rho)$ characterizes the effective dimension of the kernel, where $\rho >0$ is  the regularization parameter. From \citet{whitehouse2023sublinear}, we have
\begin{equation}
     \gamma_t(\rho)  := \sup_{x_1, x_2, \dotsc, x_t \in \mathcal{X}} \frac{1}{2} \log \left(\det \left(I_H + \rho^{-1} V_t \right) \right) ,
\label{def: RKHS complexity}
    \end{equation}
    where $\det(A)$ is the determinant of a matrix $A$. For brevity, we use  $\gamma_t$ to denote the maximum information gain using regularization parameter $\rho =1$.
    
For exmple, the commonly used Matérn kernel has $\gamma_T = O (T^{\frac{d}{2\nu+d}}\log^{\frac{2\nu}{2\nu+d}}(T) )$ \citep{vakili2021information}, where $\nu >0$ is the smoothness parameter.    The squared exponential (SE) kernel  has $\gamma_T = O (\log^{d+1}(T) )$ \citep{vakili2021information}. The linear kernel has $\gamma_T = O ( d \log(1+ T/d))$ \citep{abbasi2011improved}.
Our proposed generic  algorithm can also be applied to other kernels with  polynomially- or exponentially-decaying eigenvalues \citep{vakili2021information}.



\section{Related Work} \label{sec: literature}
There is a large literature  on kernelized  bandits \citep[e.g.][]{valko2013finite,chowdhury2017kernelized,janz2020bandit,whitehouse2023sublinear,salgia2024random,scarlett2017lower,cai2025lower,shekhar2018gaussian,shekhar2022instance,abbasi2011improved,srinivas2010gaussian,lattimore2023lower,vakili2021information,li2022gaussian,vakili2022open,vakili2021optimal,vakili2021open,vakili2024open,ray2019bayesian,lee2022multi,durand2018streaming,flynn2024tighter,kim2024enhancing}. 
Algorithms enjoying sub-linear regret bounds, i.e., satisfying $\mathcal{R}(T)/T \to 0$ as $T \rightarrow + \infty$, typically need to use history information to make decisions: past evidence is able to  help Learner to make balanced  decisions between exploitation and exploration.
As opposed to $N$-armed bandits where we can bound the expected number of pulls of each  sub-optimal arm separately\footnote{In an $N$-armed bandit problem, all rewards associated with any fixed arm are i.i.d.\ according to a fixed (unknown) reward distribution. We can thus use standard i.i.d.\ sub-Gaussian bounds.}, kernelized bandits have their own challenges in terms of using the gathered history information. 
They must use the history information somehow to achieve sublinear regret.
But as previously mentioned, algorithms using history information may generate   a sequence  of dependent random rewards $Y_1, Y_2,\dotsc, Y_T$, 
precluding the direct use of traditional sub-Gaussian concentration bounds  \citep{auer2002using,abbasi2011improved,chu2011contextual,valko2013finite}.

 We categorize kernelized bandit algorithms based on whether the generated reward sequence $Y_1, Y_2, \dotsc, Y_T$ is dependent or not, and how to deal with the reward sequence $Y_1, Y_2, \dotsc, Y_T$. As discussed in Section~\ref{sec: intro}, the reward sequence generated by  UCB and TS-based algorithms are dependent, whereas the computationally expensive arm elimination algorithms can generate independent reward sequences in each epoch. \citet{scarlett2017lower,cai2025lower} establish $\Omega (\sqrt{T\gamma_T } )$ regret lower bounds for some kernel, e.g., the Matérn kernel.

The key idea behind UCB-based kernelized bandit algorithms \citep[e.g.][]{chowdhury2017kernelized,janz2020bandit,whitehouse2023sublinear,shekhar2018gaussian,shekhar2022instance,abbasi2011improved,srinivas2010gaussian,vakili2021information} is to construct upper confidence bounds, by 
adding \emph{deterministic exploration bonuses} to the empirical estimates of the underlying unknown reward function. In each round, UCB pulls the arm with the highest upper confidence bound. An alternative view of UCB is to draw an arm to pull from a deterministic data-dependent  distribution, i.e., a distribution that puts all the mass on the UCB maximizer. 
  The best UCB-based algorithms  achieve $\tilde{O}(\gamma_T\sqrt{T})$ regret bounds, which matches the $\tilde{\Omega}(\sqrt{T \gamma_T})$ lower bound \citep{scarlett2017lower}
up to a $\sqrt{\gamma_T}$ factor. 
The usage of concentration bounds in the vector space (e.g., Theorem~1 of \citet{abbasi2011improved}, Theorem~1 of \citet{whitehouse2023sublinear}, or Theorem~1 of \citet{chowdhury2017kernelized}) results in the extra $\sqrt{\gamma_T}$ factors.

The key idea behind TS-based algorithms \citep{chowdhury2017kernelized,agrawal2013thompson} is to use a sequence of data-dependent distributions to model the unknown reward function. In each round, TS  draws a random function from the data-dependent distribution, which can be expressed as adding \emph{random exploration bonuses} (which may be positive or negative) to the empirical estimates of the underlying unknown reward function. TS
pulls the arm that is best under the sampled functional model.
An alternative view of TS is to draw an arm to pull from the data-dependent distribution of being optimal. \citet{chowdhury2017kernelized} presents GP-TS, a TS-based algorithm that uses a sequence of data-dependent multivariate Gaussian distributions to model the unknown reward function.   GP-TS achieves $\tilde{O}(\gamma_T\sqrt{dT} )$ regret bounds, which is an extra $\sqrt{d}$ factor worse than the UCB-based algorithms. The extra $\sqrt{d}$ factor is from the usage of multivariate Gaussian distributions.

Arm elimination-based algorithms, SupKernelUCB of \citet{valko2013finite}, BEF of \citet{li2022gaussian} and REDS of \citet{salgia2024random}, all achieve the optimal $\tilde{O}(\sqrt{T \gamma_T } )$ regret bounds. The key idea behind these optimal algorithms is to conduct  learning based on epochs and at the end of each epoch $r$, only arms $\mathcal{X}_r \subseteq \mathcal{X}_{r-1}$ that empirically perform well in epoch $r$ will be used in the next epoch $r+1$. That is, we maintain a sequence of non-increasing arm subsets $\mathcal{X}_{r+1} \subseteq \mathcal{X}_r \subseteq \mathcal{X}_{r-1} \subseteq \dotsc \subseteq \mathcal{X}_1 = \mathcal{X}$.
Arms whose upper confidence bound
are worse than the lower confidence bound of the empirically best-performing arm in epoch $r$
are eliminated from $\mathcal X_{r+1}$.
In the rounds within epoch $r+1$,  each arm in $\mathcal{X}_{r+1}$ is pulled in a round robin fashion \citep{valko2013finite}, or sampled  according to a data-independent distribution, e.g., a uniform distribution \citep{salgia2024random}. The main downside of these elimination style algorithms is the fine discretization needed for $\mathcal{X}_t$, which is  computationally heavy. 

\section{Generic-GP Algorithm} 

In Section~\ref{sec: exploration distributions}, we present the conditions needed to be used as exploration distributions. In Section~\ref{sec: Generic algorithm},  we present our Generic-GP algorithm. In Section~\ref{sec: analysis}, we present theoretical analysis for our proposed GP-Generic. In Section~\ref{sec: concrete bounds conditional}, 
we present several concrete algorithms realized by Generic-GP. 

\textbf{Notation.} Let $\Ber(p)$ denote  a Bernoulli distribution  with success probability $p$, $\mathcal{N}(\mu, \sigma^2)$ denote a Gaussian distribution with mean $\mu$ and variance $\sigma^2$, and $\SG(\sigma^2)$ some zero-mean sub-Gaussian distribution with variance proxy $\sigma^2$.

\subsection{Exploration distributions} \label{sec: exploration distributions}

We say a sequence of data-dependent distributions $P_{w,1}, P_{w,2}, \dotsc, P_{w,t}$ can handle the exploitation-exploration trade-off efficiently, i.e., realize an algorithm enjoying  a sub-linear regret bound, if  this sequence of distributions satisfies all of the following conditions: 
\begin{gather}
            \mathbb{P}_{w_s \sim P_{w,s}} \left\{w_s \ge 1 \right\} = C_{1,s} \ne 0,\quad \forall s = 1,2, \dotsc, t
        \label{condition: exploration}
        \\
               \E \left[\mathop{\max}_{s \in [t]}|w_{s}| \right] = C_{2,t}
               \qquad
               \left(\E \left[\mathop{\max}_{s \in [t]     } \frac{|w_s|}{C_{1,s}} \right] + \mathop{\max}\limits_{s \in [t]     } \E \left[\frac{|w_s|}{C_{1,s}} \right]\right) = C_{3,t}\quad,
        \label{condition: exploitation}
\end{gather}
 where $w_1, w_2, \dotsc, w_t$ are independent random variables. 
 
 \paragraph{Remarks.}
 The choice of $P_{w,t}$  can depend on past information $\mathcal{F}_{t-1}$, but given this selection of distributions, all $w_1, w_2, \dotsc, w_t$ need to be independent. The condition in \eqref{condition: exploration} is to achieve optimism (exploring the unknown environment). Parameter $C_{2,t}$ is related to the regret caused by over-exploration of the sub-optimal arms, whereas parameter $C_{3,t}$ is related to the regret caused by under-exploration of the optimal arms. 
  Both $C_{2,t}, C_{3,t}$ can depend on $t$ and other problem-dependent parameters, such as the data dimension $d$ and the kernel parameters.
 Notice that we necessarily have
$C_{2,t} \le C_{2,t+1}$ and $C_{3,t} \le C_{3,t+1}$ for all $t \in [T]$.



\subsection{Generic-GP algorithm} \label{sec: Generic algorithm}
\begin{algorithm}[h]
\caption{Generic-GP Algorithm}\label{alg:GP-Gerneral}
\begin{algorithmic}[1]
\STATE Initialize: $V_0 = 0$
\FOR {round $t = 1,2, \dotsc$}
\STATE Choose exploration distribution $P_{w,t}$, which can depend on history information
\STATE Sample $w_t \sim P_{w,t}$; Compute $\tilde{f}_t$ based on (\ref{eq: randomized estimator})
\STATE Play $X_t = \mathop{\arg\max}_{x \in \mathcal{X}}\tilde{f}_{t}(x)$; Observe $Y_t = f^*(X_t) + \varepsilon_t$; Update $V_t$ and $\hat{f}_t$ based on (\ref{eq: empirical}).
\ENDFOR
\end{algorithmic}
\end{algorithm}

Recall that $V_t = \sum_{s = 1}^{t} K(\cdot, X_{s}) K(\cdot, X_{s})^{\top}   $  and 
$\overline{V}_t =  I_{H} + V_{t} $, where $I_H$ is the identity operator on $H_K$. Our proposed  Generic-GP algorithm is presented in Algorithm~\ref{alg:GP-Gerneral}. 
For exploitation, we construct an empirical function $  \hat{f}_{t-1} $ based on the past information, which can be expressed as
\begin{align}
  \hat{f}_{t-1}
  &= \overline{V}_{t-1}^{-1} \sum_{s = 1}^{t-1} Y_s K(\cdot, X_s)
\notag
\\&= \overline{V}_{t-1}^{-1} \sum_{s = 1}^{t-1}K(\cdot, X_s) \, (f^*(X_s) + \varepsilon_s)
\notag
\\&= \overline{V}_{t-1}^{-1} \sum_{s = 1}^{t-1}  K(\cdot, X_s)  K(\cdot, X_s)^{\top}f^* + \overline{V}_{t-1}^{-1} \sum_{s = 1}^{t-1} \varepsilon_s K(\cdot, X_s)
\quad.
    \label{eq: empirical}
\end{align}
To handle the exploitation-exploration trade-off, we add a random exploration function to  $\hat{f}_{t-1}$: 
\begin{equation}
\tilde{f}_t = \hat{f}_{t-1}  + w_{t} \, g_{t-1} \quad,
\label{eq: randomized estimator}
\end{equation}
where $w_t \sim P_{w,t}$ is an independent (scalar) random variable distributed according to the chosen exploration distribution in round $t$, and the function $g_{t-1}(x)$ 
summarizes the uncertainty associated with $\hat f_{t-1}$ as
\begin{equation}
\label{eq: g3}
           \quad g_{t-1}(x) = \left(\sqrt{2R^2 \log (2 \sqrt{ \det (\overline{V}_{t-1})}  )} + D \right)\cdot
           \left\lVert \overline{V}_{t-1}^{-1/2} K(\cdot, x) \right\rVert
       \quad.
\end{equation}
\textbf{Remarks.} Both functions $  \hat{f}_{t-1}$ and $g_{t-1}$ are determined by $\mathcal{F}_{t-1}$.  Using $\Ber(0)$  as exploration distribution gives $w_t \equiv 0$, hence $\tilde{f}_t = \hat{f}_{t-1}$ (pure exploitation, no exploration).  Using $\Ber(1)$ as exploration distribution gives $w_t \equiv 1$, hence  $\tilde{f}_t = \hat{f}_{t-1} + g_{t-1} $ (deterministic exploration).  
 Later sections  will show that using Bernoulli distributions with different parameters  enables us to use mixtures of distributions as exploration distributions, and facilitates the design of flexible algorithms. 

Now, we provide intuition behind the design of the weighting $2R^2 \log (2 \sqrt{ \det (\overline{V}_{t-1})}  )$. As already discussed in Sections~\ref{sec: intro} and \ref{sec: literature}, all $Y_1,  Y_2, \dotsc, Y_{t-1}$ are dependent random variables. Therefore, we cannot apply sub-Gaussian concentration bounds on $\overline{V}_{t-1}^{-1}\sum_{s=1}^{t-1} Y_s K(\cdot, X_s)$ directly.
As in previous work \citep{chowdhury2017kernelized,whitehouse2023sublinear,abbasi2011improved,agrawal2013thompson}, we  apply self-normalized bounds for vector-valued martingales \citep{whitehouse2023sublinear,abbasi2011improved}, stated in Theorem~\ref{concentration} on $\overline{V}_{t-1}^{-1} \sum_{s = 1}^{t-1} \varepsilon_s K(\cdot, X_s)$, but with a constant failure probability $\delta = 0.5$. The reason why we allow a constant failure probability $\delta = 0.5$ instead of a rate depending on $t$, is that the random variable  $w_t$ can help explore. This is quite different from UCB analyses.
\begin{theorem}\label{concentration}(Self-normalized concentration bounds for vector-valued martingales \citealp[Theorem 1]{whitehouse2023sublinear}.)
Consider the problem setup of \eqref{noise conditional}.
For any $\delta >0$, with probability at least $1-\delta$, for any $t \ge 0$, we have $ \left\lVert  \overline{V}_t^{-1/2} \sum_{s = 1}^{t}  \varepsilon_sK(\cdot, X_s) \right\rVert^2 \le 2R^2 \log\left( \sqrt{\det\left( \overline{V}_t\right)} /\delta \right)$.
\end{theorem}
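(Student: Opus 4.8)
The plan is to prove this by the \emph{method of mixtures} (pseudo-maximization), the standard route to self-normalized martingale tail bounds \citep{abbasi2011improved,whitehouse2023sublinear}. Write $S_t := \sum_{s=1}^{t} \varepsilon_s K(\cdot, X_s) \in H_K$ for the vector-valued martingale sum, so the quantity to control is $\lVert \overline{V}_t^{-1/2} S_t \rVert^2 = \langle S_t, \overline{V}_t^{-1} S_t\rangle$. For a fixed direction $h \in H_K$ I would first build a scalar exponential supermartingale,
\[
M_t^h := \exp\!\left(\langle h, S_t\rangle - \tfrac{R^2}{2}\langle h, V_t h\rangle\right).
\]
The conditional sub-Gaussian assumption \eqref{noise conditional}, applied conditionally on $X_t$ with $\lambda = \langle h, K(\cdot, X_t)\rangle$, gives $\E[\exp(\varepsilon_t \langle h, K(\cdot, X_t)\rangle) \mid X_t, \mathcal{F}_{t-1}] \le \exp(\tfrac{R^2}{2}\langle h, K(\cdot,X_t)\rangle^2)$. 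Since the increment of $\langle h, V_t h\rangle$ is exactly $\langle h, K(\cdot,X_t) K(\cdot,X_t)^{\top} h\rangle = \langle h, K(\cdot,X_t)\rangle^2$, the tower property then yields $\E[M_t^h \mid \mathcal{F}_{t-1}] \le M_{t-1}^h$, so $(M_t^h)_{t\ge 0}$ is a nonnegative supermartingale with $M_0^h = 1$.

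Because the direction $h$ is unknown, the next step averages $M_t^h$ over a Gaussian mixing distribution matched to the regularizer $I_H$. Taking $h$ mean-zero Gaussian with covariance $R^{-2} I_H$ and defining $\bar{M}_t := \int M_t^h \, d\mu(h)$, the mixture stays a nonnegative supermartingale with $\bar{M}_0 = 1$ (pushing the conditional expectation inside the integral via Tonelli and the tower property). Completing the square in $h$ and cancelling the Gaussian normalizing constants evaluates the integral in closed form to
\[
\bar{M}_t = \det(\overline{V}_t)^{-1/2}\,\exp\!\left(\tfrac{1}{2R^2}\lVert \overline{V}_t^{-1/2} S_t\rVert^2\right),
\]
where $\overline{V}_t = I_H + V_t$ appears precisely because the prior covariance $R^{-2} I_H$ adds the identity to $V_t$ inside the quadratic form. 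Ville's maximal inequality for nonnegative supermartingales then gives $\mathbb{P}(\sup_{t\ge 0}\bar{M}_t \ge 1/\delta) \le \delta$; on the complementary event, taking logarithms of $\det(\overline{V}_t)^{-1/2}\exp(\tfrac{1}{2R^2}\lVert\overline{V}_t^{-1/2}S_t\rVert^2) \le 1/\delta$ and rearranging yields $\lVert \overline{V}_t^{-1/2} S_t\rVert^2 \le 2R^2 \log(\sqrt{\det(\overline{V}_t)}/\delta)$ simultaneously for all $t$, which is the claim.

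The main obstacle is that $H_K$ may be infinite-dimensional, where no Gaussian measure with covariance $R^{-2} I_H$ exists (the identity is not trace-class), so the mixing step is only formal as written. I would resolve this by exploiting that all the relevant objects are finite-dimensional at each fixed time: $S_t$ and $V_t$ live in $H_t := \mathrm{span}\{K(\cdot, X_1),\dots,K(\cdot,X_t)\}$, so the mixture argument can be run inside a finite-dimensional subspace, and $\det(\overline{V}_t)$ can be identified with the Fredholm determinant of $I_H + V_t$ (equivalently, the determinant of $I$ plus the kernel Gram matrix), which is well-defined because $V_t$ is finite-rank. Making this rigorous uniformly in $t$ is the delicate point — one either works in a fixed finite-dimensional approximation and passes to the limit, or invokes the infinite-dimensional pseudo-maximization machinery of \citet{whitehouse2023sublinear}; the remaining Gaussian-integral and logarithm manipulations are routine.
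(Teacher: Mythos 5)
The paper does not prove this statement at all: it is imported verbatim as Theorem~1 of \citet{whitehouse2023sublinear}, so there is no internal proof to compare against. Your sketch correctly reproduces the standard method-of-mixtures argument that underlies the cited result. The supermartingale construction is right (the increment of $\langle h, V_t h\rangle$ is indeed $\langle h, K(\cdot,X_t)\rangle^2$, and plugging $\lambda = \langle h, K(\cdot,X_t)\rangle$ into \eqref{noise conditional} gives $\E[M_t^h \mid \mathcal{F}_{t-1}] \le M_{t-1}^h$), the Gaussian-mixture integral with covariance $R^{-2}I_H$ does evaluate to $\det(\overline{V}_t)^{-1/2}\exp\bigl(\tfrac{1}{2R^2}\lVert \overline{V}_t^{-1/2}S_t\rVert^2\bigr)$ after the normalizers cancel, and Ville's inequality delivers the uniform-in-$t$ statement with a single failure event, matching the theorem as stated.

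The one genuinely unfinished step is the one you flag yourself: in infinite dimensions no Gaussian measure with covariance $R^{-2}I_H$ exists, and your proposed fix --- running the mixture inside $H_t = \mathrm{span}\{K(\cdot,X_1),\dots,K(\cdot,X_t)\}$ --- does not immediately work uniformly in $t$, because Ville's inequality needs a \emph{single} nonnegative supermartingale $(\bar M_t)_{t\ge 0}$, whereas mixing over a time-dependent subspace would give a different process for each $t$. This is precisely the technical content of \citet{whitehouse2023sublinear} (and of the earlier finite-dimensional treatment in \citet{abbasi2011improved}): one must either mix over a fixed finite-dimensional truncation and control the approximation error in the limit, or set up the pseudo-maximization directly in the RKHS using that $V_t$ is finite-rank so that $\det(\overline V_t)$ is a well-defined Fredholm determinant. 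Since you explicitly defer this step to the cited machinery rather than claiming to have closed it, your proposal is an accurate and honest sketch of how the black-boxed theorem is actually established, but it is a sketch, not a self-contained proof.
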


\subsection{Regret bounds for Generic-GP algorithm} \label{sec: analysis}

In this section, we present generic regret bounds for Generic-GP (Algorithm~\ref{alg:GP-Gerneral}). 
We will analyze particular instances later.


\fakelabel{theorem}{theorem: expected regret}
\begin{restatable}{theorem}{thmexpregret}
    The  regret  of Algorithm~\ref{alg:GP-Gerneral} is 
    \begin{equation}
        \begin{array}{l}
       O \left( (C_{2,T} + C_{3,T})\sqrt{T\gamma_T} \left( \sqrt{R^2 \gamma_T  }  + D  \right)
             +  (\sqrt{R^2 \gamma_T + R^2 \ln(TD)}  + D  )\sqrt{T \gamma_T}  \right)\quad.      
        \end{array}
    \end{equation}
\end{restatable}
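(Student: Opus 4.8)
\emph{Plan.}
The plan is to track the per-round regret $f^*(x_*)-f^*(X_t)$ through the estimator $\tilde f_t=\hat f_{t-1}+w_t g_{t-1}$ of \eqref{eq: randomized estimator} and to convert everything, in expectation, into a weighted sum of the played-arm uncertainties $\sigma_{t-1}(x):=\lVert \overline{V}_{t-1}^{-1/2}K(\cdot,x)\rVert$, so that $g_{t-1}(x)=\beta_{t-1}\sigma_{t-1}(x)$ with $\beta_{t-1}:=\sqrt{2R^2\log(2\sqrt{\det(\overline{V}_{t-1})})}+D$ as in \eqref{eq: g3}. First, from \eqref{eq: empirical} I write $\hat f_{t-1}-f^*=-\overline{V}_{t-1}^{-1}f^*+\overline{V}_{t-1}^{-1}\sum_{s<t}\varepsilon_s K(\cdot,X_s)$ and apply Cauchy--Schwarz in the RKHS to obtain the sure pointwise envelope $|\hat f_{t-1}(x)-f^*(x)|\le (D+M_{t-1})\,\sigma_{t-1}(x)$, where $M_{t-1}:=\lVert\overline{V}_{t-1}^{-1/2}\sum_{s<t}\varepsilon_s K(\cdot,X_s)\rVert$; here the bias term $D\sigma_{t-1}(x)$ uses $\lVert f^*\rVert\le D$ and $\lVert\overline{V}_{t-1}^{-1/2}\rVert\le 1$. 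Since $X_t$ maximizes $\tilde f_t$, I split $f^*(x_*)-f^*(X_t)\le[f^*(x_*)-\tilde f_t(x_*)]+[\tilde f_t(X_t)-f^*(X_t)]$ and bound the two brackets—the optimism deficit at $x_*$ and the over-estimation at $X_t$—separately.

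For the played-arm bracket I would invoke Theorem~\ref{concentration} with a small failure probability $\delta'\asymp 1/(TD)$ to get a global event $E'$ of probability $1-\delta'$ on which $M_{t-1}\le\sqrt{2R^2\log(\sqrt{\det(\overline{V}_{t-1})}/\delta')}$ for all $t$; its complement contributes only $O(1)$ to the regret, as each round's gap is at most $2D$. Writing $\beta'_{t-1}:=D+\sqrt{2R^2\log(\sqrt{\det(\overline{V}_{t-1})}/\delta')}$, on $E'$ we have $\tilde f_t(X_t)-f^*(X_t)\le(\beta'_{t-1}+|w_t|\beta_{t-1})\sigma_{t-1}(X_t)$. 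Summing and using the elliptical-potential/information-gain bound $\sum_{t\le T}\sigma_{t-1}^2(X_t)\le 2\gamma_T$ with Cauchy--Schwarz gives $\sum_t\sigma_{t-1}(X_t)=O(\sqrt{T\gamma_T})$, a bound that holds deterministically over all trajectories. The $\beta'$ piece then yields the second group $(\sqrt{R^2\gamma_T+R^2\ln(TD)}+D)\sqrt{T\gamma_T}$, while for the $|w_t|$ piece I pull the maximum out of the deterministically bounded sum, $\sum_t|w_t|\beta_{t-1}\sigma_{t-1}(X_t)\le(\max_{s\le T}|w_s|)\cdot O((\sqrt{R^2\gamma_T}+D)\sqrt{T\gamma_T})$, and take expectations to introduce $C_{2,T}=\E[\max_s|w_s|]$ from \eqref{condition: exploitation}.

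The optimism bracket $f^*(x_*)-\tilde f_t(x_*)$ is the heart of the argument and is where \eqref{condition: exploration} enters: when the concentration holds at the aggressive $\delta=0.5$ level baked into $g_{t-1}$, optimism $\tilde f_t(x_*)\ge f^*(x_*)$ holds as soon as $w_t\ge1$, an event of probability exactly $C_{1,t}$. To avoid the optimal arm's own uncertainty $g_{t-1}(x_*)$—which need not shrink, since $x_*$ may rarely be pulled—I would reroute through the data-dependent UCB arm $\hat X_t=\arg\max_x\{\hat f_{t-1}(x)+g_{t-1}(x)\}=X_t(1)$ and use a monotonicity lemma: $w\mapsto g_{t-1}(X_t(w))$ is non-decreasing, because for $w'>w$ the two optimality inequalities for $X_t(w),X_t(w')$ add to $(w'-w)\bigl(g_{t-1}(X_t(w'))-g_{t-1}(X_t(w))\bigr)\ge 0$. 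Hence $g_{t-1}(X_t)\ge g_{t-1}(\hat X_t)$ on $\{w_t\ge1\}$, yielding the change-of-measure bound $g_{t-1}(\hat X_t)\le\frac{1}{C_{1,t}}\E\!\left[g_{t-1}(X_t)\,\mathbb{1}\{w_t\ge1\}\mid\mathcal F_{t-1}\right]$. Feeding this into the optimism gap and splitting on $\{w_t\ge1\}$ versus $\{w_t<1\}$ produces the two summands of $C_{3,T}$ in \eqref{condition: exploitation}: the $1/C_{1,t}$ weight combined with $\mathbb{1}\{w_t\ge1\}$ (where $|w_t|\ge1$) gives $\E[\max_s|w_s|/C_{1,s}]$, and the residual $\E[|w_t|/C_{1,t}]$ factor gives $\max_s\E[|w_s|/C_{1,s}]$, each against the same deterministic $O((\sqrt{R^2\gamma_T}+D)\sqrt{T\gamma_T})$.

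The main obstacle, and the novel point, is reconciling the constant ($\delta=0.5$) failure probability used to size $g_{t-1}$ with a rigorous regret guarantee: a naive split on the $\delta=0.5$ concentration event would charge up to $2D\cdot\mathbb{P}(\text{failure})\cdot T$ and hence be linear. The resolution must exploit that the randomness in $w_t$ supplies exactly the exploration that deterministic UCB would otherwise buy with a time-uniform ($\delta\asymp 1/T$) bonus, so that the failure of the aggressive event is absorbed by the high-probability event $E'$ and the importance weighting rather than paid for directly; the delicate part is handling the regime where the sharp confidence fails but optimism is still required, without degrading the $C_{1,t}$ threshold from $1$. Once this interplay is made precise, the remaining ingredients—the potential bound, the Cauchy--Schwarz step, and the monotonicity lemma—are routine.
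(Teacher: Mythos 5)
Your handling of the played‑arm bracket (the high‑probability event $E'$ at level $\delta'\asymp 1/(TD)$, the envelope $(\beta'_{t-1}+|w_t|\beta_{t-1})\sigma_{t-1}(X_t)$, the elliptical‑potential bound, and pulling out $\max_s|w_s|$ to get $C_{2,T}$) matches the paper's treatment of its terms $I_2$ and $I_3$ essentially line for line, and your monotonicity lemma for $w\mapsto g_{t-1}(X_t(w))$ is correct and gives a valid change of measure $g_{t-1}(\hat X_t)\le \frac{1}{C_{1,t}}\E\bigl[g_{t-1}(X_t)\,\bm{1}\{w_t\ge 1\}\mid\mathcal F_{t-1}\bigr]$. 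But the optimism bracket is where your argument has a genuine hole, and it is exactly the point you flag and then defer: your route needs the inequality $f^*(x_*)\le \hat f_{t-1}(\hat X_t)+g_{t-1}(\hat X_t)$ to hold \emph{pathwise} on an event whose complement contributes $o(T)$ regret. With $g_{t-1}$ calibrated at $\delta=0.5$ as in \eqref{eq: g3}, the complement of that event has probability up to $1/2$ in every round, so charging $O(D)$ per failure is linear in $T$. The repair you gesture at --- absorbing the failure into $E'$ --- does not work either: on $E'$ you only get $f^*(x_*)\le \hat f_{t-1}(x_*)+\beta'_{t-1}\sigma_{t-1}(x_*)=\hat f_{t-1}(x_*)+g_{t-1}(x_*)+(\beta'_{t-1}-\beta_{t-1})\sigma_{t-1}(x_*)$, and the surplus $(\beta'_{t-1}-\beta_{t-1})\sigma_{t-1}(x_*)\approx\sqrt{R^2\ln(TD)}\,\sigma_{t-1}(x_*)$ is attached to the \emph{optimal} arm, whose uncertainty $\sigma_{t-1}(x_*)$ need not decay; summed over $T$ rounds this is again $\Omega(T)$. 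So the "delicate part" you leave open is not a routine tightening --- it is the load-bearing step.

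The paper closes this gap by never conditioning the regret decomposition on the $\delta=0.5$ event at all. It writes $r(t)=f^*(x_*)-\E[\max_x\tilde f_t(x)]$ and applies Markov's inequality to the nonnegative random variable $(\max_x\tilde f_t(x)-\E[\max_x\tilde f_t(x)])^+$, so the $0.5$-level concentration enters only through the denominator $\mathbb P\{\max_x\tilde f_t(x)\ge f^*(x_*)\}\ge 0.5\,C_{1,t}$ (Lemma~\ref{lemma: anti}); a constant failure probability is harmless inside a probability lower bound, costing only a factor of $2$. The numerator is then bounded by the ghost-sample argument, $\E[(\max_x\tilde f_t(x)-\E[\max_x\tilde f_t(x)])^+]\le\E[|w_t|\,g_{t-1}(X_t)]+\E[|w_t|]\,\E[g_{t-1}(X_t)]$, which requires no concentration event whatsoever and directly yields the two summands of $C_{3,T}$ (Lemma~\ref{lemma: appro}). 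This inversion --- optimism used only as an anti-concentration statement feeding a Markov bound, rather than as a pathwise event --- is the paper's key idea and is what your proposal is missing. Your decomposition at $x_*$ and the UCB-arm rerouting would deliver the stated $C_{3,T}$ dependence only if you can supply an equivalent device; as written, the argument does not go through.
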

The kernel complexity $\gamma_T$ typically grows with $T$; for example, the Matérn kernel has $\gamma_T = O(T^{\frac{d}{2\nu +d}} \log(T))$.
Even so, choosing exploration distributions only based on  kernel parameters (e.g., smoothness parameter $\nu$) and the data dimension $d$ still enables the realization of sub-linear regret algorithms. For example, with the Matérn kernel with smoothness parameter $\nu$ and data dimension $d$, using exploration distributions satisfying
$C_{2, T} + C_{3,T} < o (T^{0.5 -\frac{d}{2 \nu +d}} )$
achieves an algorithm with sub-linear regret, i.e., an algorithm enjoying an $o(T) $ regret bounds.

\begin{proof}[Proof sketch of Theorem \ref{theorem: expected regret}; see Appendix \ref{app: main theorem} for details]   Let failure probability $\delta = O(1/TD^2)$. Define  $E_{t-1}$ as the event that $ \left\lVert  \overline{V}_{t-1}^{-1/2}\sum_{s = 1}^{t-1}  \varepsilon_s   K(\cdot, X_s)   \right\rVert \le  \sqrt{2R^2 \log ( \sqrt{\det\left( \overline{V}_{t-1}\right)} /\delta )} $, and  let $\overline{E}_{t-1} $ be its complement. 
We decompose the regret $\mathcal{R}(T)$ defined in \eqref{def: regret} as \[ \sum_{t=1}^{T}    \E \left[\left( f^*(x_*) - f^*(X_t) \right) \bm{1} \left\{E_{t-1} \right\} \right] + \sum_{t=1}^{T}    \E \left[\left( f^*(x_*) - f^*(X_t) \right) \bm{1} \left\{\overline{E}_{t-1} \right\} \right] ,\] where the second term is $O(1)$ by Theorem~\ref{concentration}, the self-normalized concentration bound for vector-valued martingales. Now, we only need to deal with
\begin{multline*} \label{dec}
         \sum_{t=1}^{T}    \E \left[\left( f^*(x_*) - f^*(X_t) \right) \bm{1} \left\{E_{t-1} \right\} \right] \\
         \le  \sum_{t=1}^{T}   \E [ \hat{f}_{t-1}(X_t) - f^*(X_t)  \bm{1} \left\{E_{t-1} \right\} ] + \E[\tilde{f}_{t}(X_t) -  \hat{f}_{t-1}(X_t)  ] + f^*(x_*)  - \E [ \tilde{f}_t(X_t) ],
\end{multline*}
where the first term is the cumulative estimation error, upper bounded by $O( \sqrt{R^2 \gamma_T + R^2 \ln(TD)}  + D  )\sqrt{T \gamma_T}$, and the second term $\E[\tilde{f}_{t}(X_t) -  \hat{f}_{t-1}(X_t)  ] \le \E\left[|w_t| \cdot g_{t-1}(X_t)  \right] $ is the approximation error, upper bounded by $C_{2,T} \cdot O( (\sqrt{R^2 \gamma_T }  + D)\sqrt{ T\gamma_T})$.
The last term above is challenging. Inspired by \citet{russo2019worst}, the value of $f^*(x_*)$ can be approximated as \[ \E [ \tilde{f}_{t}(X_t)]  +     2\E\left[ \frac{|w_t|}{C_{1,t}} \cdot  g_{t-1}(X_t)\right] + 2\E\left[\frac{|w_t|}{C_{1,t}}\right] \cdot \E [   g_{t-1}(X_t) ] ,\] which gives 
\[ \sum_{t=1}^{T} f^*(x_*) - \E [\tilde{f}_t(X_t)] 
               \le  C_{3,T} \cdot O (( \sqrt{R^2 \gamma_T }  + D )  \sqrt{ T\gamma_T}) .\qedhere\]\end{proof}







The regret analysis in Theorem~\ref{theorem: expected regret} is tight up to $\log$ factors, when viewing $C_{2,T}$ and $C_{3,T}$ as constants.  Section~\ref{sec: concrete bounds conditional} will show that, for some distributions,  the upper bounds for $C_{2,T}$ and $C_{3,T}$ are indeed constant. 
To show the tightness, we consider the linear bandit optimization problems studied by \citet{abbasi2011improved},
which cane be viewed as an istance of our problem setting with
a linear kernel $K(x, x') = x^{\top}x$. From \eqref{def: RKHS complexity} and Lemma~10 of \citet{abbasi2011improved}, the linear kernel has $\gamma_T = O ( d \log(T) )$.
We present theoretical results for Algorithm~\ref{alg:GP-Gerneral} using a linear kernel. 
\fakelabel{theorem}{theorem: linar bandit bound}
\begin{restatable}{theorem}{thmlinearbandit}
For the linear bandit optimization problems studied by \citet{abbasi2011improved}, the regret upper bound of Algorithm~\ref{alg:GP-Gerneral} is $O ( (C_{2,T} + C_{3,T})\sqrt{Td \log(T) } ( \sqrt{R^2 d \log(T)  }  + D  ))$.
    There exists a linear bandit optimization problem instance such that the regret is lower bounded by $\Omega\bigl(\sqrt{d T} (\sqrt d + D) \bigr)$.
\end{restatable}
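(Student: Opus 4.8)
The statement has two halves, and I would treat them separately.

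For the \textbf{upper bound}, the plan is a direct specialization of Theorem~\ref{theorem: expected regret}. The linear kernel $K(x,x')=x^\top x'$ is an instance of the RKHS setting, and by \eqref{def: RKHS complexity} together with Lemma~10 of \citet{abbasi2011improved} its maximum information gain satisfies $\gamma_T = O(d\log T)$. I would substitute this into the general bound, so that $\sqrt{T\gamma_T}\to\sqrt{Td\log T}$ and $\sqrt{R^2\gamma_T}\to\sqrt{R^2 d\log T}$. The only nonroutine point is the additive second term $(\sqrt{R^2\gamma_T+R^2\ln(TD)}+D)\sqrt{T\gamma_T}$, which carries no $(C_{2,T}+C_{3,T})$ prefactor; since the exploration conditions \eqref{condition: exploration}--\eqref{condition: exploitation} force $C_{3,T}\ge 1$ (indeed $\E[|w_1|/C_{1,1}]\ge \mathbb{P}\{w_1\ge 1\}/C_{1,1}=1$), this term is dominated by the first up to the $\ln(TD)$ factor and is absorbed into the stated bound. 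This half is essentially bookkeeping.

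For the \textbf{lower bound}, I would use the classical information-theoretic hypercube construction adapted to the normalized setting. Take the arm set to be the unit-norm vertices $\mathcal{X}=\{-1/\sqrt d,\,+1/\sqrt d\}^d$ (so $K(x,x)=\|x\|^2=1$, meeting the bounded-kernel assumption) and the reward function $f^*(x)=\langle\theta,x\rangle$ with $\theta\in\{-\Delta,+\Delta\}^d$, which has RKHS norm $\|\theta\|=\Delta\sqrt d$; I would use Gaussian (hence $R$-sub-Gaussian) noise. The key structural fact is that the regret decomposes additively across the $d$ coordinates into independent sign-identification sub-problems, each with per-round gap $2\Delta/\sqrt d$, so that the total regret equals $(2\Delta/\sqrt d)$ times the expected number of coordinate sign errors. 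I would then run an Assouad / divergence-decomposition argument: flipping the sign of coordinate $i$ changes each round's reward mean by $2\Delta/\sqrt d$, so the trajectory KL between a hypothesis and its $i$-th flip is $\Theta(T\Delta^2/(dR^2))$, and Pinsker's inequality lower-bounds the expected sign errors per coordinate by $\Omega(T)$ whenever this KL is $O(1)$.

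The remaining step is to optimize $\Delta$ against the two binding constraints, the noise budget (KL $=O(1)$) and the norm budget ($\Delta\sqrt d\le D$). Choosing $\Delta\asymp R\sqrt{d/T}$ keeps the KL constant and yields regret $\Omega(Rd\sqrt T)$, the $\sqrt{dT}\cdot\sqrt d$ term, and this instance is admissible precisely when the norm budget permits, i.e.\ $D\gtrsim Rd/\sqrt T$; tracking instead the norm-limited choice $\Delta=D/\sqrt d$ produces the $D$-dependent contribution, and taking whichever of the two instances gives the larger value yields $\Omega(\sqrt{dT}(\sqrt d+D))$ when $R$ is treated as a constant. \textbf{The main obstacle is precisely this $D$-dependent term:} the balanced hypercube alone saturates at $\Omega(Rd\sqrt T)$, so matching the upper bound's $D\sqrt{dT}$ factor requires either restricting to the regime where $D$ is the binding scale (so the norm-limited $\Delta$ is the relevant one) or, if one wants the bound to bite against this particular algorithm for large $D$, constructing an instance on which Algorithm~\ref{alg:GP-Gerneral}'s own exploration bonus $g_{t-1}$ --- which scales with $D$ through \eqref{eq: g3} --- forces excess pulls of suboptimal arms. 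I would also verify throughout that the construction respects every standing assumption ($K(x,x)\le 1$, $\|f^*\|\le D$, conditional $R$-sub-Gaussianity) and that $T$ is large enough ($T\gtrsim d$) for the balancing to be feasible.
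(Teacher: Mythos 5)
Your treatment of the upper bound is correct and matches the paper: plug $\gamma_T = O(d\log T)$ for the linear kernel into Theorem~\ref{theorem: expected regret}; the paper's own proof of this half is exactly that one line, and your extra observation that $C_{3,T}\ge 2$ lets the second additive term be absorbed is consistent with the paper's remark after the Simple-UCB theorem.

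The lower bound half has a genuine gap, and it is the one you half-diagnose yourself. The claim being proved is not a minimax statement: the paper exhibits an instance on which \emph{Algorithm~\ref{alg:GP-Gerneral} itself} incurs regret $\Omega(\sqrt{dT}(\sqrt d + D))$, which is what makes the analysis of Theorem~\ref{theorem: expected regret} tight. Your Assouad/hypercube route cannot deliver the $D\sqrt{dT}$ term: when $\Delta = D/\sqrt d$ is large the per-coordinate trajectory KL is $\Theta(T\Delta^2/(dR^2))\gg 1$, Pinsker yields nothing, and indeed the minimax regret does not scale with $D$ in that regime (larger gaps are easier, not harder, to distinguish); and in the regime where the KL constraint does bind, the resulting bound is dominated by the $Rd\sqrt T$ term, so the $D$-dependent contribution is vacuous. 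The fallback you name in your last paragraph --- an instance on which the algorithm's own bonus $g_{t-1}$, which scales with $D$ through \eqref{eq: g3}, forces excess pulls of suboptimal arms --- is precisely the paper's argument, but you do not execute it, and the execution is the entire content of the proof. Concretely, the paper takes $\mathcal X=\{e_1,\dots,e_d\}$, $f^*=(\Delta,0,\dots,0)$ with $\Delta \asymp (R\sqrt d + D)\sqrt{d/T}$, and \emph{zero} noise, so that $\overline V_t$ is diagonal in the pull counts and $g_{t-1}(e_i)\asymp (R\sqrt d + D)/\sqrt{1+Q_i(t-1)}$; it then argues by cases that either the optimal arm is underplayed by round $\beta T$ (giving regret $\Delta\beta T$ outright), or each suboptimal arm has $Q_i(t-1)\le \beta T$ while $Q_1(t-1)\ge(\alpha-\beta)T$, in which case the bonus gap exceeds $\Delta$ and a suboptimal arm is selected with probability at least $C_{1,t}$ in every round $t>\alpha T$, again giving regret $\Omega(T\Delta\min_t C_{1,t})$. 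None of this is recoverable from your information-theoretic construction, so as written the proposal does not prove the stated lower bound.
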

Since there exists a problem instance with  nearly-matching upper and lower bounds, our analysis of Theorem~\ref{theorem: expected regret} is tight up to some extra $\log$ factors.


Now, we show that algorithms with sub-Gaussian exploration distributions  enjoy  $\tilde{O} (\gamma_T\sqrt{T} )$ regret bounds.
We will use the following well-known fact.
\begin{fact} \label{fact: maximal}
  (Maximal inequality). Let $X_1, X_2, \dotsc, X_n$ be $n$ random variables such that each $X_i \sim \SG(\sigma_i^2)$. Then $\E \left[\mathop{\max}_{i \in [n]} |X_i| \right] \le \sqrt{2 \log(2n)} \cdot \mathop{\max}_{i \in [n]} \sigma_i $.
  \end{fact}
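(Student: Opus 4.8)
The plan is to use the standard exponential-moment (Chernoff-style) argument together with Jensen's inequality. Write $\sigma := \max_{i \in [n]} \sigma_i$. Since each $X_i \sim \SG(\sigma_i^2)$, the defining sub-Gaussian property gives $\E[e^{\lambda X_i}] \le e^{\lambda^2 \sigma_i^2/2} \le e^{\lambda^2 \sigma^2/2}$ for every $\lambda \in \mathbb{R}$ and every $i \in [n]$; the same holds for $-X_i$, since $-X_i \sim \SG(\sigma_i^2)$ as well.

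First I would introduce a free parameter $\lambda > 0$ and apply Jensen's inequality to the convex map $z \mapsto e^{\lambda z}$, which yields $e^{\lambda \E[\max_i |X_i|]} \le \E[e^{\lambda \max_i |X_i|}]$. Next I would bound the right-hand side by moving the maximum outside as a sum, $\E[\max_i e^{\lambda |X_i|}] \le \sum_{i=1}^n \E[e^{\lambda |X_i|}]$. The step that handles the absolute value is the elementary inequality $e^{\lambda |x|} \le e^{\lambda x} + e^{-\lambda x}$ (valid for $\lambda > 0$); combined with the sub-Gaussian moment bound applied to both $X_i$ and $-X_i$, this gives $\E[e^{\lambda |X_i|}] \le 2 e^{\lambda^2 \sigma^2/2}$. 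Summing over $i$ produces $e^{\lambda \E[\max_i |X_i|]} \le 2n\, e^{\lambda^2 \sigma^2/2}$.

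Taking logarithms and rearranging, I would obtain $\E[\max_i |X_i|] \le \frac{\log(2n)}{\lambda} + \frac{\lambda \sigma^2}{2}$, valid for every $\lambda > 0$. The final step is to optimize over $\lambda$: choosing $\lambda = \sqrt{2\log(2n)}/\sigma$ balances the two terms and yields exactly $\E[\max_i |X_i|] \le \sqrt{2\log(2n)}\, \sigma$, which is the claimed bound.

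This is a textbook maximal inequality, so I do not expect a genuine obstacle; the only point needing a little care is the factor of $2$, which arises precisely because the absolute value forces control of both tails of each sub-Gaussian variable. This is what turns $n$ into $2n$ inside the logarithm and hence produces $\sqrt{2\log(2n)}$ rather than $\sqrt{2\log n}$.
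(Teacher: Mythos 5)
Your proof is correct. The paper itself offers no proof of this statement --- it is cited as a ``well-known fact'' --- and your argument (Jensen's inequality on $e^{\lambda(\cdot)}$, bounding the maximum by the sum, controlling both tails via $e^{\lambda|x|}\le e^{\lambda x}+e^{-\lambda x}$ to get the factor $2n$, then optimizing $\lambda=\sqrt{2\log(2n)}/\sigma$) is exactly the standard derivation the paper is implicitly relying on, with all constants matching the stated bound.
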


 We present regret bounds for algorithms using sub-Gaussian distributions as exploration distributions.
\fakelabel{theorem}{coro: SG}
\begin{restatable}{theorem}{thmsubgauss}
    For an algorithm using a sequence of     sub-Gaussian distributions with parameters $\sigma_1^2, \sigma_2^2, \dotsc, \sigma_T^2$  and satisfying  the condition \eqref{condition: exploration} as exploration distributions, the regret bound of this specific algorithm is $O \left(\frac{\sqrt{T \log(T)\gamma_T}}{\min_{t \in [T]}C_{1,t}} \cdot \max_{t \in [T]} \sigma_t   \cdot \left( \sqrt{R^2 \gamma_T }  + D  \right)
                 \right)$.
 \end{restatable}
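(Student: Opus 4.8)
The plan is to derive Theorem~\ref{coro: SG} as a direct specialization of the generic bound in Theorem~\ref{theorem: expected regret}: all that is needed is to control the two distribution-dependent constants $C_{2,T}$ and $C_{3,T}$ for sub-Gaussian exploration and then substitute. First I would bound $C_{2,T} = \E[\max_{s \in [T]} |w_s|]$. Since each $w_s \sim \SG(\sigma_s^2)$ and the $w_s$ are independent, Fact~\ref{fact: maximal} gives immediately $C_{2,T} \le \sqrt{2\log(2T)}\,\max_{s \in [T]} \sigma_s$.

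Next I would bound $C_{3,T} = \E[\max_{s \in [T]} |w_s|/C_{1,s}] + \max_{s \in [T]} \E[|w_s|/C_{1,s}]$. For the first summand I pull the smallest exploration probability out of the maximum, $\max_{s} |w_s|/C_{1,s} \le (\min_{s} C_{1,s})^{-1} \max_{s} |w_s|$, and apply Fact~\ref{fact: maximal} again, yielding $\sqrt{2\log(2T)}\,\max_s \sigma_s / \min_s C_{1,s}$. For the second summand I use the standard first-moment estimate for a zero-mean sub-Gaussian, $\E[|w_s|] = O(\sigma_s)$, so that $\max_s \E[|w_s|]/C_{1,s} = O(\max_s \sigma_s / \min_s C_{1,s})$. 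Combining, $C_{3,T} = O\!\big(\sqrt{\log T}\,\max_s \sigma_s / \min_s C_{1,s}\big)$. Because $\min_s C_{1,s} \le 1$, the estimate for $C_{2,T}$ is dominated by that for $C_{3,T}$, so $C_{2,T} + C_{3,T} = O\!\big(\sqrt{\log T}\,\max_s \sigma_s / \min_s C_{1,s}\big)$.

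Substituting this into the first additive term of Theorem~\ref{theorem: expected regret} reproduces exactly the claimed bound $O\!\big(\tfrac{\sqrt{T\log(T)\gamma_T}}{\min_s C_{1,s}} \max_s \sigma_s (\sqrt{R^2\gamma_T}+D)\big)$. The remaining work, and the step I expect to be the main obstacle, is to argue that the second additive term of Theorem~\ref{theorem: expected regret}, namely $(\sqrt{R^2\gamma_T + R^2\ln(TD)} + D)\sqrt{T\gamma_T}$, is absorbed into this expression. The point is that condition~\eqref{condition: exploration} is not vacuous for sub-Gaussian variables: a zero-mean $\SG(\sigma_s^2)$ variable satisfies $C_{1,s} = \mathbb{P}\{w_s \ge 1\} \le e^{-1/(2\sigma_s^2)}$, which rearranges to $\sigma_s \ge (2\ln(1/C_{1,s}))^{-1/2}$ and forces $\max_s \sigma_s / \min_s C_{1,s}$ to be bounded below by an absolute constant. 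With $\gamma_T$ bounded below and the $\sqrt{\log T}$ factor present, each piece of the second term ($R\gamma_T\sqrt{T}$, $R\sqrt{\ln(TD)}\sqrt{T\gamma_T}$, and $D\sqrt{T\gamma_T}$) is then dominated, up to constants and treating $R, D$ as fixed, by the corresponding piece of the first term; in particular $\sqrt{\ln(TD)} = O(\sqrt{\gamma_T \log T})$ whenever $D$ is fixed and $\gamma_T \ge 1$. This comparison of the $\ln(TD)$ contribution against the $\sqrt{\log T}\,\gamma_T$ scaling is the only slightly delicate bookkeeping; everything else is a direct substitution.
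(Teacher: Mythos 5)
Your proposal is correct and takes essentially the same route as the paper: bound $C_{2,T}\le\sqrt{2\log(2T)}\max_t\sigma_t$ and $C_{3,T}=O\bigl(\sqrt{\log T}\,\max_t\sigma_t/\min_t C_{1,t}\bigr)$ via Fact~\ref{fact: maximal}, then substitute into Theorem~\ref{theorem: expected regret}. The only difference is that you explicitly justify absorbing the second additive term of Theorem~\ref{theorem: expected regret} (the $\sqrt{R^2\gamma_T+R^2\ln(TD)}$ piece), which the paper's proof silently drops; your argument for this, via $C_{1,s}\le e^{-1/(2\sigma_s^2)}$ and treating $R,D$ as fixed with $\gamma_T\ge 1$, is sound and actually tidier than the paper on that point.
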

 Since sampling $w_t \sim \Ber(1)$   corresponds to  deterministic exploration,  we can design exploration distributions as mixtures of $\Ber(1)$ with other  distributions. Mixtures  lift the condition shown in (\ref{condition: exploration}).
 \fakelabel{theorem}{coro: mixture}
 \begin{restatable}{theorem}{thmmixture}
       For an algorithm such that in each round $t$,  we draw $w_t \sim \Ber(1)$ with probability $\rho_t > 0$ and  draw $w_t \sim \SG(\sigma_t^2)$ with probability $1-\rho_t$, the regret bound of this specific algorithm is
\begin{equation*}
    \begin{array}{l}
         O \left( \frac{\sqrt{T\gamma_T}}{\mathop{\min}_{t \in [T]} \rho_t}  \left(1+  \sqrt{ \log(T)}\mathop{\max}_{t \in [T]} \sigma_t  \right) \left( \sqrt{R^2 \gamma_T  }  + D  \right)
              \right). 
    \end{array}
\end{equation*}
\end{restatable}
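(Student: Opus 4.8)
The plan is to derive this as a corollary of the generic bound in Theorem~\ref{theorem: expected regret} by bounding the three constants $C_{1,s}$, $C_{2,T}$, and $C_{3,T}$ attached to the mixture exploration distribution. Write $w_s = B_s + (1-B_s) Z_s$, where $B_s \sim \Ber(\rho_s)$ selects the deterministic component and $Z_s \sim \SG(\sigma_s^2)$ is the zero-mean sub-Gaussian component, with $B_s$ and $Z_s$ independent. The first step is the one that justifies the phrase ``mixtures lift the condition'': since the event $\{B_s = 1\}$ already forces $w_s = 1 \ge 1$, we get the lower bound $C_{1,s} = \mathbb{P}\{w_s \ge 1\} \ge \rho_s > 0$ regardless of how small $\sigma_s$ is. Hence condition~\eqref{condition: exploration} holds and, crucially, $1/C_{1,s} \le 1/\rho_s \le 1/\min_{t\in[T]}\rho_t$, which is the factor that will appear out front.

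Next I would bound $C_{2,T} = \E[\max_{s\in[T]} |w_s|]$. Using $|w_s| \le \max(1, |Z_s|) \le 1 + |Z_s|$ pointwise, I get $\max_s |w_s| \le 1 + \max_s |Z_s|$, and then Fact~\ref{fact: maximal} applied to $Z_1,\dotsc,Z_T$ gives $C_{2,T} \le 1 + \sqrt{2\log(2T)}\,\max_{t\in[T]}\sigma_t$. For $C_{3,T}$ I pull the uniform bound $1/C_{1,s} \le 1/\min_t\rho_t$ out of both terms. The first term, $\E[\max_s |w_s|/C_{1,s}]$, is then at most $(1/\min_t\rho_t)\,C_{2,T}$, reusing the previous display. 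For the second term, $\max_s \E[|w_s|/C_{1,s}] \le (1/\min_t\rho_t)\max_s \E[|w_s|]$, and $\E[|w_s|] = \rho_s + (1-\rho_s)\E[|Z_s|] \le 1 + O(\sigma_s)$ by the single-variable case of Fact~\ref{fact: maximal}. Combining, both constants satisfy $C_{2,T} + C_{3,T} = O\bigl(\tfrac{1}{\min_t\rho_t}(1 + \sqrt{\log T}\,\max_t\sigma_t)\bigr)$, where the $1/\min_t\rho_t$ factor is harmless inside $C_{2,T}$ since it is at least one.

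Finally I substitute this into Theorem~\ref{theorem: expected regret}. The leading (exploration/approximation) term becomes $O\bigl(\tfrac{\sqrt{T\gamma_T}}{\min_t\rho_t}(1+\sqrt{\log T}\max_t\sigma_t)(\sqrt{R^2\gamma_T}+D)\bigr)$, which is the claimed bound. It remains to check that the second, estimation-error term $(\sqrt{R^2\gamma_T + R^2\ln(TD)}+D)\sqrt{T\gamma_T}$ is dominated; this follows because $C_{2,T}+C_{3,T} \ge 1/\min_t\rho_t \ge 1$ and, splitting $\sqrt{R^2\gamma_T + R^2\ln(TD)} \le R\sqrt{\gamma_T} + R\sqrt{\ln(TD)}$, the $\ln(TD)$ contribution is absorbed into $\gamma_T$ for the kernels of interest (where $\gamma_T \gtrsim \ln(TD)$). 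I expect the delicate point to be the handling of $C_{3,T}$: one must resist trying to compute $C_{1,s}$ exactly (the sub-Gaussian part contributes an unknown extra amount to $\mathbb{P}\{w_s \ge 1\}$) and instead use only the clean lower bound $C_{1,s}\ge\rho_s$, which is exactly what is needed to upper bound $1/C_{1,s}$. A secondary subtlety is that the mixture has nonzero mean $\rho_s$, so it is not itself a zero-mean sub-Gaussian distribution and Theorem~\ref{coro: SG} cannot be invoked directly; routing through the generic Theorem~\ref{theorem: expected regret} is what makes the argument go through.
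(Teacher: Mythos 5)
Your proposal is correct and follows essentially the same route as the paper's proof: lower-bound $C_{1,t}\ge\rho_t$, bound $C_{2,T}$ and $C_{3,T}$ by separating the Bernoulli contribution (at most $1$) from the sub-Gaussian one via Fact~\ref{fact: maximal}, and plug into Theorem~\ref{theorem: expected regret}. The only cosmetic difference is that you use the pointwise domination $|w_t|\le 1+|Z_t|$ where the paper conditions on the random subset of rounds in which $\Ber(1)$ was selected; your version is slightly cleaner but yields the same bounds.
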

Carefully choosing exploration distributions will realize algorithms enjoying the $\tilde{O} (\gamma_T\sqrt{T})$
regret bounds, which are optimal 
up to an extra $\sqrt{\gamma_T}$ factor as compared to the established  $\Omega(\sqrt{T\gamma_T})$ regret lower bounds \citep{scarlett2017lower,cai2025lower}. 

\subsection{Concrete GP algorithms} \label{sec: concrete bounds conditional}
In this section, we present several  concrete algorithms, \textbf{Simple-UCB}, \textbf{Simple-Bernoulli}, \textbf{Simple-Gaussian}, and \textbf{Simple-Categorical},  based on simple  exploration distributions. These exploration distributions all realize computationally efficient algorithms that achieve $\tilde{O}(\gamma_T\sqrt{T} )$ regret bounds.


\paragraph{Simple-UCB.} \label{sec: Simple-UCB}

Using $\Ber(1)$  as exploration distribution in all rounds, i.e., $w_t \equiv 1$ for all $t \in [T]$, realizes a deterministic UCB-based algorithm, \textbf{Simple-UCB}. 







\begin{theorem}
    The regret of  \textbf{Simple-UCB} is $O \left(\sqrt{T\gamma_T} \left( \sqrt{R^2 \gamma_T  }  + D  
             + \sqrt{ R^2 \ln(TD)}   \right)\right)$.
\end{theorem}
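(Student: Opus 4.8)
The plan is to derive this bound directly as a specialization of the generic regret bound in Theorem~\ref{theorem: expected regret}. Since \textbf{Simple-UCB} takes $\Ber(1)$ as its exploration distribution in every round, we have $w_t \equiv 1$ for all $t \in [T]$, so the entire argument reduces to computing the constants $C_{1,s}$, $C_{2,T}$, and $C_{3,T}$ for this degenerate point mass and checking that they satisfy the exploration-distribution conditions \eqref{condition: exploration} and \eqref{condition: exploitation}.

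First I would verify condition \eqref{condition: exploration}. Because $w_s = 1$ deterministically, we have $C_{1,s} = \mathbb{P}_{w_s \sim \Ber(1)}\{w_s \ge 1\} = 1 \neq 0$ for each $s$, and the $w_s$ are trivially independent since each is the constant $1$. Next I would evaluate the constants in \eqref{condition: exploitation}. As $|w_s| = 1$ for every $s$, its maximum over $s \in [T]$ equals $1$, so $C_{2,T} = \E[\max_{s \in [T]} |w_s|] = 1$. Similarly $|w_s|/C_{1,s} = 1$, whence $\E[\max_{s \in [T]} |w_s|/C_{1,s}] = 1$ and $\max_{s \in [T]} \E[|w_s|/C_{1,s}] = 1$, giving $C_{3,T} = 1 + 1 = 2$.

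With $C_{2,T} = 1$ and $C_{3,T} = 2$, both $O(1)$, I would substitute into Theorem~\ref{theorem: expected regret}, collapsing the factor $(C_{2,T} + C_{3,T}) = 3$ into the constant. This yields
\[ O\!\left( \sqrt{T\gamma_T}\left(\sqrt{R^2\gamma_T} + D\right) + \left(\sqrt{R^2\gamma_T + R^2\ln(TD)} + D\right)\sqrt{T\gamma_T}\right). \]
To match the stated form I would factor out $\sqrt{T\gamma_T}$, so the bracketed factor becomes $\sqrt{R^2\gamma_T} + D + \sqrt{R^2\gamma_T + R^2\ln(TD)} + D$, and then apply subadditivity of the square root, $\sqrt{R^2\gamma_T + R^2\ln(TD)} \le \sqrt{R^2\gamma_T} + \sqrt{R^2\ln(TD)}$. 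This leaves a sum in which $\sqrt{R^2\gamma_T}$, $D$, and $\sqrt{R^2\ln(TD)}$ each appear with a constant coefficient, giving exactly $O(\sqrt{T\gamma_T}(\sqrt{R^2\gamma_T} + D + \sqrt{R^2\ln(TD)}))$.

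There is essentially no analytic obstacle here: all the difficulty resides in Theorem~\ref{theorem: expected regret}, which I am taking as given. The only subtlety is the bookkeeping for a point-mass distribution, together with the observation that the weighting $2R^2\log(2\sqrt{\det(\overline{V}_{t-1})})$ inside $g_{t-1}$ in \eqref{eq: g3} is precisely engineered so that the deterministic choice $w_t \equiv 1$ reconstitutes a valid UCB exploration bonus, which is what makes this instance coincide with the usual UCB analysis.
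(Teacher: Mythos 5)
Your proposal is correct and follows exactly the paper's own route: compute $C_{1,t}=1$, $C_{2,T}\le 1$, $C_{3,T}\le 2$ for the $\Ber(1)$ point mass and substitute into Theorem~\ref{theorem: expected regret}, with the final subadditivity step $\sqrt{R^2\gamma_T + R^2\ln(TD)} \le \sqrt{R^2\gamma_T} + \sqrt{R^2\ln(TD)}$ being routine bookkeeping the paper leaves implicit. No gaps.
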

\begin{proof}
For $\Ber(1)$, we have $C_{1,t} = 1$. Then, we have $C_{2,T} \le 1$ and $C_{3,T} \le 2 $. Plugging the upper bounds for $C_{2,T}$ and $C_{3,T}$ into Theorem~\ref{theorem: expected regret} concludes the proof.\end{proof}
From Jensen's inequality, we have $C_{3,T} =  \E [\mathop{\max}_{t \in [T]     } \frac{|w_t|}{C_{1,t}} ] + \mathop{\max}_{t \in [T]     } \E [\frac{|w_t|}{C_{1,t}} ] \ge  \mathop{\max}_{t \in [T]     } \E [ \frac{|w_t|}{C_{1,t}} ] + \mathop{\max}_{t \in [T]     } \E [\frac{|w_t|}{C_{1,t}} ] \ge 2$. Combining with the fact that $C_{3,T} \le 2$, we know that using $\Ber(1)$ as exploration distributions in all rounds is the best choice to control $C_{3,T}$, but  $\Ber(1)$ may not be the best choice to control $C_{2,T} =  \E \left[\mathop{\max}_{t \in [T]}|w_{t}|  \right] $. 
Our \textbf{Simple-UCB} achieves the same regret bound as  GP-UCB of \citet{whitehouse2023sublinear} and IGP-UCB of \citet{chowdhury2017kernelized}. 
\textbf{Simple-UCB} is more computationally efficient, however, as it does not need to construct the confidence balls used by these algorithms.


\paragraph{Simple-Bernoulli.}               Using $\Ber(p_1), \Ber(p_2), \dotsc, \Ber(p_T) $ with each $p_t \in (0, 1]$ as the sequence of exploration distributions realizes \textbf{Simple-Bernoulli}, where we have $\tilde{f}_t=\hat{f}_{t-1}+ g_{t-1}$ with probability $p_t$   and   $\tilde{f}_t=\hat{f}_{t-1}$ with probability $1-p_t$ in each round $t$.
\begin{theorem}
    The regret of \textbf{Simple-Bernoulli} is $O \left( \frac{\sqrt{T\gamma_T} }{\mathop{\min}_{t \in [T]} p_t} \left( \sqrt{R^2 \gamma_T  }  + D  
             + \sqrt{ R^2 \ln(TD)}   \right)
               \right)$.
\end{theorem}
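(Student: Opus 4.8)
The plan is to invoke the generic regret bound of Theorem~\ref{theorem: expected regret} directly, so the entire task reduces to computing (or upper-bounding) the three problem constants $C_{1,t}$, $C_{2,T}$, and $C_{3,T}$ associated with the exploration distributions $\Ber(p_1), \dotsc, \Ber(p_T)$. First I would check that these distributions are admissible: since each $p_t \in (0,1]$, the draws $w_1, \dotsc, w_T$ are independent, and $\mathbb{P}_{w_t \sim \Ber(p_t)}\{w_t \ge 1\} = p_t \ne 0$, so condition~\eqref{condition: exploration} holds with $C_{1,t} = p_t$.

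Next I would evaluate \eqref{condition: exploitation}. Because each $|w_t| \in \{0,1\}$, the quantity $\mathop{\max}_{s \in [t]} |w_s|$ is the indicator that at least one $w_s$ equals $1$, so $C_{2,T} = \E[\mathop{\max}_{s \in [T]} |w_s|] \le 1$. For $C_{3,T}$ I would bound the two terms separately. The normalized variable $|w_s|/C_{1,s} = |w_s|/p_s$ takes values in $\{0, 1/p_s\}$, hence $\mathop{\max}_{s \in [T]} |w_s|/p_s \le 1/\mathop{\min}_{s \in [T]} p_s$ holds deterministically, giving $\E[\mathop{\max}_{s \in [T]} |w_s|/p_s] \le 1/\mathop{\min}_{t \in [T]} p_t$; meanwhile $\E[|w_s|/p_s] = p_s/p_s = 1$, so $\mathop{\max}_{s \in [T]} \E[|w_s|/p_s] = 1$. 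Together these give $C_{3,T} \le 1 + 1/\mathop{\min}_{t \in [T]} p_t$.

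Finally I would substitute $C_{2,T} + C_{3,T} = O(1/\mathop{\min}_{t \in [T]} p_t)$ into Theorem~\ref{theorem: expected regret}. The only cleanup is to fold the second (exploration-distribution-free) term of that bound into the stated form: since $1/\mathop{\min}_{t} p_t \ge 1$ it can absorb the prefactor of the $(\sqrt{R^2\gamma_T + R^2\ln(TD)} + D)\sqrt{T\gamma_T}$ term, and subadditivity of the square root splits $\sqrt{R^2\gamma_T + R^2\ln(TD)} \le \sqrt{R^2\gamma_T} + \sqrt{R^2\ln(TD)}$, producing exactly $O\bigl(\frac{\sqrt{T\gamma_T}}{\mathop{\min}_{t\in[T]} p_t}(\sqrt{R^2\gamma_T} + D + \sqrt{R^2\ln(TD)})\bigr)$.

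I expect no serious obstacle here: the proof is a mechanical specialization of the generic theorem. The one point requiring a little care is the $\mathop{\max}_s |w_s|/p_s$ term in $C_{3,T}$, where it is tempting to bound each $1/p_s$ by its own value rather than recognizing that the worst case is governed by $1/\mathop{\min}_t p_t$; this is precisely where the $1/\mathop{\min}_t p_t$ dependence in the final bound originates, so it is worth stating explicitly.
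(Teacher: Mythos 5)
Your proposal is correct and follows essentially the same route as the paper: identify $C_{1,t}=p_t$, bound $C_{2,T}\le 1$ and $C_{3,T}\le 1+1/\min_{t\in[T]}p_t\le 2/\min_{t\in[T]}p_t$, and substitute into Theorem~\ref{theorem: expected regret}. Your additional remarks on absorbing the exploration-free term via $1/\min_t p_t\ge 1$ and on bounding $\max_s|w_s|/p_s$ by $1/\min_t p_t$ are exactly the (implicit) steps in the paper's argument.
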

\begin{proof}
    For  $\Ber(p_t)$, we have $C_{1,t} = p_t$. Then, 
     we have $C_{2,T} =  \E [\mathop{\max}_{t \in [T]} |w_t| ] \le 1 $ and $C_{3,T} = \E [\mathop{\max}_{t \in [T]     } \frac{|w_t|}{C_{1,t}} ] + \mathop{\max}_{t \in [T]     } \E [\frac{|w_t|}{C_{1,t}} ] \le \frac{1}{\mathop{\min}_{t \in [T]} p_t} +1 \le \frac{2}{\mathop{\min}_{t \in [T]} p_t}$. Plugging the upper bounds for $C_{2,T}$ and $C_{3,T}$ into Theorem~\ref{theorem: expected regret} concludes the proof.\end{proof}
  \textbf{Simple-Bernoulli} is an extremely  efficient and practical algorithm, as we can adjust  $p_t$ to control the exploration level based on the seen information. Once we are confident about the learning environment, we can choose a smaller $p_t$ to force the algorithm to focus on  exploitation. 

\paragraph{Simple-Gaussian.} \label{sec: Simple-Gaussian}

Using $ w_t \sim  \mathcal{N}(0,1)$  as the exploration distribution
realizes \textbf{Simple-Gaussian}.
Contrary to the previous algorithms,
here we may have $w_t < 0$,
and thus sometimes choose points with the highest \emph{lower} confidence bound.
\begin{theorem}
    The regret of \textbf{Simple-Gaussian} is  $ O \left(\sqrt{T \log(T)\gamma_T}    \left(\sqrt{R^2 \gamma_T } + D  \right) 
            \right)$.
\end{theorem}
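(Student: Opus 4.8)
The plan is to recognize \textbf{Simple-Gaussian} as a special case of the sub-Gaussian exploration framework and invoke Theorem~\ref{coro: SG} directly, rather than re-deriving the bound from the generic Theorem~\ref{theorem: expected regret}. The standard Gaussian $\mathcal{N}(0,1)$ is exactly $\SG(1)$, a zero-mean sub-Gaussian distribution with variance proxy $\sigma_t^2 = 1$ for every round $t$. So the only substantive thing to check is that this choice meets the exploration condition \eqref{condition: exploration} with a constant that does not decay in $t$; everything else is bookkeeping.

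First I would compute $C_{1,t}$. Since $P_{w,t} = \mathcal{N}(0,1)$ is the same in every round,
\[ C_{1,t} = \mathbb{P}_{w_t \sim \mathcal{N}(0,1)}\{w_t \ge 1\} = 1 - \Phi(1), \]
where $\Phi$ denotes the standard normal CDF; this is a fixed positive constant (numerically $\approx 0.159$), so $\min_{t \in [T]} C_{1,t} = 1 - \Phi(1) = \Theta(1)$ and condition \eqref{condition: exploration} is satisfied. I expect this to be the main (indeed, essentially the only) obstacle: the argument would fail if $C_{1,t}$ shrank with $t$ or $T$, but because the exploration distribution is a fixed centered Gaussian and the threshold $1$ is fixed, $\mathbb{P}\{w_t \ge 1\}$ is bounded away from zero uniformly in $T$. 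This is precisely what makes the standard Gaussian an admissible exploration distribution and is the step deserving the most care. The possibility $w_t < 0$ noted in the text is harmless for the analysis, since Theorem~\ref{coro: SG} only involves $|w_t|$ and the variance proxy.

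With these two facts in hand, applying Theorem~\ref{coro: SG} with $\sigma_t = 1$ for all $t$ yields regret
\[ O\!\left(\frac{\sqrt{T\log(T)\gamma_T}}{\min_{t\in[T]} C_{1,t}} \cdot \max_{t\in[T]}\sigma_t \cdot \bigl(\sqrt{R^2\gamma_T} + D\bigr)\right) = O\!\left(\sqrt{T\log(T)\gamma_T}\,\bigl(\sqrt{R^2\gamma_T}+D\bigr)\right), \]
where both $\min_t C_{1,t} = 1-\Phi(1)$ and $\max_t \sigma_t = 1$ are absorbed into the hidden constant. This matches the claimed bound. As a sanity check (and an alternative route that avoids Theorem~\ref{coro: SG}), one could instead bound $C_{2,T}$ and $C_{3,T}$ directly and plug into Theorem~\ref{theorem: expected regret}: by the maximal inequality (Fact~\ref{fact: maximal}) applied to the i.i.d.\ $\SG(1)$ variables, $C_{2,T} = \E[\max_{t}|w_t|] \le \sqrt{2\log(2T)}$, and since $C_{1,t}$ is a constant, $C_{3,T} = (1-\Phi(1))^{-1}\bigl(\E[\max_t |w_t|] + \E[|w_t|]\bigr) = O(\sqrt{\log T})$. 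Both are $O(\sqrt{\log T})$, so the leading term of Theorem~\ref{theorem: expected regret} again collapses to $O(\sqrt{T\log(T)\gamma_T}(\sqrt{R^2\gamma_T}+D))$, confirming the result.
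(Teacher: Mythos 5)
Your proposal is correct and follows essentially the same route as the paper: the paper also observes that $C_{1,t}=\mathbb{P}\{w_t\ge 1\}$ is a universal positive constant (via the Gaussian anti-concentration bound in Fact~\ref{fact: Gaussian bounds}, rather than the exact value $1-\Phi(1)$, but this is immaterial) and then plugs $\sigma_t=1$ into Theorem~\ref{coro: SG}. The additional sanity check via Theorem~\ref{theorem: expected regret} is a nice touch but not part of the paper's argument.
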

\begin{proof}From Fact~\ref{fact: Gaussian bounds}, we have $C_{1,t} = \mathbb{P}\left\{w \ge 1 \right\} \ge \frac{1}{\sqrt{2\pi}} \frac{1}{2} e^{-0.5}$. Plugging the lower bound for $C_{1,t}$ into Theorem~\ref{coro: SG} concludes the proof. \end{proof}
Compared to the $\tilde{O} (\gamma_T \sqrt{Td})$ regret bound of GP-TS \citep{chowdhury2017kernelized}, our \textbf{Simple-Gaussian} saves on an extra $\sqrt{d}$ factor.
Rather than sampling from a $d$-dimensional Gaussian,
we explore based on a one-dimensional Gaussian,
similar to the approach of \citet{xiong2022near}.

\paragraph{Simple-Categorical.} \label{sec: Generic Categorical}
Consider discrete distributions supported on the set
$\{\frac{1}{K_t}, \dotsc, \frac{i}{K_t}, \dotsc, \frac{K_t}{K_t}\}$
with parameter
$\Vec{p}_t = (p_t^{(1)}, \dotsc, p_t^{(i)}, \dotsc, p_t^{(K_t)} )$,
i.e.\ 
$\mathbb{P}_{w_t }\left\{w_t = \frac{i}{K_t} \right\} = p_t^{(i)}$ for all $i \in [K_t]$,
with each probability positive and summing to 1.
  Using a sequence of
  these distributions
  realizes \textbf{Simple-Categorical}, where we have $\tilde{f}_t  = \hat{f}_{t-1} +  \frac{i}{K_t} \cdot g_{t-1}$ with probability $p_t^{(i)}$ for all $i \in [K_t]$.
\begin{theorem}
    The regret of \textbf{Simple-Categorical} is $   O \left(\frac{\sqrt{T\gamma_T} }{\mathop{\min}_{t \in [T]} p_t^{(K_t)}}\left( \sqrt{R^2 \gamma_T  }  + D  
             + \sqrt{ R^2 \ln(TD)}   \right)
            \right)$.
\end{theorem}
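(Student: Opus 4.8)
The plan is to derive this bound as a direct specialization of the master regret bound in Theorem~\ref{theorem: expected regret}, exactly as was done for \textbf{Simple-UCB} and \textbf{Simple-Bernoulli}: it suffices to compute upper bounds on the constants $C_{2,T}$ and $C_{3,T}$ (and to read off $C_{1,t}$) attached to the categorical exploration distribution, verify condition~\eqref{condition: exploration}, and then substitute. First I would read off $C_{1,t}$. The support of $w_t$ is $\{\tfrac{1}{K_t}, \dotsc, \tfrac{K_t}{K_t}\} \subseteq (0,1]$, and since $\tfrac{i}{K_t} < 1$ for every $i < K_t$ while $\tfrac{K_t}{K_t} = 1$, the event $\{w_t \ge 1\}$ coincides exactly with $\{w_t = 1\}$. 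Hence $C_{1,t} = \mathbb{P}_{w_t}\{w_t \ge 1\} = p_t^{(K_t)}$, which is strictly positive by assumption, so condition~\eqref{condition: exploration} is satisfied.

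Next I would bound $C_{2,T}$ and $C_{3,T}$. Since every realization obeys $0 < w_t \le 1$, we have $|w_t| \le 1$ surely, so $C_{2,T} = \E[\max_{t \in [T]} |w_t|] \le 1$. For $C_{3,T}$, the first summand is controlled by the pointwise bound $\tfrac{|w_t|}{C_{1,t}} = \tfrac{w_t}{p_t^{(K_t)}} \le \tfrac{1}{\min_{s \in [T]} p_s^{(K_s)}}$, which immediately bounds both the inner maximum and its expectation. For the second summand, $\E[w_t] = \sum_{i=1}^{K_t} \tfrac{i}{K_t} p_t^{(i)} \le 1$ (each atom is at most $1$ and the probabilities sum to $1$), so $\E\!\left[\tfrac{|w_t|}{C_{1,t}}\right] \le \tfrac{1}{p_t^{(K_t)}}$. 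Adding the two pieces gives $C_{3,T} \le \tfrac{2}{\min_{t \in [T]} p_t^{(K_t)}}$.

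Finally I would substitute into Theorem~\ref{theorem: expected regret}. Because $\min_t p_t^{(K_t)} \le 1$, we have $C_{2,T} + C_{3,T} \le 1 + \tfrac{2}{\min_t p_t^{(K_t)}} \le \tfrac{3}{\min_t p_t^{(K_t)}}$, so the first group in the master bound becomes $O\!\left(\tfrac{\sqrt{T\gamma_T}}{\min_t p_t^{(K_t)}}(\sqrt{R^2\gamma_T} + D)\right)$. The remaining group $(\sqrt{R^2\gamma_T + R^2\ln(TD)} + D)\sqrt{T\gamma_T}$ carries no $1/\min_t p_t^{(K_t)}$ factor; bounding $\sqrt{R^2\gamma_T + R^2\ln(TD)} \le \sqrt{R^2\gamma_T} + \sqrt{R^2\ln(TD)}$ and using $1/\min_t p_t^{(K_t)} \ge 1$ lets it be absorbed, collapsing everything into the claimed $O\!\left(\tfrac{\sqrt{T\gamma_T}}{\min_{t \in [T]} p_t^{(K_t)}}(\sqrt{R^2\gamma_T} + D + \sqrt{R^2\ln(TD)})\right)$.

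I do not expect a genuine obstacle here, since the argument is a mechanical specialization of the general theorem; the only step demanding any care is correctly identifying $C_{1,t} = p_t^{(K_t)}$, i.e.\ noticing that the threshold event $\{w_t \ge 1\}$ selects only the top atom of the support rather than some partial upper tail, and that this is precisely what makes $\min_t p_t^{(K_t)}$ (and not a cumulative tail probability) the quantity governing the bound.
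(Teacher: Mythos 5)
Your proposal is correct and follows exactly the paper's own route: identify $C_{1,t}=p_t^{(K_t)}$, bound $C_{2,T}\le 1$ and $C_{3,T}\le 2/\min_{t\in[T]}p_t^{(K_t)}$, and substitute into Theorem~\ref{theorem: expected regret}. The extra detail you give (verifying that $\{w_t\ge 1\}$ selects only the top atom, and absorbing the second group of the master bound) is a sound elaboration of the same argument.
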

\begin{proof}
We have $C_{1,t} = p_t^{(K_t)}$. Then, 
we have $C_{2,T} =  \E [\mathop{\max}_{t \in [T]} |w_t| ] \le 1 $ and $C_{3,T} = \E [\mathop{\max}_{t \in [T]     } \frac{|w_t|}{C_{1,t}} ] + \mathop{\max}_{t \in [T]     } \E [\frac{|w_t|}{C_{1,t}} ] \le \frac{2}{\mathop{\min}_{t \in [T]} p_t^{(K_t)}}$. Plugging the upper bounds for $C_{2,T}$ and $C_{3,T}$ into Theorem~\ref{theorem: expected regret} concludes the proof.\end{proof}












\section{Experimental Results} \label{sec: experiments}
We conduct experiments to compare the empirical performance of our proposed concrete algorithms with IGP-UCB and GP-TS \citep{chowdhury2017kernelized} by using both synthetic data and real-world data. For our proposed algorithms, we use the following exploration distributions: 
\textbf{Simple-UCB}, \textbf{Simple-Gaussian}, \textbf{Simple-Bernoulli} with the first $T/2$ rounds using $\Ber(0.5)$ and the second $T/2$ rounds using $\Ber(0.25)$,
and \textbf{Simple-Categorical} with support $\{\frac{1}{4}, \frac{2}{4}, \frac{3}{4}, \frac{4}{4} \}$ and parameter $(\frac{1}{4}, \frac{1}{4}, \frac{1}{4}, \frac{1}{4} )$.
We run each experiment for $T=10^3$ rounds and repeat $25$ independent random-seed runs.  We use a Gaussian RBF kernel with regularization parameter $\rho = 1$. 

We evaluate on four synthetic test functions: \textbf{Holder Table (dimension $d =2$), Cross-in-Tray (dimension $d =2$), Ackley (dimension $d =4$), Hartmann (dimension $d =6$)}, whose analytical forms are shown at \url{https://www.sfu.ca/\~ssurjano/optimization.html}.
Each task has a 50-point candidate set  and is corrupted by light Gaussian noise (\(\sigma = 10^{-2}\)).

We also use the real-world Perovskite dataset \citep{sun2021data}, which has 94 samples spanning 3 dimensions. 

Figure~\ref{fig:five-2row} reports the experimental results of all the compared algorithms using both synthetic and real-world data.
For experiments  over synthetic data (Figure~\ref{fig:plot1} to \ref{fig:plot4}),
our proposed Simple-Bernoulli is extremely efficient and achieves the best empirical performance. This is because it enables reducing exploration level as information is gathered. Our proposed Simple-Gaussian outperforms GP-TS empirically, although both of them use Gaussian distributions to do exploration. Simple-Categorical outperforms both IGP-UCB and Simple-UCB. For the real-world Perovskite dataset (Figure~\ref{fig:plot5}), our proposed Simple-Gaussian, Simple-Bernoulli, and Simple-Categorical perform similarly, and outperforms IGP-UCB, Simple-UCB, and GP-TS.

\begin{figure}[t]
  \centering
  \subfigure[Holder Table (d = 2)]{
    \includegraphics[width=0.33\linewidth]{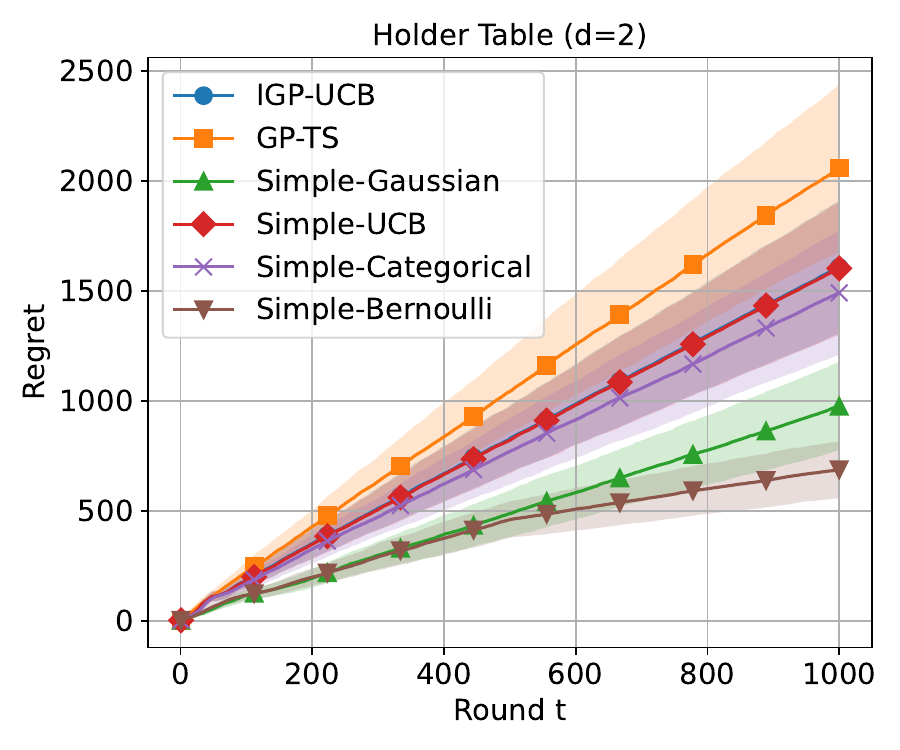}
    \label{fig:plot1}
  }\hspace{-1cm}
  \hfill
  \subfigure[Cross-in-Tray (d  =2)]{
    \includegraphics[width=0.33\linewidth]{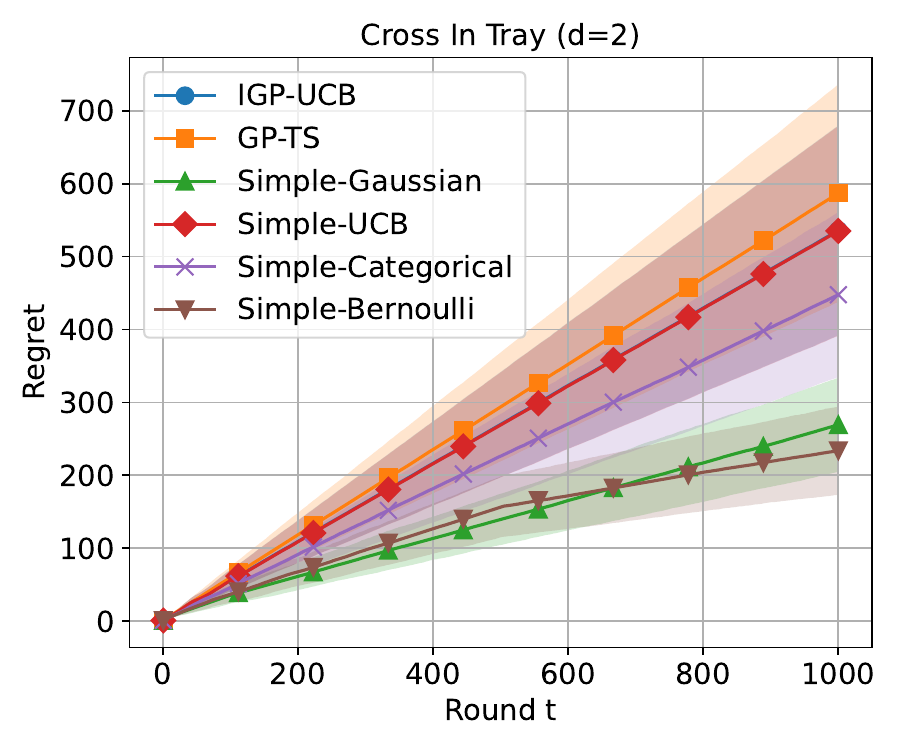}
    \label{fig:plot2}
  }\hspace{-1cm}
  \hfill
  \subfigure[Ackley (d = 4)]{
    \includegraphics[width=0.33\linewidth]{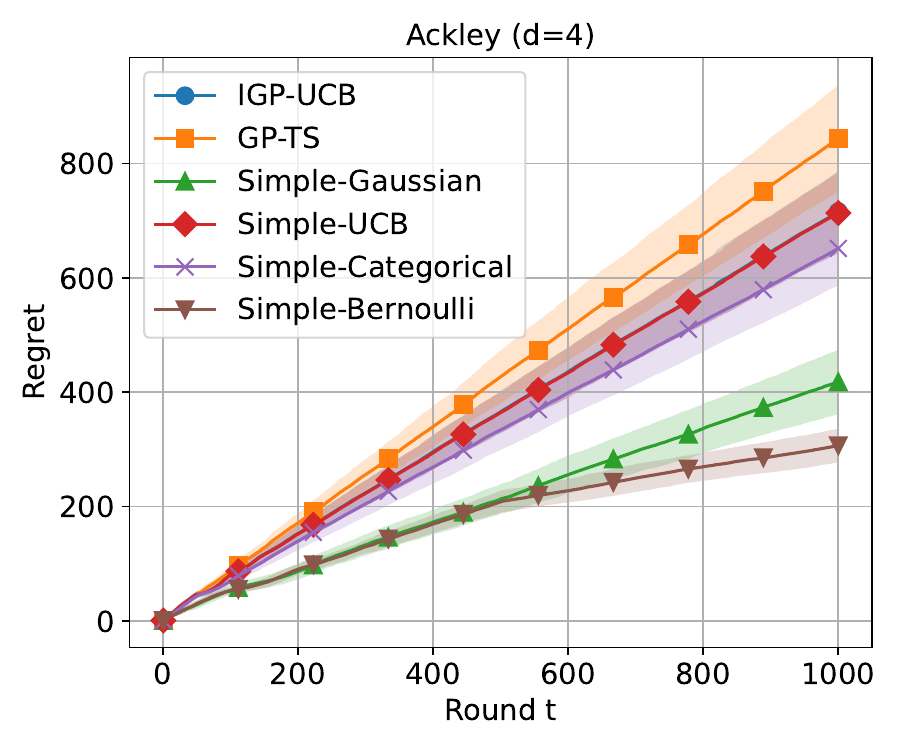}
    \label{fig:plot3}
  }
  \subfigure[Hartmann (d = 6)]{
    \includegraphics[width=0.33\linewidth]{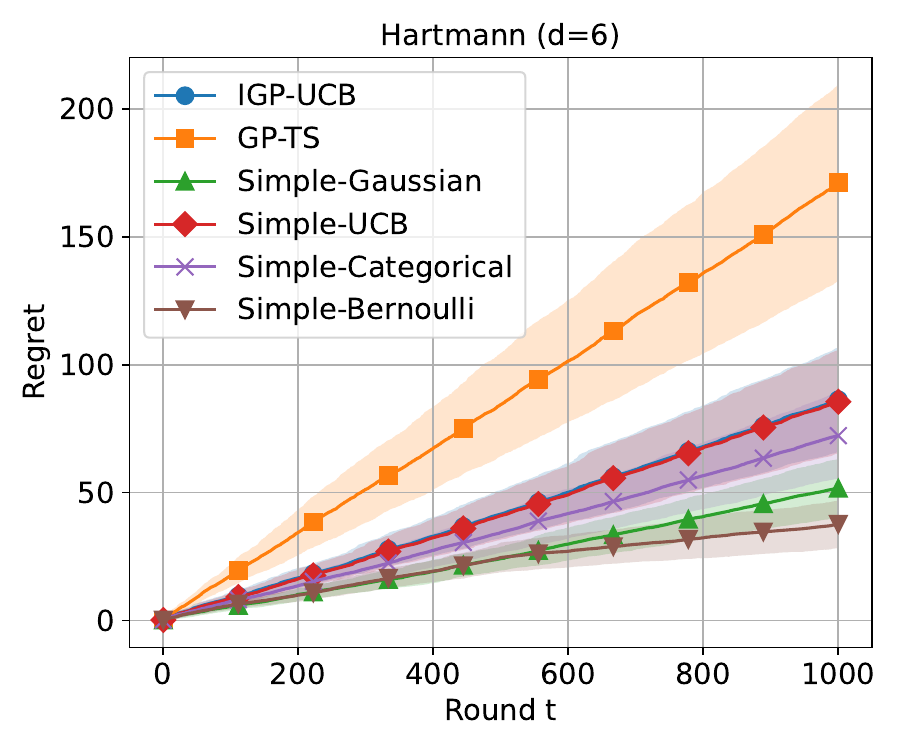}
    \label{fig:plot4}
  }
  \subfigure[Perovskite real-world dataset]{
    \includegraphics[width=0.33\linewidth]{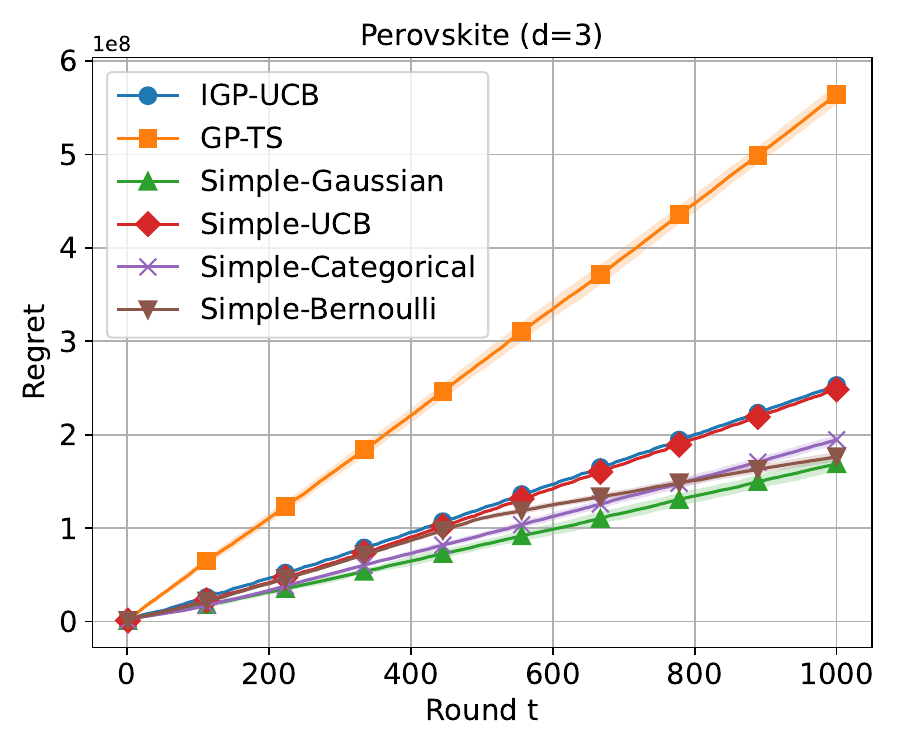}
    \label{fig:plot5}
  }

  \caption{Regret comparison of different algorithms for synthetic and real-world data}
  \label{fig:five-2row}
\end{figure}

 \section{Conclusion and Future work}
Our novel kernelized bandit algorithms add flexibility, achieve the same regret bounds as others in their class with a unified analysis, and perform better in practice.
Simple-Bernoulli in particular seems an extremely promising and exceedingly simple
improvement over current UCB-type algorithms.
 
As discussed by \citet{lattimore2023lower}, we think that optimal $O(\sqrt{T \gamma_T})$ regret bounds may not be obtainable if in each round $t$, the pulled arm $X_t$ is distributed according to a data-dependent distribution. 
To develop algorithms achieving better regret bounds, we must  carefully control the arm sampling path. On one hand, we would like the generated reward sequence $Y_1, \dotsc, Y_t$ to be dependent as this phenomenon implies  the developed algorithms utilize the past information well. On the other hand, we would like the reward sequence $Y_1, \dotsc, Y_t$ to be less dependent, which will help us to use concentration bounds in a low-dimensional vector-space. Regarding future research, we plan to investigate the possibility to develop a generic algorithm with high probability regret guarantees. Time-varying kernelized bandits as studied by \citet{cai2025lower} are also an interesting extension. 










\begin{ack}
This work was funded in part by
the Natural Sciences and Engineering Resource Council of Canada,
the Canada CIFAR AI chairs program,
and the UBC Data Science Institute.
\end{ack}

\printbibliography

\appendix

\newpage
The appendix is organized as follows.
\begin{enumerate}
    \item Appendix~\ref{app: facts} presents facts, concentration and anti-concentration bounds used in this work.
    \item Appendix~\ref{app: main theorem} presents proof of Theorem~\ref{theorem: expected regret}.
    \item Appendix~\ref{app: linar bandit bound} presents proof of Theorem~\ref{theorem: linar bandit bound}.
    \item Appendix~\ref{app: SG} presents proof of Theorem~\ref{coro: SG}.
    \item Appendix~\ref{app: mixture} presents proof of Theorem~\ref{coro: mixture}.
    
\end{enumerate}
  \section{Useful facts, concentration and anti-concentration bounds} \label{app: facts}

  \begin{fact} \label{fact: Gaussian bounds}
  (Gaussian concentration and anti-concentration bounds; well-known.) For a Gaussian distributed random variable $Z$ with mean $\mu$ and variance $\sigma^2$, for any $z > 0$, we have
\begin{equation}
\begin{array}{l}
 \mathbb{P} \left\{Z > \mu + z \sigma \right\} \le \frac{1}{2}e^{- \frac{z^2}{2}}, \quad \mathbb{P} \left\{Z < \mu - z \sigma \right\} \le \frac{1}{2}e^{- \frac{z^2}{2}}\quad,
 \end{array}
\end{equation}
and 
\begin{equation}
\begin{array}{l}
  \mathbb{P} \left\{Z > \mu + z \sigma \right\} \ge \frac{1}{\sqrt{2\pi}} \frac{z}{z^2+1} e^{- \frac{z^2}{2}} \quad.
  \end{array}
\end{equation}      
  \end{fact}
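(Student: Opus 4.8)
The plan is to reduce both displayed inequalities to the single standard normal tail function. Writing $N \sim \mathcal{N}(0,1)$, $Q(z) := \mathbb{P}\{N > z\}$, and $\phi(x) := \frac{1}{\sqrt{2\pi}} e^{-x^2/2}$, standardization gives $Z = \mu + \sigma N$, so $\mathbb{P}\{Z > \mu + z\sigma\} = Q(z)$; by the symmetry of $N$ about $0$ we also have $\mathbb{P}\{Z < \mu - z\sigma\} = Q(z)$. Hence the two concentration claims both reduce to proving $Q(z) \le \tfrac{1}{2} e^{-z^2/2}$, and the anti-concentration claim reduces to $Q(z) \ge \frac{z}{z^2+1}\phi(z)$, for all $z > 0$.

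For the concentration bound I would study $h(z) := \tfrac{1}{2} e^{-z^2/2} - Q(z)$ on $[0,\infty)$ and argue $h \ge 0$. Using $Q'(z) = -\phi(z)$, one computes $h'(z) = e^{-z^2/2}\bigl(\tfrac{1}{\sqrt{2\pi}} - \tfrac{z}{2}\bigr)$, which is positive for $z < \sqrt{2/\pi}$ and negative for $z > \sqrt{2/\pi}$. Since $h(0) = 0$ and $h(z) \to 0$ as $z \to \infty$, the function first rises off $0$ and then decreases monotonically back toward its limit $0$; a decreasing function with limit $0$ stays nonnegative, so $h \ge 0$ everywhere, which is exactly the desired bound.

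For the anti-concentration bound I would set $g(z) := Q(z) - \frac{z}{z^2+1}\phi(z)$ and differentiate. Using $Q'(z) = -\phi(z)$ and $\phi'(z) = -z\phi(z)$, the numerator collapses to a constant and yields the clean identity $g'(z) = -\phi(z)\,\frac{2}{(z^2+1)^2} < 0$. Thus $g$ is strictly decreasing, and since $g(z) \to 0$ as $z \to \infty$, we conclude $g(z) > 0$ for every finite $z > 0$, i.e.\ $Q(z) > \frac{z}{z^2+1}\phi(z)$, as claimed.

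The only genuinely subtle point is the factor $\tfrac{1}{2}$ in the concentration bound: the naive sub-Gaussian/Chernoff estimate only gives $Q(z) \le e^{-z^2/2}$, and it is precisely the sign analysis of $h'$ (together with the two matching endpoint limits at $z = 0$ and $z \to \infty$) that recovers the sharper constant. Everything else is routine differentiation and limit-checking, so I expect no real obstacle beyond this bookkeeping.
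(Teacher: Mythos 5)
Your proof is correct, and there is nothing in the paper to compare it against: the paper states this fact as ``well-known'' and gives no proof. Your reduction to the standard normal tail $Q(z)$ is sound, the derivative computations check out (in particular $g'(z) = -\tfrac{2}{(z^2+1)^2}\phi(z)$ is the right collapse for the Mills-ratio lower bound), and the monotonicity-plus-endpoint-limits argument correctly recovers the sharp constant $\tfrac12$ that a naive Chernoff bound misses. This is a standard and complete derivation; no gaps.
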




\begin{fact}
    (Lemma~5 in \citet{whitehouse2023sublinear}.) For any round $t \ge 1$, we have
    \begin{equation}
    \begin{array}{lll}
     \sum\limits_{s=1}^{t} \left\lVert \overline{V}_{s-1}^{-1/2} K(\cdot, X_s)\right\rVert^2 & \le & 2 \log \left( \det \left(I_H + V_t \right) \right) = 2 \log \left( \det \left(\overline{V}_t \right) \right) \quad.
         \end{array}
    \end{equation}
    \label{lemma: elliptical potential}
\end{fact}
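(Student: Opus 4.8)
The plan is to prove this via the standard ``elliptical potential'' telescoping argument, adapted to the operator setting. First I would abbreviate $\phi_s := K(\cdot, X_s)$ and record the recursion $\overline{V}_s = \overline{V}_{s-1} + \phi_s \phi_s^{\top}$, which follows directly from $V_s = V_{s-1} + \phi_s \phi_s^{\top}$ together with $\overline{V}_s = I_H + V_s$. The key algebraic identity is the rank-one determinant update: for a positive operator $A$ and a vector $u$, $\det(A + u u^{\top}) = \det(A)\,(1 + \langle u, A^{-1} u\rangle)$. Applying this with $A = \overline{V}_{s-1}$ and $u = \phi_s$, and noting that $\langle \phi_s, \overline{V}_{s-1}^{-1}\phi_s\rangle = \lVert \overline{V}_{s-1}^{-1/2}\phi_s\rVert^2$, yields
\[
1 + \lVert \overline{V}_{s-1}^{-1/2}\phi_s\rVert^2 = \frac{\det(\overline{V}_s)}{\det(\overline{V}_{s-1})} .
\]

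Next I would convert each squared norm into a log-ratio. Since the kernel is bounded, $\lVert \phi_s \rVert^2 = K(X_s, X_s) \le 1$, and since $\overline{V}_{s-1} = I_H + V_{s-1} \succeq I_H$ we have $\overline{V}_{s-1}^{-1} \preceq I_H$, so that $\lVert \overline{V}_{s-1}^{-1/2}\phi_s\rVert^2 \le \lVert \phi_s\rVert^2 \le 1$. On the interval $[0,1]$ the elementary inequality $x \le 2\log(1+x)$ holds, as the difference $2\log(1+x)-x$ vanishes at $0$ and has nonnegative derivative for $x \le 1$. Combining, each term satisfies $\lVert \overline{V}_{s-1}^{-1/2}\phi_s\rVert^2 \le 2\log\bigl(\det(\overline{V}_s)/\det(\overline{V}_{s-1})\bigr)$. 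Summing over $s = 1, \dotsc, t$ telescopes the logarithms:
\[
\sum_{s=1}^{t} \lVert \overline{V}_{s-1}^{-1/2}\phi_s\rVert^2 \le 2\log\frac{\det(\overline{V}_t)}{\det(\overline{V}_0)} = 2\log\det(\overline{V}_t) = 2\log\det(I_H + V_t),
\]
using $\overline{V}_0 = I_H$ and $\det(I_H) = 1$. This is exactly the claimed bound.

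The main obstacle is justifying these determinant manipulations in a possibly infinite-dimensional RKHS, where $\det$ must be interpreted as a Fredholm-type determinant rather than the usual matrix determinant. I would handle this by observing that $V_t$ has finite rank (at most $t$), so all the relevant action occurs on the finite-dimensional subspace $\operatorname{span}\{\phi_1,\dotsc,\phi_t\}$; restricting to this subspace reduces every determinant and inverse appearing above to ordinary finite-dimensional objects, whereupon the rank-one update formula applies verbatim and the identity $\det(I_H + V_t) = \det(\overline{V}_t)$ is immediate. The remaining checks, namely that the inequality $x \le 2\log(1+x)$ is invoked only where $x \le 1$ and that the telescoping base case $\det(\overline{V}_0)=1$ is correct, are routine.
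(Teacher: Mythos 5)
Your proof is correct and follows the standard elliptical-potential argument; the paper does not prove this fact itself but imports it from Lemma~5 of \citet{whitehouse2023sublinear}, whose proof is exactly this rank-one determinant update, the bound $\lVert \overline{V}_{s-1}^{-1/2}K(\cdot,X_s)\rVert^2 \le 1$ from the bounded kernel and $\overline{V}_{s-1}\succeq I_H$, the inequality $x \le 2\log(1+x)$ on $[0,1]$, and telescoping. Your handling of the infinite-dimensional determinant by restricting to the finite-rank span is also the standard resolution, so there is nothing to add.
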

\section{Proof of Theorem~\ref{theorem: expected regret}} \label{app: main theorem}
\thmexpregret*
\begin{proof}[Proof of Theorem~\ref{theorem: expected regret}]
   Let $\delta = \frac{1}{2TD^2}$. Define  $E_{t-1}$ as the event that \[ \left\lVert  \overline{V}_{t-1}^{-1/2}\sum\limits_{s = 1}^{t-1}  \varepsilon_s   K(\cdot, X_s)   \right\rVert \le  \sqrt{2R^2 \log \left( \sqrt{\det\left( \overline{V}_{t-1}\right)} /\delta \right)} \quad,\]  and let $\overline{E}_{t-1} $ be its complement.

We decompose the regret based on $E_{t-1}$ and $\overline{E}_{t-1} $. We have
 \begin{equation}
    \begin{array}{lll}
       \mathcal{R}(T)   & = &\sum\limits_{t=1}^{T}    \E \left[ f^*(x_*) - f^*(X_t) \right] \\
  &  = &\sum\limits_{t=1}^{T}    \E \left[\left( f^*(x_*) - f^*(X_t) \right) \bm{1} \left\{E_{t-1} \right\} \right] + \sum\limits_{t=1}^{T}    \E \left[\left( f^*(x_*) - f^*(X_t) \right) \bm{1} \left\{\overline{E}_{t-1} \right\} \right].
\end{array}
\label{temp 100}
  \end{equation}
The second term above is upper bounded by 
\begin{equation}
    \begin{array}{lll}
         \sum\limits_{t=1}^{T}    \E \left[\left( f^*(x_*) - f^*(X_t) \right) \bm{1} \left\{\overline{E}_{t-1} \right\} \right] 
     &  =  & \sum\limits_{t=1}^{T}    \E \left[\left\langle K(\cdot, x_*) - K(\cdot, X_t), f^*  \right\rangle \bm{1} \left\{\overline{E}_{t-1} \right\} \right] \\
& \le   & \sum\limits_{t=1}^{T}    \E \left[\left( \left\lVert K(\cdot, x_*) \right\rVert +\left\lVert K(\cdot, X_t) \right\rVert \right) \cdot  \left\lVert f^*  \right\rVert \bm{1} \left\{\overline{E}_{t-1} \right\} \right] \\    
& \le   & \sum\limits_{t=1}^{T}    \E \left[2D^2 \cdot \bm{1} \left\{\overline{E}_{t-1} \right\} \right] \\
&\le   & 2D^2 \sum\limits_{t=1}^{T}    \underbrace{\mathbb{P}  \left\{\overline{E}_{t-1} \right\}}_{\text{Theorem~\ref{concentration}}} \\
& \le & 2D^2T\delta \\
& \le & 1\quad,
    \end{array}
\end{equation}
where the second last inequality uses 
Theorem~\ref{concentration}, stating that, for any $t \ge 1$, we have $\mathbb{P}\left\{\overline{E}_{t-1}  \right\} \le \delta$.

  \

Now, we continue decomposing the regret in first term of (\ref{temp 100}) by introducing $\pm \hat{f}_{t-1}(X_t)$ and $\pm \tilde{f}_t(X_t)$. We have
\begin{multline}
    \sum\limits_{t=1}^{T}    \E \left[\left( f^*(x_*) - f^*(X_t) \right)  \cdot \bm{1} \left\{E_{t-1} \right\}\right]
      \le \underbrace{\sum_{t=1}^{T} \E \left[ \left(\hat{f}_{t-1}(X_t) - f^*(X_t) \right)  \bm{1} \left\{E_{t-1} \right\} \right]}_{I_3}
      \\
      + \underbrace{\sum_{t=1}^{T} \E\left[\tilde{f}_{t}(X_t) -  \hat{f}_{t-1}(X_t)  \right]}_{I_2}
      + 
 \underbrace{\sum_{t=1}^{T} \E \left[ f^*(x_*)  -  \tilde{f}_t(X_t)  \right] }_{I_1}.
    \end{multline}
We prepare a lemma for each term above.
\begin{lemma}
We have 
  \[
      I_3 = \sum\limits_{t=1}^{T}\E \left[\left(\hat{f}_{t-1}(X_t) - f^*(X_t)\right) \cdot  \bm{1} \left\{E_{t-1} \right\}   \right] \le \left( \sqrt{2R^2 \gamma_T + 2R^2 \ln(2TD^2)}  + D  \right)\sqrt{4T \gamma_T}.
\]
    \label{lemma: I_3}
    \end{lemma}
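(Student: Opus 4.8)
The plan is to first rewrite the estimation error $\hat f_{t-1} - f^*$ in a form that cleanly separates a deterministic bias from the stochastic noise. Starting from \eqref{eq: empirical} and recalling that $\sum_{s=1}^{t-1} K(\cdot, X_s) K(\cdot, X_s)^\top = V_{t-1}$ with $\overline V_{t-1} = I_H + V_{t-1}$, I would use the operator identity $\overline V_{t-1}^{-1} V_{t-1} = I_H - \overline V_{t-1}^{-1}$ to obtain
\[
  \hat f_{t-1} - f^* = -\,\overline V_{t-1}^{-1} f^* + \overline V_{t-1}^{-1} \sum_{s=1}^{t-1} \varepsilon_s K(\cdot, X_s).
\]
Applying the reproducing property then gives $\hat f_{t-1}(X_t) - f^*(X_t) = \langle \hat f_{t-1} - f^*,\, K(\cdot, X_t)\rangle$, which splits into a bias term and a noise term.

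For the bias term I would write it as the weighted inner product $\langle \overline V_{t-1}^{-1/2} f^*,\, \overline V_{t-1}^{-1/2} K(\cdot, X_t)\rangle$ and apply Cauchy--Schwarz. Since $\overline V_{t-1} \succeq I_H$ implies $\overline V_{t-1}^{-1} \preceq I_H$, we get $\|\overline V_{t-1}^{-1/2} f^*\|^2 = \langle f^*, \overline V_{t-1}^{-1} f^*\rangle \le \|f^*\|^2 \le D^2$, so the bias is at most $D \cdot \|\overline V_{t-1}^{-1/2} K(\cdot, X_t)\|$. For the noise term I would again use weighted Cauchy--Schwarz to factor out $\|\overline V_{t-1}^{-1/2} K(\cdot, X_t)\|$; the remaining factor $\|\overline V_{t-1}^{-1/2}\sum_{s=1}^{t-1}\varepsilon_s K(\cdot, X_s)\|$ is controlled \emph{on the event $E_{t-1}$ by its very definition}, namely by $\sqrt{2R^2\log(\sqrt{\det(\overline V_{t-1})}/\delta)}$. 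Bounding $\tfrac12\log\det(\overline V_{t-1}) \le \gamma_{t-1} \le \gamma_T$ via \eqref{def: RKHS complexity} and substituting $\delta = \tfrac{1}{2TD^2}$ turns this factor into $\sqrt{2R^2\gamma_T + 2R^2\ln(2TD^2)}$.

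Combining the two pieces, on $E_{t-1}$ the summand is at most $\bigl(\sqrt{2R^2\gamma_T + 2R^2\ln(2TD^2)} + D\bigr)\,\|\overline V_{t-1}^{-1/2} K(\cdot, X_t)\|$, with the prefactor constant in $t$, so it can be pulled out of the sum. It then remains to sum $\|\overline V_{t-1}^{-1/2} K(\cdot, X_t)\|$ over $t$. Here I would apply Cauchy--Schwarz across the index $t$,
\[
  \sum_{t=1}^T \bigl\|\overline V_{t-1}^{-1/2} K(\cdot, X_t)\bigr\| \le \sqrt{\,T \sum_{t=1}^T \bigl\|\overline V_{t-1}^{-1/2} K(\cdot, X_t)\bigr\|^2\,},
\]
and invoke the elliptical potential bound of Fact~\ref{lemma: elliptical potential} together with $\log\det(\overline V_T) \le 2\gamma_T$ to conclude $\sum_{t=1}^T \|\overline V_{t-1}^{-1/2} K(\cdot, X_t)\|^2 \le 4\gamma_T$, hence the sum of norms is at most $\sqrt{4T\gamma_T}$. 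Taking expectations (the indicator can only drop terms) yields the stated bound.

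The main obstacle—really the only non-routine step—is the operator identity $\overline V_{t-1}^{-1} V_{t-1} = I_H - \overline V_{t-1}^{-1}$, which isolates the clean bias/noise decomposition and lets the $E_{t-1}$ bound plug directly into the noise factor; everything downstream is weighted Cauchy--Schwarz plus the elliptical potential lemma. Some care is also needed to justify the operator inequality $\overline V_{t-1}^{-1} \preceq I_H$ in the possibly infinite-dimensional RKHS, but this is immediate because $V_{t-1}$ is a positive semidefinite operator.
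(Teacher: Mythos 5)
Your proposal is correct and follows essentially the same route as the paper: the same bias/noise decomposition via $f^* = \overline V_{t-1}^{-1}f^* + \overline V_{t-1}^{-1}V_{t-1}f^*$, the same weighted Cauchy--Schwarz bounds with the event $E_{t-1}$ controlling the noise factor, and the same combination of Cauchy--Schwarz over $t$ with the elliptical potential bound and $\log\det(\overline V_T)\le 2\gamma_T$. No gaps.
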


\begin{lemma}
    We have 
    \[
          I_2 =  \sum\limits_{t=1}^{T}\E \left[ \tilde{f}_{t}(X_t) -  \hat{f}_{t-1}(X_t) \right] \le C_{2,T}  \cdot \left( \sqrt{2R^2 \gamma_T + 2R^2\log(2) }  + D  \right)  \cdot  \sqrt{ 4T\gamma_T}.
    \]
    \label{lemma: I_2}
    \end{lemma}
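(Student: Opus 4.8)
The plan is to exploit the clean algebraic form of the summand. By the definition of the randomized estimator in \eqref{eq: randomized estimator}, $\tilde f_t = \hat f_{t-1} + w_t\, g_{t-1}$, so $\tilde f_t(X_t) - \hat f_{t-1}(X_t) = w_t\, g_{t-1}(X_t)$. Since $g_{t-1}(x) \ge 0$ for every $x$, I would immediately bound $w_t\, g_{t-1}(X_t) \le |w_t|\, g_{t-1}(X_t) \le \bigl(\max_{s\in[T]}|w_s|\bigr) g_{t-1}(X_t)$, which after summing and taking expectations gives
\[
I_2 \le \E\Bigl[\max_{s\in[T]}|w_s| \cdot \textstyle\sum_{t=1}^T g_{t-1}(X_t)\Bigr].
\]

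Next I would control $\sum_t g_{t-1}(X_t)$ by a purely deterministic quantity. From \eqref{eq: g3} the scalar prefactor of $g_{t-1}$ is $\sqrt{2R^2\log(2\sqrt{\det(\overline V_{t-1})})} + D$; writing $\log(2\sqrt{\det \overline V_{t-1}}) = \log 2 + \tfrac12\log\det \overline V_{t-1}$ and using $\tfrac12\log\det\overline V_{t-1} \le \gamma_{t-1} \le \gamma_T$ (which follows from the definition of $\gamma_T$ in \eqref{def: RKHS complexity} and the monotonicity of $\det \overline V_{t-1}$ in $t$), this prefactor is at most $\sqrt{2R^2\gamma_T + 2R^2\log 2} + D$, exactly the constant in the statement. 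Hence $g_{t-1}(X_t) \le (\sqrt{2R^2\gamma_T + 2R^2\log 2}+D)\,\lVert \overline V_{t-1}^{-1/2} K(\cdot, X_t)\rVert$. Summing over $t$, applying Cauchy--Schwarz, and invoking the elliptical-potential bound of Fact~\ref{lemma: elliptical potential} gives
\[
\sum_{t=1}^T \bigl\lVert \overline V_{t-1}^{-1/2} K(\cdot, X_t)\bigr\rVert \le \sqrt{T}\sqrt{\textstyle\sum_{t=1}^T \lVert \overline V_{t-1}^{-1/2} K(\cdot, X_t)\rVert^2} \le \sqrt{T}\sqrt{2\log\det\overline V_T} \le \sqrt{4T\gamma_T}.
\]
Combining yields the \emph{pathwise} bound $\sum_{t=1}^T g_{t-1}(X_t) \le (\sqrt{2R^2\gamma_T+2R^2\log 2}+D)\sqrt{4T\gamma_T} =: B$, valid for every realization of the arm sequence $X_1,\dots,X_T$.

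Finally, since $B$ is a deterministic constant and $\max_{s\in[T]}|w_s|\ge 0$, I would replace the random sum by $B$ inside the expectation: $\max_s|w_s|\cdot\sum_t g_{t-1}(X_t) \le B\cdot\max_s|w_s|$ holds pathwise, so $I_2 \le B\,\E[\max_s|w_s|] = C_{2,T}\,B$, which is precisely the claimed bound.

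I expect the main obstacle to be the apparent coupling between $\max_{s\in[T]}|w_s|$ and the arm sequence: each $X_t = \arg\max_x \tilde f_t(x)$ depends on the very weights $w_t$, so one cannot naively factor the expectation into a product. The resolution — and the crux of the argument — is that the elliptical-potential/Cauchy--Schwarz bound on $\sum_t g_{t-1}(X_t)$ holds \emph{uniformly over all admissible arm sequences}, i.e.\ it is a deterministic constant $B$ that does not depend on the realized $w_t$'s. This uniformity decouples the weights from the kernel geometry and legitimizes pulling $B$ out before taking the expectation of the maximum, converting the remaining factor directly into $C_{2,T} = \E[\max_{s\in[T]}|w_s|]$.
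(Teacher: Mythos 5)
Your proposal is correct and follows essentially the same route as the paper: bound $w_t\,g_{t-1}(X_t)\le\max_s|w_s|\cdot g_{t-1}(X_t)$, control $\sum_t g_{t-1}(X_t)$ by a deterministic quantity via Cauchy--Schwarz and the elliptical-potential bound of Fact~\ref{lemma: elliptical potential}, and pull that constant out to leave $C_{2,T}$. The only cosmetic difference is that you extract the scalar prefactor of $g_{t-1}$ before applying Cauchy--Schwarz, whereas the paper applies Cauchy--Schwarz to $\sum_t g_{t-1}(X_t)\le\sqrt{T\sum_t g_{t-1}^2(X_t)}$ first; both yield the identical bound.
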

\begin{lemma}
    We have 
    \[
             I_1  =  \sum\limits_{t=1}^{T} \E \left[f^*(x_*) - \tilde{f}_t(X_t)  \right] 
               \le  2C_{3,T} \cdot  \left( \sqrt{2R^2 \gamma_T + 2R^2\log(2) }  + D  \right)  \cdot  \sqrt{ 4T\gamma_T}.
    \] \label{lemma: I_1}
\end{lemma}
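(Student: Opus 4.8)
The plan is to bound $I_1$ round-by-round and then sum. Since $X_t = \arg\max_x \tilde f_t(x)$, we always have $\tilde f_t(X_t) \ge \tilde f_t(x_*) = \hat f_{t-1}(x_*) + w_t\, g_{t-1}(x_*)$, so the per-round gap $f^*(x_*) - \tilde f_t(X_t)$ is naturally controlled by the confidence width $g_{t-1}(x_*)$ \emph{at the unknown optimal arm}. The essential difficulty is that $g_{t-1}(x_*)$ cannot be summed directly: the elliptical potential bound (Fact~\ref{lemma: elliptical potential}) only controls $\sum_t g_{t-1}(X_t)$ evaluated at the \emph{played} arms. The core of the proof is therefore a single per-round inequality that converts the width at $x_*$ into the expected width at $X_t$, at the price of a $1/C_{1,t}$ factor; everything after that is bookkeeping.

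\textbf{Per-round optimism (the hard step).} First I would establish, for each fixed round $t$, the inequality
\[
f^*(x_*) - \E\!\left[\tilde f_t(X_t)\right] \le 2\,\E\!\left[\tfrac{|w_t|}{C_{1,t}}\, g_{t-1}(X_t)\right] + 2\,\E\!\left[\tfrac{|w_t|}{C_{1,t}}\right]\E\!\left[g_{t-1}(X_t)\right].
\]
Its two ingredients are a \emph{concentration} statement and an \emph{anti-concentration} statement. Applying Theorem~\ref{concentration} at the constant level $\delta = 0.5$ encoded in the definition of $g_{t-1}$ gives, with probability at least $\tfrac12$ over the data, the one-sided confidence bound $f^*(x_*) \le \hat f_{t-1}(x_*) + g_{t-1}(x_*)$; on this event, condition~\eqref{condition: exploration}, namely $\mathbb{P}\{w_t \ge 1\} = C_{1,t}$, makes the sampled function optimistic at $x_*$, so that $\tilde f_t(x_*) \ge f^*(x_*)$ with probability at least $C_{1,t}$. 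I would then adapt the stochastic-optimism argument of \citet{russo2019worst}: conditioning on $\{w_t \ge 1\}$ (probability $C_{1,t}$), the maximizer $X_t$ carries at least as much exploration incentive as the deterministic UCB arm, so the width at $x_*$ can be charged to $\tfrac{1}{C_{1,t}}\E[|w_t|\,g_{t-1}(X_t)]$, while the second, decoupled term absorbs the discrepancy $\hat f_{t-1}(x_*) - f^*(x_*)$ through an independent copy of $w_t$.

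\textbf{Main obstacle.} The delicate point is exactly that the confidence bound is invoked at a \emph{constant} failure probability $\delta = 0.5$ rather than a shrinking one, so one cannot simply condition on a global good event as in a UCB analysis. The randomness of $w_t$ must supply the missing exploration on the half of trajectories where the confidence bound fails, and the maximizer $X_t$ itself depends on $w_t$; hence the conversion from the fixed arm $x_*$ to the played arm $X_t$ has to be carried out inside the expectation over $w_t$ rather than conditionally. Pinning down the constants and the coupling here is where the real work lies, and it is precisely the step that distinguishes randomized exploration from the standard deterministic UCB argument.

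\textbf{Summation.} Given the per-round bound, I would sum over $t$ and factor out the randomness using $\tfrac{|w_t|}{C_{1,t}} \le \max_{s\in[T]} \tfrac{|w_s|}{C_{1,s}}$ together with a deterministic bound on $\sum_t g_{t-1}(X_t)$. Since $\det(\overline V_{t-1}) \le \det(\overline V_T)$ and $\tfrac12\log\det(\overline V_T) \le \gamma_T$, the prefactor of $g_{t-1}$ is at most $\sqrt{2R^2\gamma_T + 2R^2\log(2)} + D$; Cauchy--Schwarz and Fact~\ref{lemma: elliptical potential} then give $\sum_t \lVert \overline V_{t-1}^{-1/2}K(\cdot,X_t)\rVert \le \sqrt{4T\gamma_T}$, so that $\sum_t g_{t-1}(X_t) \le (\sqrt{2R^2\gamma_T + 2R^2\log(2)} + D)\sqrt{4T\gamma_T}$ holds for every trajectory. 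Taking expectations (pulling the data-dependent factor out against this deterministic bound) and recognizing $C_{3,T} = \E[\max_{s\in[T]}\tfrac{|w_s|}{C_{1,s}}] + \max_{s\in[T]}\E[\tfrac{|w_s|}{C_{1,s}}]$ collapses the two summed terms into the claimed $2C_{3,T}(\sqrt{2R^2\gamma_T + 2R^2\log(2)} + D)\sqrt{4T\gamma_T}$.
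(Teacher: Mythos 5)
Your proposal follows essentially the same route as the paper: the per-round inequality you state is exactly the paper's Lemma~\ref{lemma: appro}, which is proved there by the same three ingredients you identify — Markov's inequality applied to $\left( \max_{x}\tilde f_t(x) - \E\left[\max_{x}\tilde f_t(x)\right] \right)^{+}$, the anti-concentration bound $\mathbb{P}\{\max_{x}\tilde f_t(x) \ge f^*(x_*)\} \ge 0.5\,C_{1,t}$ (from Theorem~\ref{concentration} at $\delta = 0.5$ plus condition~\eqref{condition: exploration}), and a ghost-sample (independent copy of $w_t$) decoupling. Your summation step (H\"older to extract $\max_{t}|w_t|/C_{1,t}$, Cauchy--Schwarz, and the elliptical-potential bound on $\sum_t g_{t-1}^2(X_t)$) likewise matches the paper's proof verbatim.
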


Combining the above results gives that $\mathcal R(T)$ is at most
\begin{multline*}
            2(C_{2,T} + 2C_{3,T}) \left( \sqrt{2R^2 \gamma_T + 2R^2\log(2) }  + D  \right)\sqrt{T\gamma_T}
             + 2\left( \sqrt{2R^2 \gamma_T + 2R^2 \ln(2TD^2)}  + D  \right)\sqrt{ T \gamma_T } + 1\\
              = O \left( (C_{2,T} + C_{3,T})\sqrt{T\gamma_T} \left( \sqrt{R^2 \gamma_T  }  + D  \right)
             +  (\sqrt{R^2 \gamma_T + R^2 \ln(TD)}  + D  )\sqrt{T \gamma_T}  \right),
    \end{multline*}
 which concludes the proof.\end{proof}

\subsection{Proof of Lemma~\ref{lemma: I_3}  }

\begin{proof}[Proof of Lemma~\ref{lemma: I_3}]

Before presenting the detailed proof, we first construct an upper bound  for $   \left\lvert \hat{f}_{t-1}(X_t)  - f^*(X_t) \right\rvert$.   From 
    \begin{align*}
    &  \hat{f}_{t-1}(X_t)  - f^*(X_t) \\
     &=  \left\langle \overline{V}_{t-1}^{-1} \sum\limits_{s = 1}^{t-1}  K(\cdot, X_s)  K(\cdot, X_s)^{\top}f^* + \overline{V}_{t-1}^{-1} \sum\limits_{s = 1}^{t-1} \varepsilon_s K(\cdot, X_s) -   f^*,  K(\cdot, X_t) \right\rangle  \\
    &=  \left\langle  \overline{V}_{t-1}^{-1}  \sum\limits_{s = 1}^{t-1}  \varepsilon_s   K(\cdot, X_s),  K(\cdot, X_t) \right\rangle \\
    &+  \left\langle \overline{V}_{t-1}^{-1} \sum\limits_{s = 1}^{t-1}  K(\cdot, X_s)  K(\cdot, X_s)^{\top}f^* -  \overline{V}_{t-1}^{-1}  f^*-  \overline{V}_{t-1}^{-1} \sum\limits_{s=1}^{t-1} K(\cdot, X_s)K(\cdot, X_s)^{\top} f^*,  K(\cdot, X_t) \right\rangle  \\
                    &=  \left\langle  \overline{V}_{t-1}^{-1}  \sum\limits_{s = 1}^{t-1}  \varepsilon_s   K(\cdot, X_s),  K(\cdot, X_t)  \right\rangle -  \left\langle    \overline{V}_{t-1}^{-1}  f^*,  K(\cdot, X_t) \right\rangle
            \quad,
          \end{align*}
        we have
          \begin{align}
                    \left\lvert \hat{f}_{t-1}(X_t)  - f^*(X_t) \right\rvert & \le \left\lVert K(\cdot, X_t)^{\top}\overline{V}_{t-1}^{-1/2} \right\rVert \cdot \left\lVert \overline{V}_{t-1}^{-1/2}  \sum\limits_{s = 1}^{t-1}  \varepsilon_s   K(\cdot, X_s) \right\rVert + \left\lVert f^* \right\rVert \left\lVert K(\cdot, X_t)^{\top} \overline{V}_{t-1}^{-1}  \right\rVert
                    \notag
                    \\
                    & \le \left\lVert K(\cdot, X_t)^{\top}\overline{V}_{t-1}^{-1/2} \right\rVert \cdot \left(\left\lVert \overline{V}_{t-1}^{-1/2}  \sum\limits_{s = 1}^{t-1}  \varepsilon_s   K(\cdot, X_s) \right\rVert + D \right) 
                    \quad. 
              \label{temp 101}
          \end{align}
Now, we have
    \begin{equation}
        \begin{array}{lll}
             I_3 & = &  \sum\limits_{t=1}^{T}\E \left[\left(\hat{f}_{t-1}(X_t) - f^*(X_t)\right) \cdot  \bm{1} \left\{E_{t-1} \right\}   \right] \\
             && \\
             & \le &  \sum\limits_{t=1}^{T}\E \left[\left|\hat{f}_{t-1}(X_t) - f^*(X_t)\right| \cdot  \bm{1} \left\{E_{t-1} \right\}   \right] \\
             && \\
& \le^{(a)} &  \sum\limits_{t=1}^{T}\E \left[\left\lVert K(\cdot, X_t)^{\top}\overline{V}_{t-1}^{-1/2} \right\rVert \cdot \left(\left\lVert \overline{V}_{t-1}^{-1/2}  \sum\limits_{s = 1}^{t-1}  \varepsilon_s   K(\cdot, X_s) \right\rVert + D \right)  \cdot  \bm{1} \left\{E_{t-1} \right\}   \right] \\
 && \\
              
             & \le^{(b)}  &   \sum\limits_{t=1}^{T}\E \left[\left\lVert \overline{V}_{t-1}^{-1/2}K(\cdot, X_t) \right\rVert \cdot \left( \sqrt{2R^2 \log \left( \sqrt{\det\left( \overline{V}_{t-1}\right)} /\delta \right)}  + D  \right) \right]  \\
            & \le^{(c)}  &   \left( \sqrt{2R^2 \gamma_T + 2R^2 \ln(1/\delta)}  + D  \right)\E \left[\sum\limits_{t=1}^{T}\left\lVert \overline{V}_{t-1}^{-1/2}K(\cdot, X_t) \right\rVert \right] \\
            & \le^{(d)}  &  \left( \sqrt{2R^2 \gamma_T + 2R^2 \ln(1/\delta)}  + D  \right)\E \left[\sqrt{T \cdot  \sum\limits_{t=1}^{T}\left\lVert \overline{V}_{t-1}^{-1/2}K(\cdot, X_t) \right\rVert^2} \right] \\ & \le^{(e)}  &   \left( \sqrt{2R^2 \gamma_T + 2R^2 \ln(1/\delta)}  + D  \right)\E \left[ \sqrt{T \cdot 2\log \left( \det\left( \overline{V}_{T}\right)  \right)  } \right] \\
            & \le^{(f)} &  \left( \sqrt{2R^2 \gamma_T + 2R^2 \ln(1/\delta)}  + D  \right)\sqrt{4T \gamma_T}\quad.
        \end{array}
    \end{equation}
    Step (a) uses the constructed upper bound shown in (\ref{temp 101}) and step (b) 
 uses the argument that if event $E_{t-1}$ is true, we have $ \left\lVert  \overline{V}_{t-1}^{-1/2}\sum\limits_{s = 1}^{t-1}  \varepsilon_s   K(\cdot, X_s)   \right\rVert \le  \sqrt{2R^2 \log \left( \sqrt{\det\left( \overline{V}_{t-1}\right)} /\delta \right)}$.
 Step (c) uses (\ref{def: RKHS complexity}), stating that  $ \log \left(\det \left(\overline{V}_{t-1}\right) \right)  \le  2\gamma_T $ and step (d) uses Cauchy-Schwarz inequality. Step (e) uses Fact~\ref{lemma: elliptical potential} and step (f) uses $ \log \left(\det \left(\overline{V}_{T}\right) \right)  \le  2\gamma_T $.
 Plugging in $\delta = \frac{1}{2TD^2}$ concludes the proof. \end{proof}

\subsection{Proof of Lemma~\ref{lemma: I_2}}

    \begin{proof}[Proof of Lemma~\ref{lemma: I_2}]
        We have
 \begin{equation}
        \begin{array}{lll}
            I_2 & = &\sum\limits_{t=1}^{T}\E \left[ \tilde{f}_{t}(X_t) -  \hat{f}_{t-1}(X_t) \right] \\
           & = & \sum\limits_{t=1}^{T}\E \left[w_{t}\cdot g_{t-1}(X_t)  \right] \\
            & \le & \E \left[\sum\limits_{t=1}^{T} |w_t| \cdot g_{t-1}(X_t)  \right] \\
            & \le^{(a)} & \E \left[ \mathop{\max}_{t \in [T]} |w_t| \cdot \sum\limits_{t=1}^{T} g_{t-1}(X_t)  \right] \\
            & \le^{(b)} & \E \left[ \mathop{\max}_{t \in [T]} |w_t|   \cdot \sqrt{T \cdot \sum\limits_{t=1}^{T} g^2_{t-1}(X_t)}  \right] \\
            & \le^{(c)} & \E \left[ \mathop{\max}_{t \in [T]} |w_t|   \cdot \sqrt{T \cdot \left( \sqrt{2R^2 \gamma_T + 2R^2\log(2) }  + D  \right)^2  \cdot 4 \gamma_T }  \right] \\
            & = & \sqrt{T \cdot \left( \sqrt{2R^2 \gamma_T + 2R^2\log(2) }  + D  \right)^2  \cdot 4 \gamma_T } \cdot  \E \left[ \mathop{\max}_{t \in [T]} |w_t|      \right] \\
           & =^{(d)} & C_{2,T}  \cdot \left( \sqrt{2R^2 \gamma_T + 2R^2\log(2) }  + D  \right)  \cdot  \sqrt{T \cdot 4\gamma_T} \quad,
        \end{array}
    \end{equation}
    where step (a) uses Holder's inequality, step (b) uses Cauchy-Schwarz inequality, and step (d) uses $C_{2,T} = \E \left[\mathop{\max}\limits_{t \in [T]}|w_{t}| \right]$.
Step (c) uses the fact that 
\begin{equation}
    \begin{array}{lll}
         \sum\limits_{t=1}^{T} g^2_{t-1}(X_t) &= &\sum\limits_{t=1}^{T}  \left( \sqrt{2R^2 \log \left(2 \sqrt{\det\left( \overline{V}_{t-1}\right)}  \right)}  + D  \right)^2 \cdot \left\lVert  \overline{V}_{t-1}^{-1/2} K(\cdot, X_t)\right\rVert^2 \\
         & \le &\left( \sqrt{2R^2 \log \left(2 \sqrt{\det\left( \overline{V}_{T}\right)}  \right)}  + D  \right)^2 \sum\limits_{t=1}^{T}    \left\lVert  \overline{V}_{t-1}^{-1/2} K(\cdot, X_t)\right\rVert^2 \\
      & \le^{(e)} &\left( \sqrt{2R^2 \log \left(2 \sqrt{\det\left( \overline{V}_{T}\right)}  \right)}  + D  \right)^2 \cdot  2 \log \left( \det \left(\overline{ V}_T \right)  \right) \\
      & \le^{(f)} & \left( \sqrt{2R^2 \gamma_T + 2R^2\log(2) }  + D  \right)^2  \cdot 4 \gamma_T\quad,  
    \end{array}
    \label{eq: TMPR}
\end{equation}
where step (e) uses Fact~\ref{lemma: elliptical potential} and step (f) uses $ \log \left(\det \left(\overline{V}_{T}\right) \right)  \le  2\gamma_T $.\end{proof}

\subsection{Proof of Lemma~\ref{lemma: I_1}}
\begin{proof}[Proof of Lemma~\ref{lemma: I_1}]

The first step in our proof will rely on the
following technical lemma, inspired by \citet{russo2019worst}.

 \begin{lemma}
We have
$f^*(x_*) \le \E \left[ \tilde{f}_{t}(X_t) \right]  +    \E \left[\frac{2 |w_t|}{C_{1,t}}  \cdot  g_{t-1}(X_t) \right] + \E \left[ \frac{2|w_t|}{C_{1,t}} \right] \cdot \E \left[   g_{t-1}(X_t) \right]$. 
\label{lemma: appro}
\end{lemma}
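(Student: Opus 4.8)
The plan is to argue conditionally on the history $\mathcal{F}_{t-1}$, so that $\hat f_{t-1}$, $g_{t-1}$ and $x_*$ are fixed and the only randomness is the scalar $w_t$; all expectations below are over $w_t \sim P_{w,t}$. First I would establish optimism at the optimal arm. Writing $U_t(x) := \hat f_{t-1}(x) + g_{t-1}(x)$, the Cauchy--Schwarz estimate (\ref{temp 101}) together with Theorem~\ref{concentration} applied at failure level $\delta = \tfrac12$ (the level baked into the constant $2\sqrt{\det(\overline V_{t-1})}$ inside $g_{t-1}$) gives $f^*(x_*) \le U_t(x_*)$ whenever the self-normalized bound holds. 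Next I would invoke the exploration condition (\ref{condition: exploration}): since $g_{t-1}(x_*) \ge 0$, on the event $\{w_t \ge 1\}$ we have $\tilde f_t(x_*) = \hat f_{t-1}(x_*) + w_t\, g_{t-1}(x_*) \ge U_t(x_*)$, and since $X_t$ maximizes $\tilde f_t$ this forces $\tilde f_t(X_t) \ge \tilde f_t(x_*) \ge U_t(x_*)$ there. Multiplying by $\mathbf 1\{w_t \ge 1\}$ and taking expectations gives $C_{1,t}\, U_t(x_*) \le \E[\tilde f_t(X_t)\mathbf 1\{w_t\ge1\}]$, hence $f^*(x_*) \le \tfrac{1}{C_{1,t}}\E[\tilde f_t(X_t)\mathbf 1\{w_t\ge1\}]$.

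The key algebraic step is to peel off the full expectation. Because $\E[\mathbf 1\{w_t\ge1\} - C_{1,t}] = 0$, I would rewrite
\[ \frac{1}{C_{1,t}}\E\!\left[\tilde f_t(X_t)\mathbf 1\{w_t\ge1\}\right] = \E\!\left[\tilde f_t(X_t)\right] + \frac{1}{C_{1,t}}\operatorname{Cov}\!\left(\tilde f_t(X_t),\, \mathbf 1\{w_t\ge1\}\right), \]
so that the claim reduces to showing $\operatorname{Cov}(\tilde f_t(X_t),\mathbf 1\{w_t\ge1\}) \le 2\,\E[|w_t|\, g_{t-1}(X_t)] + 2\,\E[|w_t|]\,\E[g_{t-1}(X_t)]$.

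I expect this covariance bound to be the main obstacle, since it is exactly the mechanism that converts uncertainty at the unplayed arm $x_*$ into uncertainty at the played arm $X_t$. The plan is to exploit that, as a function of the scalar $w$, the quantity $h(w) := \max_{x}\bigl[\hat f_{t-1}(x) + w\, g_{t-1}(x)\bigr] = \tilde f_t(X_t)$ is convex and non-decreasing, with subgradient $g_{t-1}(X_t(w)) \ge 0$ by an envelope (Danskin) argument. Introducing an independent copy $w'$ of $w_t$ and using the comonotone identity $\operatorname{Cov}(a(W),b(W)) = \tfrac12\,\E_{W,W'}[(a(W)-a(W'))(b(W)-b(W'))]$, the covariance collapses to $\E_{w,w'}[(h(w)-h(w'))\,\mathbf 1\{w \ge 1 > w'\}]$. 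Applying the convex tangent inequality $h(w)-h(w') \le g_{t-1}(X_t(w))\,(w-w')$ on $\{w \ge 1 > w'\}$, bounding $w - w' \le |w| + |w'|$, and using independence then splits this into a coupled term $\E[|w_t|\, g_{t-1}(X_t)]$ and a decoupled term $\E[|w_t|]\,\E[g_{t-1}(X_t)]$. Dividing by $C_{1,t}$ yields $\E[\tfrac{|w_t|}{C_{1,t}} g_{t-1}(X_t)] + \E[\tfrac{|w_t|}{C_{1,t}}]\E[g_{t-1}(X_t)]$, which sits a factor of two inside the stated bound, so the inequality follows with slack to spare; that slack comfortably absorbs the conditioning on the confidence event, whose complement has probability at most $\tfrac12$ and is handled in the surrounding decomposition. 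The optimism and anti-concentration steps are routine; the convexity/envelope identification of the covariance as the $x_* \to X_t$ conversion is the crux.
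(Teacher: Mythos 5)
Your second half --- the comonotone covariance identity plus convexity of $h(w) = \max_x[\hat f_{t-1}(x) + w\,g_{t-1}(x)]$ and the envelope bound on its subgradient --- is sound and is a legitimate, somewhat more structured alternative to the paper's ``ghost sample'' decoupling (the paper introduces an independent copy $\tilde w_t$ and bounds $\E[(\max_x\tilde f_t(x)-\E[\max_x\tilde f_t(x)])^+]$ by a triangle-inequality chain; your route reaches essentially the same two terms). The genuine gap is in the first half, in how you dispose of the confidence event. You condition on $\mathcal F_{t-1}$ and establish $f^*(x_*) \le U_t(x_*)$ only ``whenever the self-normalized bound holds,'' then assert at the end that the factor-of-two slack ``comfortably absorbs'' the histories where it fails. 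It cannot. On a history where the self-normalized bound fails, $\hat f_{t-1}(x_*)$ can underestimate $f^*(x_*)$ by an arbitrarily large amount (the noise norm $\lVert \overline V_{t-1}^{-1/2}\sum_s \varepsilon_s K(\cdot,X_s)\rVert$ has unbounded support while $g_{t-1}(x_*)$ does not grow with it), so your conditional inequality $f^*(x_*) \le \E[\tilde f_t(X_t)\mid\mathcal F_{t-1}] + \text{corrections}$ fails by an arbitrary margin on those histories, and no constant-factor slack in the correction terms survives averaging over $\mathcal F_{t-1}$. Concretely, $\E[\tilde f_t(X_t)\mathbf 1\{w_t\ge 1\}] \ge 0.5\,C_{1,t}\,f^*(x_*) + (\text{an uncontrolled negative contribution from }\overline G_{t-1})$, which is not a usable lower bound.

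The paper avoids this by never asserting anything conditionally on a good history. It proves the \emph{unconditional} anti-concentration bound $\mathbb P\{\max_x\tilde f_t(x)\ge f^*(x_*)\}\ge 0.5\,C_{1,t}$ (Lemma~\ref{lemma: anti}, multiplying $\mathbb P\{G_{t-1}\}\ge 1/2$ by $\mathbb P\{w_t\ge 1\}=C_{1,t}$ using independence of $w_t$ from $\mathcal F_{t-1}$), and then applies Markov's inequality to the \emph{nonnegative} random variable $\bigl(\max_x\tilde f_t(x)-\E[\max_x\tilde f_t(x)]\bigr)^+$ to get $r(t)\le \frac{2}{C_{1,t}}\E\bigl[(\max_x\tilde f_t(x)-\E[\max_x\tilde f_t(x)])^+\bigr]$; nonnegativity is exactly what makes the bad histories harmless. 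To repair your argument, replace the step ``$f^*(x_*)\le \frac{1}{C_{1,t}}\E[\tilde f_t(X_t)\mathbf 1\{w_t\ge1\}]$'' with this Markov-on-the-positive-part step (which is where the factor of $2$ actually comes from); your covariance machinery can then be grafted onto the resulting expectation of the positive part in place of the paper's decoupling chain.
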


Now, we are ready to prove Lemma~\ref{lemma: I_1}. We have 
    \begin{equation}
        \begin{array}{lll}
             I_1 & = & \sum\limits_{t=1}^{T} f^*(x_*) - \E \left[\tilde{f}_t(X_t)  \right] \\
             & \le^{(a)} & \sum\limits_{t=1}^{T}    \E \left[ \frac{2|w_t|}{C_{1,t}}   \cdot  g_{t-1}(X_t) \right] + \sum\limits_{t=1}^{T}  \E \left[ \frac{2|w_t|}{C_{1,t}}   \right] \cdot \E \left[   g_{t-1}(X_t) \right] \\
              & \le^{(b)} & 2 \E \left[ \mathop{\max}_{t \in [T]} |\frac{ w_t}{C_{1,t}}| \cdot \sum\limits_{t=1}^{T} g_{t-1}(X_t)  \right] + 2 \E \left[ \mathop{\max}_{t \in [T]} \frac{ \E \left[   |w_t| \right]}{C_{1,t}}  \cdot \sum\limits_{t=1}^{T}  g_{t-1}(X_t) \right] \\
              & \le^{(c)} & 2\E \left[\mathop{\max}\limits_{t \in [T]     } \frac{|w_t|}{C_{1,t}}  \cdot \sqrt{T \cdot \sum\limits_{t=1}^{T}  g^2_{t-1}(X_t)} \right]   + 2\E \left[ \mathop{\max}_{t \in [T]} \frac{ \E \left[   |w_t| \right]}{C_{1,t}}  \cdot \sqrt{T \cdot \sum\limits_{t=1}^{T}  g^2_{t-1}(X_t)} \right] \\
               & \le^{(d)} & 2\left(\E \left[\mathop{\max}\limits_{t \in [T]     } \frac{|w_t|}{C_{1,t}} \right] + \mathop{\max}\limits_{t \in [T]     } \E \left[\frac{|w_t|}{C_{1,t}} \right]\right) \cdot  \left( \sqrt{2R^2 \gamma_T + 2R^2\log(2) }  + D  \right)  \cdot  \sqrt{T \cdot 4\gamma_T} \\
               
              & = & 2 C_{3,T}  \cdot  \left( \sqrt{2R^2 \gamma_T+ 2R^2\log(2) }  + D  \right)  \cdot  \sqrt{T \cdot 4\gamma_T}\quad,
    
        \end{array}
    \end{equation}
    where step (a) uses Lemma~\ref{lemma: appro}, step (b) uses Holder's inequality, and step (c) uses Cauchy-Schwarz inequality. Step (d) uses (\ref{eq: TMPR}).
\end{proof}


\begin{proof}[Proof of Lemma~\ref{lemma: appro}]

Let  $r(t) := f^*(x_*)  - \E\left[ \tilde{f}_t(X_t)   \right] =  \E \left[f^*(x_*)  - \mathop{\max}\limits_{x \in \mathcal{X}}\tilde{f}_t(x)  \right] $. 
We construct an upper bound for $r(t)$, 
using the following two steps. 
\begin{enumerate}
    \item By introducing the activation function $(x)^+ := \max \{0,x\}$, we use Markov's inequality to show
 \begin{equation}
 \begin{array}{lll}
   r(t) &\le&  \frac{2}{C_{1,t}} \cdot \E \left[ \left( \mathop{\max}\limits_{x \in \mathcal{X}}\tilde{f}_t(x)  - \E \left[\mathop{\max}\limits_{x \in \mathcal{X}}\tilde{f}_t(x)  \right] \right)^{+}\right] \quad.
      \end{array}
     \label{eq: markov}
 \end{equation}

 \item We use a ``ghost sample'' to show 
    \begin{equation}
    \begin{array}{lll}
      \E \left[ \left( \mathop{\max}\limits_{x \in \mathcal{X}}\tilde{f}_t(x)  - \E \left[\mathop{\max}\limits_{x \in \mathcal{X}}\tilde{f}_t(x)  \right] \right)^{+}\right] &\le &  \E \left[ |w_t| \cdot  g_{t-1}(X_t) \right] +  \E \left[  | w_t | \right] \cdot \E \left[   g_{t-1}(X_t) \right] \quad.
       \end{array}
       \label{eq: ghost}
    \end{equation}
\end{enumerate}

    Combining the above two steps
 gives
\begin{equation}
  f^*(x_*) \le \E \left[ \tilde{f}_{t}(X_t) \right]  +    \E \left[ \frac{2|w_t|}{C_{1,t}}   \cdot  g_{t-1}(X_t) \right] +  \E \left[ \frac{2 |w_t|}{C_{1,t}}  \right] \cdot \E \left[   g_{t-1}(X_t) \right] \quad,
\end{equation}
which will conclude the proof.

    \paragraph{Proof of (\ref{eq: markov}):} If $r(t) <0$, inequality (\ref{eq: markov}) trivially holds. If $r(t) >0$, we use Markov's inequality and have
    \begin{equation}
    \begin{array}{lll}
  \mathbb{P} \left\{  \left( \mathop{\max}\limits_{x \in \mathcal{X}}\tilde{f}_t(x)  - \E \left[\mathop{\max}\limits_{x \in \mathcal{X}}\tilde{f}_t(x)  \right] \right)^{+} \ge r(t) \right\}  &\le  & \E \left[ \left( \mathop{\max}\limits_{x \in \mathcal{X}}\tilde{f}_t(x)  - \E \left[\mathop{\max}\limits_{x \in \mathcal{X}}\tilde{f}_t(x)  \right] \right)^{+}\right]/r(t)\quad,
 \end{array}
 \end{equation}
 which gives
 \begin{equation}
     \begin{array}{lllll}
            r(t) & \le &\frac{\E \left[ \left( \mathop{\max}\limits_{x \in \mathcal{X}}\tilde{f}_t(x)  - \E \left[\mathop{\max}\limits_{x \in \mathcal{X}}\tilde{f}_t(x)  \right] \right)^{+}\right]}{\mathbb{P} \left\{  \left( \mathop{\max}\limits_{x \in \mathcal{X}}\tilde{f}_t(x)  - \E \left[\mathop{\max}\limits_{x \in \mathcal{X}}\tilde{f}_t(x)  \right] \right)^{+} \ge r(t) \right\} }
            & =& \frac{\E \left[ \left( \mathop{\max}\limits_{x \in \mathcal{X}}\tilde{f}_t(x)  - \E \left[\mathop{\max}\limits_{x \in \mathcal{X}}\tilde{f}_t(x)  \right] \right)^{+}\right]}{\mathbb{P} \left\{  \left( \mathop{\max}\limits_{x \in \mathcal{X}}\tilde{f}_t(x)  - \E \left[\mathop{\max}\limits_{x \in \mathcal{X}}\tilde{f}_t(x)  \right] \right)^{+} \ge f^*(x_*)  - \E \left[ \mathop{\max}\limits_{x \in \mathcal{X}}\tilde{f}_t(x)   \right]  \right\} }\\
                &&  & \le &\frac{\E \left[ \left( \mathop{\max}\limits_{x \in \mathcal{X}}\tilde{f}_t(x)  - \E \left[\mathop{\max}\limits_{x \in \mathcal{X}}\tilde{f}_t(x)  \right] \right)^{+}\right]}{\mathbb{P} \left\{  \mathop{\max}\limits_{x \in \mathcal{X}}\tilde{f}_t(x)  - \E \left[\mathop{\max}\limits_{x \in \mathcal{X}}\tilde{f}_t(x)  \right]  \ge f^*(x_*)  - \E \left[ \mathop{\max}\limits_{x \in \mathcal{X}}\tilde{f}_t(x)  \right]  \right\} }\\
        && & = &\frac{\E \left[ \left( \mathop{\max}\limits_{x \in \mathcal{X}}\tilde{f}_t(x)  - \E \left[\mathop{\max}\limits_{x \in \mathcal{X}}\tilde{f}_t(x)  \right] \right)^{+}\right]}{\mathbb{P} \left\{  \mathop{\max}\limits_{x \in \mathcal{X}}\tilde{f}_t(x)   \ge f^*(x_*)    \right\} }\\
        && & \le & \frac{2}{C_{1,t}} \cdot \E \left[ \left( \mathop{\max}\limits_{x \in \mathcal{X}}\tilde{f}_t(x)  - \E \left[\mathop{\max}\limits_{x \in \mathcal{X}}\tilde{f}_t(x)  \right] \right)^{+}\right]\quad,
            
         \end{array}
    \end{equation}
    where the last step uses Lemma~\ref{lemma: anti} below.
      \begin{lemma}
For any round $t \ge 1$, we have
$\mathbb{P} \left\{  \mathop{\max}\limits_{x \in \mathcal{X}}\tilde{f}_t(x)   \ge f^*(x_*)    \right\}  \ge 0.5 C_{1,t}$.
        \label{lemma: anti}
    \end{lemma}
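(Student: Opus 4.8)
The plan is to lower-bound the target probability by dropping the maximum to the single arm $x_*$ and then showing that the random exploration bonus is, with sufficient probability, large enough to render $\tilde f_t$ optimistic at $x_*$. Since $\max_{x \in \mathcal{X}} \tilde f_t(x) \ge \tilde f_t(x_*) = \hat f_{t-1}(x_*) + w_t\, g_{t-1}(x_*)$, it suffices to prove that $\mathbb{P}\{\hat f_{t-1}(x_*) + w_t\, g_{t-1}(x_*) \ge f^*(x_*)\} \ge 0.5\, C_{1,t}$, i.e.\ that the bonus $w_t\, g_{t-1}(x_*)$ covers the estimation gap $f^*(x_*) - \hat f_{t-1}(x_*)$ on an event of probability at least $0.5\,C_{1,t}$.

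The key observation is that $g_{t-1}$ was deliberately calibrated to the self-normalized bound of Theorem~\ref{concentration} at the constant failure level $\delta = 0.5$. Define the $\mathcal{F}_{t-1}$-measurable good event
\[
  G_{t-1} := \left\{ \left\lVert \overline{V}_{t-1}^{-1/2} \sum_{s=1}^{t-1} \varepsilon_s K(\cdot, X_s) \right\rVert \le \sqrt{2R^2 \log\bigl(2\sqrt{\det(\overline{V}_{t-1})}\bigr)} \right\},
\]
for which Theorem~\ref{concentration} with $\delta = 0.5$ yields $\mathbb{P}\{G_{t-1}\} \ge 0.5$. The pointwise estimate derived in (\ref{temp 101}) holds for any fixed arm, in particular $x_*$; substituting the noise bound that defines $G_{t-1}$ into it shows that, on $G_{t-1}$, we have $\lvert \hat f_{t-1}(x_*) - f^*(x_*)\rvert \le g_{t-1}(x_*)$, and hence $f^*(x_*) - \hat f_{t-1}(x_*) \le g_{t-1}(x_*)$.

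Next I would invoke the optimism condition (\ref{condition: exploration}). Since $g_{t-1}(x_*) \ge 0$, on the intersection $G_{t-1} \cap \{w_t \ge 1\}$ we get $w_t\, g_{t-1}(x_*) \ge g_{t-1}(x_*) \ge f^*(x_*) - \hat f_{t-1}(x_*)$, so $\tilde f_t(x_*) \ge f^*(x_*)$. Thus $\mathbb{P}\{\max_{x \in \mathcal{X}} \tilde f_t(x) \ge f^*(x_*)\} \ge \mathbb{P}\{G_{t-1} \cap \{w_t \ge 1\}\}$. Conditioning on $\mathcal{F}_{t-1}$ (which determines $G_{t-1}$) and using $\mathbb{P}\{w_t \ge 1 \mid \mathcal{F}_{t-1}\} = C_{1,t}$ from (\ref{condition: exploration}), the tower property gives $\mathbb{P}\{G_{t-1} \cap \{w_t \ge 1\}\} = \E[\bm{1}\{G_{t-1}\}\, C_{1,t}] = C_{1,t}\, \mathbb{P}\{G_{t-1}\} \ge 0.5\, C_{1,t}$, completing the argument.

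The step I expect to be the main (if modest) obstacle is the measurability/decoupling in this last display: one must be careful that the good event $G_{t-1}$ and the exploration draw $w_t$ interact correctly. The event $G_{t-1}$ depends on the noise and arms only through round $t-1$, so it is $\mathcal{F}_{t-1}$-measurable, while $w_t$ enters only through the conditional optimism probability $C_{1,t}$; it is this conditional factorization, rather than any anti-concentration property of $w_t$ itself, that drives the bound. Everything else is a direct substitution of the $\delta = 0.5$ concentration bound into the already-established pointwise estimate (\ref{temp 101}), together with the trivial inequality $w_t\, g_{t-1}(x_*) \ge g_{t-1}(x_*)$ valid whenever $w_t \ge 1$.
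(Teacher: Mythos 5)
Your proposal is correct and follows essentially the same route as the paper's proof: both define the good event $G_{t-1}$ from Theorem~\ref{concentration} with $\delta = 0.5$, show that on $G_{t-1}$ the gap $f^*(x_*) - \hat f_{t-1}(x_*)$ is at most $g_{t-1}(x_*)$, and then combine $\mathbb{P}\{G_{t-1}\} \ge 0.5$ with the optimism condition $\mathbb{P}\{w_t \ge 1\} = C_{1,t}$ using the independence of $w_t$ from $\mathcal{F}_{t-1}$. The only cosmetic difference is that you compute $\mathbb{P}\{G_{t-1} \cap \{w_t \ge 1\}\}$ directly via the tower property, while the paper chains conditional probabilities $\mathbb{P}\{G_{t-1}\}\cdot \mathbb{P}\{\cdot \mid G_{t-1}\}$; these are the same argument.
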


    \paragraph{Proof of (\ref{eq: ghost}):}
Let $\tilde{w}_t \sim P_{w,t}$ be an independent copy of $w_t$. We construct another random function $\tilde{\tilde{f}}_t(x) =  \hat{f}_{t-1}(x)  +  \tilde{w}_t \cdot   g_{t-1}(x)$.
We have $\E \left[\mathop{\max}\limits_{x \in \mathcal{X}}\tilde{f}_t(x)  \right]  =  \E \left[\E \left[\mathop{\max}\limits_{x \in \mathcal{X}}\tilde{f}_t(x)  \mid \mathcal{F}_{t-1} \right] \right] = \E \left[\E \left[\mathop{\max}\limits_{x \in \mathcal{X}}\tilde{\tilde{f}}_t(x) \mid \mathcal{F}_{t-1} \right] \right] = \E \left[\mathop{\max}\limits_{x \in \mathcal{X}}\tilde{\tilde{f}}_t(x)  \right] $ due to the fact that given $\mathcal{F}_{t-1}$, random functions $\tilde{f}_t $ and $\tilde{\tilde{f}}_t$ have the same distribution. 

\begingroup\allowdisplaybreaks
Now, we have 
\begin{equation}
    \begin{array}{lll}
     \E \left[ \left( \mathop{\max}\limits_{x \in \mathcal{X}}\tilde{f}_t(x)  - \E \left[\mathop{\max}\limits_{x \in \mathcal{X}}\tilde{f}_t(x)  \right] \right)^{+}\right] & = &  \E \left[ \left( \mathop{\max}\limits_{x \in \mathcal{X}}\tilde{f}_t(x)  - \E \left[\mathop{\max}\limits_{x \in \mathcal{X}}\tilde{\tilde{f}}_t(x)  \right] \right)^{+}\right] \\
      & = &   \E \left[ \left( \tilde{f}_t(X_t)  - \E \left[\mathop{\max}\limits_{x \in \mathcal{X}}\tilde{\tilde{f}}_t(x)  \right] \right)^{+}\right]   \\
      & = &   \E \left[ \left( \tilde{f}_t(x_t)  - \E \left[\mathop{\max}\limits_{x \in \mathcal{X}}\tilde{\tilde{f}}_t(x)\mid X_t, \tilde{f}_t  \right] \right)^{+}\right]    \\
      & = &  \E \left[ \left( \E \left[\tilde{f}_t(X_t)  -\mathop{\max}\limits_{x \in \mathcal{X}}\tilde{\tilde{f}}_t(x)\mid X_t, \tilde{f}_t \right] \right)^{+}\right]    \\
       & \le &   \E \left[ \left( \E \left[\tilde{f}_t(X_t)  -\tilde{\tilde{f}}_t(X_t)\mid X_t, \tilde{f}_t \right] \right)^{+}\right]    \\
         & \le & \E \left[ \left| \E \left[\tilde{f}_t(X_t)  -\tilde{\tilde{f}}_t(X_t)\mid X_t, \tilde{f}_t \right] \right| \right]    \\
              & \le & \E \left[  \E \left[\left| \tilde{f}_t(X_t)  -\tilde{\tilde{f}}_t(X_t)\right|\mid X_t, \tilde{f}_t \right]  \right]    \\
                  & = &  \E \left[  \left| \tilde{f}_t(X_t)  -\tilde{\tilde{f}}_t(X_t)\right|  \right]    \\
                     & = &  \E \left[  \left| \tilde{f}_t(X_t)  -\tilde{\tilde{f}}_t(X_t) - \hat{f}_{t-1}(X_t) + \hat{f}_{t-1}(X_t)\right|  \right]    \\
                      & \le &  \E \left[  \left| \tilde{f}_t(X_t)  - \hat{f}_{t-1}(X_t) \right|  \right]+ \E \left[  \left| \tilde{\tilde{f}}_t(X_t) - \hat{f}_{t-1}(X_t) \right|  \right]    \\
                       & \le &  \E \left[|w_t \cdot  g_{t-1}(X_t)| \right] +  \E \left[|\tilde{w}_t \cdot g_{t-1}(X_t) |\right]     \\

     & \le &  \E \left[|w_t| \cdot  g_{t-1}(X_t) \right] + \E \left[|w_t| \right]      \cdot \E \left[g_{t-1}(X_t) \right]     \quad.
    \end{array}
\end{equation}
\endgroup


\end{proof}

    \begin{proof}[Proof of Lemma~\ref{lemma: anti}]
Let $h := K(\cdot, x_*) $ be a fixed but (possibly infinite-dimensional) real-valued vector. 
      Define $G_{t-1}$ as the event that  
                $  \left\lVert  \overline{V}_{t-1}^{-1/2}\sum\limits_{s = 1}^{t-1}  \varepsilon_s   K(\cdot, X_s)   \right\rVert \le  \sqrt{2R^2 \log \left(2 \sqrt{\det\left( \overline{V}_{t-1}\right)}  \right)} $. Then, from Theorem~\ref{concentration}, for any $t \ge 1$, we have $\mathbb{P}\left\{G_{t-1}  \right\} \ge 0.5$. 

 We have
\begin{equation}
    \begin{array}{lll}
       \hat{f}_{t-1}(x_*)  - f^*(x_*) 

                   & = &  \left\langle  \overline{V}_{t-1}^{-1}  \sum\limits_{s = 1}^{t-1}  \varepsilon_s   K(\cdot, X_s),  h \right\rangle -  \left\langle    \overline{V}_{t-1}^{-1}  f^*,  h \right\rangle \quad.
              \end{array}
          \end{equation}
          If event $G_{t-1}$ holds, we have
          \begin{equation}
          \begin{array}{lll}
            \left\lvert \hat{f}_{t-1}(x_*)  - f^*(x_*) \right\rvert & \le & \left\lVert \overline{V}_{t-1}^{-1/2}h \right\rVert \cdot \left\lVert \overline{V}_{t-1}^{-1/2}  \sum\limits_{s = 1}^{t-1}  \varepsilon_s   K(\cdot, X_s) \right\rVert +  \left\lVert  h^{\top}\overline{V}_{t-1}^{-1} f^*   \right\rVert \\
           & \le & \left\lVert \overline{V}_{t-1}^{-1/2}h \right\rVert \cdot \sqrt{2R^2 \log \left(2 \sqrt{\det\left( \overline{V}_{t-1}\right)}  \right)}  + \left\lVert  \overline{V}_{t-1}^{-1/2}  f^* \right\rVert \\
            & \le & \left\lVert \overline{V}_{t-1}^{-1/2}h \right\rVert \cdot \left( \sqrt{2R^2 \log \left(2 \sqrt{\det\left( \overline{V}_{t-1}\right)}  \right)}  + \left\lVert f^* \right\rVert  \right) \\
             & \le & \left\lVert \overline{V}_{t-1}^{-1/2}h \right\rVert \cdot \left( \sqrt{2R^2 \log \left(2 \sqrt{\det\left( \overline{V}_{t-1}\right)}  \right)}  + D  \right) \\
             & = & g_{t-1}(x_*)\quad.
               \end{array}
          \end{equation}

Therefore,    we have
\begin{align*}
  \mathbb{P} \left\{  \mathop{\max}\limits_{x \in \mathcal{X}}\tilde{f}_t(x)   \ge f^*(x_*)    \right\}
 &\ge \mathbb{P} \left\{{G}_{t-1} \right\} \cdot \mathbb{P} \left\{  \mathop{\max}\limits_{x \in \mathcal{X}}\tilde{f}_t(x)   \ge f^*(x_*)  \mid  {G}_{t-1}  \right\} \\
  &\ge 0.5 \cdot \mathbb{P} \left\{  \tilde{f}_t(x_*)   \ge f^*(x_*)  \mid  {G}_{t-1}  \right\} \\
&= 0.5 \cdot \mathbb{P} \left\{  \tilde{f}_t(x_*) - \hat{f}_{t-1}(x_*)   \ge f^*(x_*) -  \hat{f}_{t-1}(x_*)   \mid  {G}_{t-1}  \right\} \\
  &= 0.5\cdot \mathbb{P} \left\{  w_t \cdot g_{t-1}(x_*)   \ge f^*(x_*) -  \hat{f}_{t-1}(x_*)   \mid  {G}_{t-1}  \right\} \\
    &\ge 0.5\cdot \mathbb{P} \left\{  w_t \cdot g_{t-1}(x_*)   \ge \left| f^*(x_*) -  \hat{f}_{t-1}(x_*) \right|   \mid  {G}_{t-1}  \right\} \\
  &\ge  0.5 \cdot \mathbb{P} \left\{  w_t \cdot g_{t-1}(x_*)   \ge   g_{t-1}(x_*)   \mid  {G}_{t-1}  \right\}  \\
&=  0.5\cdot \mathbb{P} \left\{  w_t    \ge  1 \mid  {G}_{t-1}  \right\}  \\
  &\ge 0.5 \cdot C_{1,t}\quad,
        \end{align*} 
        which concludes the proof. \end{proof}

        \section{Proof of Theorem~\ref{theorem: linar bandit bound}} \label{app: linar bandit bound}

    \thmlinearbandit*

       \subsection{Upper Bound Proof}  
       \begin{proof}
           Recall that the linear kernel $K(x, x') = x^{\top}x'$ has kernel complexity $\gamma_T \le  O (d \log(T))$. Plugging the upper bound of $\gamma_T$ in Theorem~\ref{theorem: expected regret} concludes the proof. \end{proof}

       \subsection{Lower Bound Proof}
       \begin{proof}
The main challenge for the lower bound proof is that all $x \in \mathcal{X}$ share the same $w_t$ to do exploration. Therefore, we cannot reuse the algorithm-dependent regret lower bound analysis from \citet{agrawalnear}. There are a few key steps for the lower bound proof.
\begin{enumerate}
    \item Construct a linear bandit optimization problem instance.
    \item Rewrite the learning algorithm.
    \item Lower bound the probability of pulling a sub-optimal arm when $t $ is large enough.    
\end{enumerate}

\paragraph{Construct a learning problem instance.}
        Let $\mathcal{X} = \left\{e_i: 1 \le i \le d \right\}$ be a standard basis of $\mathbb{R}^d$. 
        We construct 
$f^* = (\Delta, 0, \dotsc, 0)$, a $d$-dimensional vector, where $\Delta$ is in the order of $\frac{\sqrt{R^2 d^2} + D\sqrt{d}}{\sqrt{T}}$ with the details specified later. Note that 
 $ e_1 = (1, 0, \dotsc, 0)$ is the optimal arm and any $ e_i = (0, \dotsc,0, \underbrace{1}_{i^{th}}, 0,\dotsc, 0)$ is  sub-optimal.
We have $f^*(e_1) = \Delta$ and $f^*(e_i) = 0$ for all $i \ne 1$, i.e.,  $\Delta = f^*(e_1) - f^*(e_i)$ for all $i \ne 1$. 
We construct $\varepsilon_t \equiv 0$ for all $t$, i.e.,  $Y_t = f^*(X_t)$. 

\paragraph{Rewrite the learning algorithm.}
Let $Q_{i}(t)$ denote the total number of times that $e_i \in \mathcal{X}$ has been played by the end of round $t$.

Recall $\overline{V}_t = I_H + \sum\limits_{s=1}^t K(\cdot, X_s) K(\cdot, X_s)^{\top} = \sum\limits_{s=1}^t X_s X_s^{\top}$. This matrix is a diagonal matrix for our constructed problem instance. We have 
\begin{equation}
    \overline{V}_{t}  =
  \begin{bmatrix}
  1+ Q_{1}(t) & & \\
    & \ddots & \\
    & & 1+ Q_{d}(t)
  \end{bmatrix}_{d \times d}, \quad \overline{V}_{t}^{-1}  =
  \begin{bmatrix}
   \frac{1}{1+ Q_{1}(t)} & & \\
    & \ddots & \\
    & & \frac{1}{1+ Q_{d}(t)}
  \end{bmatrix}_{d \times d}\quad.
\end{equation}
The determinant of $\overline{V}_{t}$ is $\det\left( \overline{V}_{t} \right) = \prod_{i \in [d]} (1+ Q_i(t))$.

We rewrite the empirical function
\begin{equation}
\begin{array}{lll}
  \hat{f}_{t-1} &= &\overline{V}_{t-1}^{-1} \sum\limits_{s = 1}^{t-1} Y_s X_s \\
  
 & =& \overline{V}_{t-1}^{-1} \sum\limits_{s = 1}^{t-1}(f^{*}(X_s) + \varepsilon_s)X_s \\
  &=& \overline{V}_{t-1}^{-1} \left(\sum\limits_{s = 1}^{t-1}f^{*}(X_s)X_s \cdot \bm{1}\left\{X_s = e_1 \right\} + \sum\limits_{s = 1}^{t-1}f^{*}(X_s)X_s \cdot \bm{1}\left\{X_s \ne e_1 \right\} \right)\\
    &=& \overline{V}_{t-1}^{-1} f^{*}(e_1)e_1 \sum\limits_{s = 1}^{t-1}\bm{1}\left\{X_s = e_1 \right\} \\
  & = &   \begin{bmatrix}
   \frac{1}{1+ Q_{1}(t-1)} & & \\
    & \ddots & \\
    & & \frac{1}{1+ Q_{d}(t-1)}
  \end{bmatrix} \Delta \cdot Q_{1}(t-1) \cdot  e_1 \\
  && \\
  &= & \frac{\Delta \cdot Q_{1}(t-1)}{1+ Q_{1}(t-1)} e_1 \quad.
   
    \end{array}
\end{equation}

Recall $g_{t-1}(x) = \left(\sqrt{2R^2 \log \left(2 \sqrt{ \det \left(\overline{V}_{t-1} \right)}  \right)} + D\right)\cdot \left\lVert  K(\cdot, x)^{\top}\overline{V}_{t-1}^{-1/2}  \right\rVert$.

Let $D_t := \sqrt{2R^2 \log \left(2 \sqrt{ \det \left(\overline{V}_{t-1} \right)}  \right)} = \sqrt{2R^2 \log  \left(2 \sqrt{\prod_{i \in [d]} (1+ Q_{i}(t))} \right)} $.

Then, we have
$g_{t-1}(e_i) = \left(D_{t-1} + D \right)
\cdot \left\lVert  e_i^{\top}\overline{V}_{t-1}^{-1/2}  \right\rVert = \left(D_{t-1} + D \right)  \cdot \sqrt{\frac{1}{1+ Q_{i}(t-1)}}$, which gives  

\begin{equation}
\begin{array}{lll}
\tilde{f}_t(e_1) &= & \frac{\Delta \cdot Q_{1}(t-1)}{1+ Q_{1}(t-1)}  + w_{t} \cdot \left(D_{t-1} + D \right)  \cdot \sqrt{\frac{1}{1+ Q_{1}(t-1)}}\quad, \\
&& \\
\tilde{f}_t(e_i) &= & 0  + w_{t}\cdot \left(D_{t-1}+ D \right)  \cdot \sqrt{\frac{1}{1+ Q_{i}(t-1)}}, \quad \forall i \ne 1\quad. 
\end{array}
\end{equation}


\paragraph{Lower bound the probability of pulling sub-optimal arms.}  Fix $\beta = \frac{1}{6}$ and $\alpha = \frac{5.1}{6}$. We have
\begin{equation}
\begin{array}{lll}
    \mathcal{R}(T)  =  \sum\limits_{t = 1}^{\beta T} \E \left[\bm{1} \left\{X_t \ne e_1 \right\} \right] \Delta    + \sum\limits_{t = \beta T +1}^{ T} \E \left[\bm{1} \left\{X_t \ne e_1 \right\} \right] \Delta 
    \ge  \sum\limits_{t = \beta T +1}^{ T} \E \left[\bm{1} \left\{X_t \ne e_1 \right\} \right] \Delta  \quad.
\end{array}
\end{equation}
Now, we  construct a lower bound for the regret from round $t = \beta T +1$ to round $T$.
Define $B_t^* := \left\{Q_1(t) > t- \beta \cdot T \right\}$ as the event that by the end of round $t$, the optimal arm has been played at least $t - \beta \cdot T$ times. 
We lower bound the total regret from round $t= \beta T + 1$ to the end of round $T$ by analyzing two cases separately based on the events of $B_t^*$ for all rounds $t \in [\beta T +1, T]$: 
\textbf{events $B_t^*$ for all $t \in [\beta T + 1, T]$ are true;}
 \textbf{there exists $t \in \left[ \beta T + 1, T\right]$ such that $B_t^*$ is false.}

\paragraph{Events $B_t^*$ for all $t \in [\beta T + 1, T]$ are true.}

We further consider the total regret from round $ t  = \alpha T+1 $ to round $T$ (the blue part in Figure~\ref{lower bound}). Note that the total regret from round $1$ to round $T$ is lower bounded by the total regret from round $\alpha T+1 $ to round $T$.

 Before computing the probability of playing a sub-optimal arm in any round $t \ge \alpha T+1$, we construct the following bounds on the number of pulls first.

 If $B_{t-1}^*$ is true, we have
 \begin{equation}
     \begin{array}{l}
          Q_1(t-1) > t-1-\beta T \ge \alpha T+1 - 1 -\beta T = (\alpha - \beta)T\quad, 
     \end{array}
          \label{temp 1}
 \end{equation}
 which lower bounds the total number of pulls of the optimal arm by the end of round $t-1$. 
 
 The above lower bound also implies the following upper bounds on the number of pulls of the sub-optimal arms. We have
 \begin{equation}
     \begin{array}{l}
           \sum\limits_{i \ne 1}Q_i(t-1) = t -1 - Q_1(t-1) \le t-1 -(t-1 -\beta T) = \beta T\quad,      
     \end{array}
     \label{ttt3}
 \end{equation}
 and
 \begin{equation}
     \begin{array}{l}
            Q_i(t-1) \le \beta T, \quad \forall i \ne 1\quad. 
     \end{array}
     \label{temp 3}
 \end{equation}
From (\ref{temp 1}) and (\ref{temp 3}), for any $i \ne 1$, we have
\begin{equation}
    \sqrt{\frac{Q_i(t-1) +1}{Q_1(t-1)+1}} \le \sqrt{\frac{\beta T +1 }{(\alpha - \beta)T +1}} \le  \frac{1}{2} \quad \Rightarrow 1 - \sqrt{\frac{Q_i(t-1) +1}{Q_1(t-1)+1}} \ge \frac{1}{2}.
    \label{ttt2}
\end{equation}

When $T > \frac{e^d}{(\alpha-\beta)}$ is large enough, we  have
\[ \log  \left(2 \sqrt{\prod_{i \in [d]} (1+ Q_{i}(t-1))} \right)  \ge 0.5 \log  \left( (1+ Q_{1}(t-1)) \right) \ge 0.5 \log ((\alpha-\beta)T) > 0.5d \quad,\] which gives
\begin{equation}
   D_{t-1} = \sqrt{2R^2 \log  \left(2 \sqrt{\prod_{i \in [d]} (1+ Q_{i}(t-1))} \right)} > R \sqrt{0.5d}\quad.
   \label{ttt}
\end{equation}


Let $ \Delta =\frac{(R \sqrt{0.5d} + D)}{2 \sqrt{1+ \beta T/(d-1)}} $. Now, we are ready to lower bound the 
probability of pulling a sub-optimal arm in round $t \ge \alpha T + 1$ given $B_{t-1}^{*}$ is true. We have
\begin{equation}
    \begin{array}{ll}
   &  \mathbb{P} \left\{\exists i \ne 1:  X_t = e_i \mid B_{t-1}^*\right\} \\
  \ge    &   \mathbb{P} \left\{\exists i \ne 1:  \tilde{f}_t(e_i) > \tilde{f}_t(e_1) \mid B_{t-1}^*\right\}  \\
      = &   \mathbb{P} \left\{\exists i \ne 1:  w_{t}  \cdot (D_{t-1} + D)  \cdot \sqrt{\frac{1}{1+ Q_{i}(t-1)}} > \frac{\Delta \cdot Q_{1}(t-1)}{1+ Q_{1}(t-1)}  + w_{t} \cdot (D_{t-1} + D) \cdot\sqrt{\frac{1}{1+ Q_{1}(t-1)}}  \mid B_{t-1}^*\right\}  \\
      & \\
      >^{(a)} &   \mathbb{P} \left\{\exists i \ne 1:  w_{t}  \cdot (D_{t-1} + D)  \cdot  \left( \sqrt{\frac{1}{1+ Q_{i}(t-1)}} -  \sqrt{\frac{1}{1+ Q_{1}(t-1)}}\right) \ge \Delta     \mid B_{t-1}^*\right\}  \\

         >^{(b)} &   \mathbb{P} \left\{\exists i \ne 1:  w_{t}  \cdot (R \sqrt{0.5d} + D)  \cdot  \left( \sqrt{\frac{1}{1+ Q_{i}(t-1)}} -  \sqrt{\frac{1}{1+ Q_{1}(t-1)}}\right) \ge \Delta     \mid B_{t-1}^*\right\}  \\
           = &   \mathbb{P} \left\{\exists i \ne 1:  w_{t}  \cdot (R \sqrt{0.5d} + D)  \cdot  \left( 1 -  \sqrt{\frac{1+ Q_{i}(t-1)}{1+ Q_{1}(t-1)}}\right) \ge \Delta \sqrt{1+ Q_{i}(t-1)}    \mid B_{t-1}^*\right\}  \\
           \ge^{(c)} &   \mathbb{P} \left\{\exists i \ne 1:  w_{t}  \cdot (R \sqrt{0.5d} + D)  \cdot  \frac{1}{2}\ge \Delta \sqrt{1+ Q_{i}(t-1)}    \mid B_{t-1}^*\right\}  \\
           
           = &  1- \mathbb{P} \left\{\forall i \ne 1:  w_{t}  \cdot (R \sqrt{0.5d} + D)  \cdot  \frac{1}{2} < \Delta \sqrt{1+ Q_{i}(t-1)}    \mid B_{t-1}^*\right\}  \\
           
            \ge^{(d)} & 1- \mathop{\max}\limits_{z_2, \dotsc, z_d: \sum\limits_{i=2}^d z_i \le \beta T }\mathbb{P} \left\{ \forall i \ne 1: w_{t}  \cdot (R \sqrt{0.5d} + D)  \cdot  \frac{1}{2} < \Delta \sqrt{1+ z_i}   \right\}  \\
           =^{(e)} & 1- \mathbb{P} \left\{ w_{t}  \cdot (R \sqrt{0.5d} + D)  \cdot  \frac{1}{2} < \Delta \sqrt{1+ \beta T/(d-1)}    \right\}  \\
           = & 1- \mathbb{P} \left\{ w_{t}    < \frac{2\Delta \sqrt{1+ \beta T/(d-1)}}{(R \sqrt{0.5d} + D)}    \right\}  \\
             = &  \mathbb{P} \left\{ w_{t}    \ge 1   \right\}  \\
             = & C_{1,t}\quad.
    \end{array}
\end{equation}
The total regret from round $\alpha T +1$ to round $T$ is lower bounded by $(1-\alpha)T \cdot \Delta \cdot \mathop{\min}_{t \in [T]}C_{1,t} = \Omega \left( Rd\sqrt{T} + D\sqrt{dT}\right)$, which lower bounds the regret from round $1$ to round $T$. Now, we provide explanation for the key steps above. Step (a) uses the fact that $\frac{\Delta Q_1(t-1)}{1+Q_1{t-1}} < \Delta$, step (b) uses the lower bound constructed in (\ref{ttt}), and step (c) uses the lower bound constructed in (\ref{ttt2}).

For steps (d) and (e), we construct an auxiliary optimization problem with the constraint shown in (\ref{ttt3}). We have 
\begin{equation}
    \begin{array}{ll}
         \mathop{\max}\limits_{z_2, \dotsc, z_d}  & \mathbb{P} \left\{\forall i \ne 1:  w_{t}  \cdot (R \sqrt{0.5d} + D)  \cdot  \frac{1}{2} < \Delta \sqrt{1 + z_i}   \right\} , \\

        & \\

         s.t. &  \sum_{i = 2}^d z_i \le \beta T. 
    \end{array}
\end{equation}
It is not hard to verify that $z_i = \frac{\beta T}{d-1}$ for all $i \ne 1$ is the optimal solution and step (e) uses this fact.

  \paragraph{There exists $t \in \left[ \beta T + 1, T\right]$ such that $B_t^*$ is false.} Without loss of generality, let us say $t_0$ is the first round such that $B_{t_0}^*$ is false, i.e., we have $ Q_1(t_0) \le t_0- \beta \cdot T$. Then, we know the total number of times that the sub-optimal arms have been played by the end of round $t_0$ is at least
\begin{equation}
    \sum\limits_{i \ne 1} Q_i(t_0) \ge t_0 - (t_0- \beta \cdot T) = \beta T \quad.
\end{equation}

The total regret  from round $1$ to round $t_0$ is at least $\Delta \beta T = \Omega \left( Rd\sqrt{T} + D\sqrt{dT}\right)$, which  lower bounds  
the total regret from round $1$ to round $T$ (the pink part in Figure~\ref{lower bound}). 
           \begin{figure}[!t]
    \begin{center}
    \includegraphics[trim=0 180 0 180,clip,width=\textwidth]{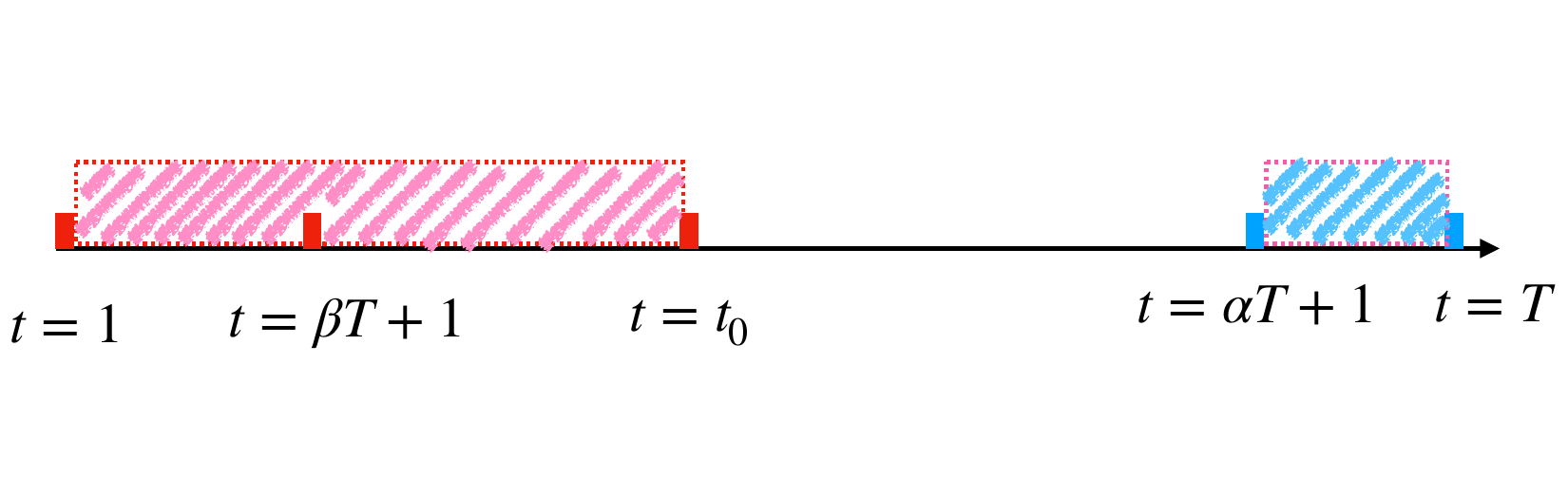}
    \caption{For the case where events $B_t^*$ for all $t \in [\beta T + 1, T]$ are true, we lower bound the regret for all the rounds in the blue color; for the case where there exists $t \in \left[ \beta T + 1, T\right]$ such that $B_t^*$ is false, we lower bound the regret for all the rounds in the pink color.}
    \label{lower bound}
    \end{center}
\end{figure}\end{proof}

        \section{Proof of Theorem~\ref{coro: SG}} \label{app: SG}
        \thmsubgauss*
        \begin{proof}[Proof of Theorem~\ref{coro: SG}]
            
Recall $\E \left[\mathop{\max}\limits_{s \in [t]}|w_{s}| \right] = C_{2,t} ;  \left(\E \left[\mathop{\max}\limits_{s \in [t]     } \frac{|w_s|}{C_{1,s}} \right] + \mathop{\max}\limits_{s \in [t]     } \E \left[\frac{|w_s|}{C_{1,s}} \right]\right) = C_{3,t} $.

 Now,   we compute upper bounds for $C_{2,T}$ and $C_{3,T}$ based on Fact~\ref{fact: maximal}, the maximal inequality of multiple sub-Gaussian random variables. We have
    \begin{equation}
        \begin{array}{l}
          C_{2,T} 
      =   \E_{w_1 \sim P_{w,1}, \dotsc, w_T \sim P_{w,T}} \left[\mathop{\max}\limits_{t \in [T]}|w_{t}| \right] \le \mathop{\max}\limits_{t \in [T]} \sigma_t  \cdot \sqrt{2 \log(2T)}
       
        \end{array}
    \end{equation}
   and 
    \begin{equation}
        \begin{array}{lll}
             C_{3,T} & = &  \E \left[\mathop{\max}\limits_{t \in [T]     } \frac{|w_t|}{C_{1,t}} \right] + \mathop{\max}\limits_{t \in [T]     } \E \left[\frac{|w_t|}{C_{1,t}} \right] \\ 
             & \le & \frac{1}{\mathop{\min}\limits_{t \in [T]}C_{1,t}} \cdot \left( \E \left[\mathop{\max}\limits_{t \in [T]     } |w_t| \right] + \mathop{\max}\limits_{t \in [T]     } \E \left[|w_t| \right] \right) \\
              & \le & \frac{1}{\mathop{\min}\limits_{t \in [T]}C_{1,t}} \cdot \left( \mathop{\max}\limits_{t \in [T]} \sigma_t  \cdot \sqrt{2 \log(2T)} +  \mathop{\max}\limits_{t \in [T]} \sigma_t  \cdot \sqrt{2 \log(2)}  \right) \\
              & = &   O \left( \frac{1}{\mathop{\min}\limits_{t \in [T]}C_{1,t}} \cdot \log(T) \cdot \mathop{\max}\limits_{t \in [T]} \sigma_t  \right)\quad.
        \end{array}
    \end{equation}
    Plugging in the upper bounds for $C_{2,T}$ and $C_{3,T}$ into Theorem~\ref{theorem: expected regret} concludes the proof.  \end{proof}

        \section{Proof of Theorem~\ref{coro: mixture}} \label{app: mixture}
    \thmmixture*    
    \begin{proof}[Proof of Theorem~\ref{coro: mixture}] We first compute a lower bound for $C_{1,t}$ and have $C_{1,t} = \mathbb{P}_{w_t \sim P_{w,t}} \left\{w \ge 1 \right\}  = \rho_t + (1-\rho_t) \cdot \mathbb{P}_{w_t \sim \text{sub-Gaussian}(\sigma_t^2) } \left\{w_t \ge 1 \right\} \ge \rho_t$. Let $[B] \subseteq [T]$ be a random subset of $[T]$ such that in each round $t \in [B]$, we sample $w_t \sim \Ber(1)$.
    Now, we compute an upper bound for $C_{2,T}$. We have
\begin{equation}
    \begin{array}{lll}
       C_{2,T} &
      = &  \E_{w_1 \sim P_{w,1}, \dotsc, w_T \sim P_{w,T}} \left[\mathop{\max}\limits_{t \in [T]}|w_{t}| \right]  \\
      & = & \E \left[\E\left[\mathop{\max}\limits_{t \in [T]}|w_{t}| \mid [B]\right]  \right] \\
        & \le^{(a)} & \E \left[\E\left[\mathop{\max}\limits_{t \in [B]}|w_{t}| + \mathop{\max}\limits_{t \in [T] \setminus [B]}|w_{t}| \mid [B]\right]  \right] \\
         & \le & \E \left[\E\left[1 + \mathop{\max}\limits_{t \in [T] }|w_{t}| \mid [B]\right]  \right] \\
         & = & 1+ \E\left[\mathop{\max}\limits_{t \in [T] }|w_{t}|   \right] \\
 &  \le^{(b)} &  1+  \mathop{\max}\limits_{t \in [T]} \sigma_t  \cdot \sqrt{2 \log(2T)} \quad,    
    \end{array}
\end{equation}
where step (a) uses $\mathop{\max}\limits_{t \in [T]}|w_{t}| \le \mathop{\max}\limits_{t \in [B]}|w_{t}| + \mathop{\max}\limits_{t \in [T]\setminus[B]}|w_{t}|$ and step (b) uses Fact~\ref{fact: maximal}.

We have
\begin{equation}
    \begin{array}{lll}
       C_{3,T} & = &  \left(\E \left[\mathop{\max}\limits_{t \in [T]     } \frac{|w_t|}{C_{1,t}} \right] + \mathop{\max}\limits_{t \in [T]     } \E \left[\frac{|w_t|}{C_{1,t}} \right]\right) \\
       & \le &  \left(\E \left[\mathop{\max}\limits_{t \in [T]     } \frac{|w_t|}{\rho_t} \right] + \mathop{\max}\limits_{t \in [T]     } \E \left[\frac{|w_t|}{\rho_t} \right]\right)  \\
       & \le & \frac{1}{\mathop{\min}\limits_{t \in [T]} \rho_t} \cdot C_{2,T} + \frac{1}{\mathop{\min}\limits_{t \in [T]} \rho_t}  \cdot \mathop{\max}\limits_{t \in [T]} \sigma_t \sqrt{2 \log 2} \\
       & \le & \frac{1}{\mathop{\min}\limits_{t \in [T]} \rho_t}  \left(1+  \mathop{\max}\limits_{t \in [T]} \sigma_t  \cdot \left(\sqrt{2 \log(2T)} + \sqrt{2 \log 2} \right) \right)\quad.
    \end{array}
\end{equation}
   Plugging in the upper bounds for $C_{2,T}$ and $C_{3,T}$ into Theorem~\ref{theorem: expected regret} concludes the proof. \end{proof}

\end{document}